\newcommand{\nb}[0]{\nabla}
\newcommand{\al}[0]{\alpha}
\title{Improved Analysis of Score-based Generative Modeling: User-Friendly Bounds under Minimal Smoothness Assumptions}
\author{Hongrui Chen\thanks{Peking University, \href{mailto:hongrui_chen@pku.edu.cn}{hongrui\_chen@pku.edu.cn} }
\quad
Holden Lee\thanks{Johns Hopkins University, \href{hlee283@jhu.edu}{hlee283@jhu.edu}}
\quad
Jianfeng Lu\thanks{Duke University, \href{jianfeng@math.duke.edu}{jianfeng@math.duke.edu}}
}
\date{\today}
\begin{document}

\maketitle
\begin{abstract}
We give an improved theoretical analysis of score-based generative modeling. 
Under a score estimate with small $L^2$ error (averaged across timesteps), 
we provide efficient convergence guarantees 
for any data distribution with second-order moment, by either employing early stopping or assuming smoothness condition on the score function of the data distribution. Our result does not rely on any log-concavity or functional inequality assumption and has a logarithmic dependence on the smoothness. In particular, we show that under only a finite second moment condition, approximating the following in reverse KL divergence in $\epsilon$-accuracy can be done in $\tilde O\left(\frac{d \log (1/\delta)}{\epsilon}\right)$ steps: 1) the variance-$\delta$ Gaussian perturbation of any data distribution; 2) data distributions with $1/\delta$-smooth score functions. Our analysis also provides a quantitative comparison between different discrete approximations and may guide the choice of discretization points in practice.
\end{abstract}

\section{Introduction}
Generative modeling is one of the central tasks in machine learning, which aims to learn a probability distribution from data and generate data from the learned distribution. Score-based generative modeling (SGM) has achieved state-of-art performance in data generation tasks \cite{SGMsong,SGMSDEsong,Song2021MaximumLT,Dhariwal2021DiffusionMB}, surpassing other models like generative adversarial networks (GAN) \cite{Goodfellow2014GenerativeAN}, normalizing flows \cite{JimenezRezende2015VariationalIW}, variational autoencoders \cite{Kingma2014AutoEncodingVB}, and energy-based models \cite{Zhao2016EnergybasedGA}. Due to the impressive sample quality, SGM has great potential in various applications, including computer vision \cite{Dhariwal2021DiffusionMB,Rombach2021HighResolutionIS}, natural language processing \cite{Austin2021StructuredDD}, inverse problems \cite{Song2022SolvingIP,Chung2021ComeCloserDiffuseFasterAC}, molecular graph modeling \cite{Shi2021LearningGF,Gnaneshwar2022ScoreBasedGM}, reinforcement learning \cite{Wang2022DiffusionPA}, and solving high-dimensional PDEs \cite{Boffi2022ProbabilityFS}.

The key idea of SGM is to use a forward process to diffuse the data distribution to some prior (often the standard Gaussian), and learn a backward process to transform the prior to the data distribution by estimating the score functions of the forward diffusion process. Such a procedure provides an expressive and efficient way to model high-dimensional distributions for two reasons: 1) It is easy to construct a forward process that converges fast to the Gaussian, no matter how complex the data distribution is. For example, the Ornstein-Uhlenbeck (OU) process has stationary distribution equal to the standard Gaussian and converges rapidly. 
2) Several scalable score matching methods such as denoising score matching \cite{Vincent2011ACB} and sliced score matching \cite{Song2019SlicedSM} allow us to learn the score function 
for use by the backward process.

While SGM has achieved great success in practice, theoretical understanding of the power of SGM is far from complete. Recent works \cite{convergencescore2, SamplingEasy} established that when an accurate score estimator is given, SGM can sample from general distributions with polynomial complexity and without requiring structural assumptions such as log-concavity or functional inequalities. (By \emph{polynomial complexity} we mean that the running time is polynomial and the final error depends polynomially on the score estimation error and other parameters.) This is surprising in the sampling context, as it implies a sharp contrast between SGM and 
sampling dynamics with gradient flow structure (such as Langevin dynamics), where convergence rates depend crucially on the structure of the data distribution. In this paper, we further establish the effectiveness of SGM by showing that convergence with reasonable rates requires very weak smoothness conditions. Indeed, we obtain a logarithmic dependence on the smoothness, or no dependence when comparing against a slightly perturbed data distribution.

\subsection{Background and Our Setting}

\paragraph*{General Framework.}
Let $P$ be the data distribution on $\RR^d$. Given data $\{x_i\}_{i=1}^n$ sampled from the data distribution $P$, the first step of SGM involves gradually transforming the data distribution into white noise by a forward SDE:
\begin{align}
    \d x_{t} = f(x_t,t) \d t     + g(t) \d w_t,\, x_0 \sim P,\,0\leq t \leq T.  \label{forward}
\end{align}
We use $p_t(x)$ to denote the density of $x_t$. In particular, $p_T$ is close to the white noise distribution $\cN(0,I_d)$. Then $x_t$ also satisfies the reverse SDE 
\begin{align}
    \d {x}_t = \left(f({x}_t, t) - g(t)^2 \nabla \log p_{t}({x}_t) \right)\d t + g(t) \d \tilde{w}_t\label{backward},
\end{align}
where $\tilde{w}_t$ is backward Brownian motion \cite{Anderson1982ReversetimeDE}. For convenience, we rewrite the reverse SDE \eqref{backward} in a forward version by switching time direction $t \rightarrow T -t$:
\begin{align}
     \d \tilde{x}_t = \left(-f(\tilde{x}_t, T-t) + g(T-t)^2 \nabla \log p_{T-t}(\tilde{x}_t) \right)\d t 
     + g(T-t) \d {w}_t  \label{backward2},
\end{align}
where $w_t$ is the usual (forward) Brownian motion. 
The process $(\tilde{x}_t)_{0\leq t \leq T}$ transforms noise into samples from $P$, which accomplishes the goal of generative modeling. 

However, we cannot directly simulate \eqref{backward2} since the score function $\nabla \log p_t$ is not available. Thus we learn the score function $\nabla \log p_t$ from the noisy data. First, we parameterize the score function within a function class such as that of neural networks, $s_\theta(x,t)$. Then we optimize one of the score-matching objectives (denosing score matching \cite{Vincent2011ACB} is often used; see appendix \ref{DSM} for details), from which we obtain a score estimator $s_\theta$ such that the $L^2$ score estimation error 
\begin{align*}
    \EE_{p_t} \|s_\theta(x,t) - \nabla \log p_t(x) \|^2
\end{align*}
is small. Using the estimated score, we can generate samples from an approximation of the reverse SDE starting from the prior distribution: 
\begin{multline}
    \d y_t = \left(-f(y_t,T-t) + g(T-t)^2 s_\theta(y_t,T-t) \right)\d t + g(T-t) \d w_t,\,y_0 \sim p_{\textup{prior}},\, 0\leq t \leq T. \label{continuous}
\end{multline}

\paragraph*{The Choice of Forward Process.}
We focus on the case $f(x,t) = -\frac{1}{2}x,\,g(t) \equiv 1$. The choice of $f(x,t)$ matches the choice in the original paper \cite{SGMSDEsong}, though our analysis may be adapted for some other choices of drift terms; the choice of constant variance function does not cause any loss of generality since the changing the variance function is equivalent to rescaling time (when $f$ does not depend on $t$). In this case, the forward process becomes the Ornstein-Uhlenbeck process, which has an explicit conditional density: 
$$ x_t|x_0  \sim \cN \left(e^{-\frac{1}{2}t}x_0, \left(1-e^{-t}\right)I_d \right).$$
Moreover, the Ornstein-Uhlenbeck process converges exponentially to the standard Gaussian distribution: 
\begin{align*}
    \mathrm{KL}(p_t \| \cN(0,1)) \leq e^{-t} \mathrm{KL}(p_0 \| \cN(0,1)).
\end{align*}


\paragraph*{Time Discretization.}
In practice, we need to use a discrete-time approximation for the sampling dynamics \eqref{continuous}. Let $ \delta = t_0 \leq t_1 \leq \cdots \leq t_N = T$ be the discretization points, where $\delta = 0$ for the normal setting and $\delta>0$ for the early-stopping setting.  For the $k$-th discretization step ($1\leq k \leq N$), we denote $h_{k} := t_k - t_{k-1}$ as the step size. We will compare different choices of discretization points and identify the optimal choice in different settings. 

Let $t_k' = T-t_{N-k}$ be the corresponding discretization points in the reverse SDE. 
We consider two types of discretization schemes, which are widely used in existing work.
\begin{itemize}
\item 
The Euler-Maruyama scheme:
\begin{align}
\mathrm{d} \hat{y}_t=\left[\frac{1}{2} \hat{y}_{t_k^{\prime}}+s_\theta\left(\hat{y}_{t_k^{\prime}}, T-t_k^{\prime}\right)\right] \mathrm{d} t+\mathrm{d} w_t, t \in\left[t_k^{\prime}, t_{k+1}^{\prime}\right], \label{EM-discrete}
\end{align}
for $k=0,1, \ldots, N-1$.
\item The expontential integrator scheme \cite{Song2021DenoisingDI,Zhang2022FastSO}: by using the semi-linear structure of \eqref{backward}, we discretize only in the nonlinear term and retain the continuous dynamics arising from the linear term:
   \begin{align}    \d \hat{y}_t = \left[\frac{1}{2}\hat{y}_t +  s_\theta(\hat{y}_{t_k'}, T-t_k')\right] \d t + \d w_t,\, t \in [t_k', t_{k+1}']  \label{semi-discrete}
   \end{align}
   for $k=0,\ldots,N-1$,
   which is solved explicitly by
   \begin{align*}
  & \hat{y}_{t_{k+1}'} = e^{\frac{1}{2}\left(t_{k+1}'  - t_k'
    \right)}
    \hat{y}_{t_k'}
   + 
   2
   \left(e^{\frac{1}{2}\left(t_{k+1}'  - t_k'
    \right)}-1\right)
    s_\theta(\hat{y}_{t_k'},T-t_k') 
    + 
    \sqrt{e^{t_{k+1}'  - t_k'
    }-1}\cdot \eta_k,
   \end{align*}
 where $ \eta_k\sim \mathcal N(0,I_d)$.
\end{itemize}

\subsection{Related Work}
We highlight two recent papers \cite{SamplingEasy,convergencescore2}. Both papers provide convergence guarantees with polynomial complexity without relying on any structural assumptions on the data distribution such as log-concavity or a functional inequality. In particular, the analysis of \cite{SamplingEasy} is based on the Girsanov change of measure framework and the authors consider the following two settings: 1) The score functions in the whole trajectory of the forward process satisfy the Lipschitz condition with a uniform Lipschitz constant. 2) The data distribution has bounded support. Although the smoothness condition on the forward process seems mild, it may be hard to check whether the uniform bound for the Lipschitz constants scales polynomially w.r.t. the dimension $d$. In fact, this is a property of the whole process, related to tail bounds of the data distribution. The work \cite{convergencescore2} alternatively
uses the idea of excluding bad sets in order to reduce to the setting of an $L^\infty$-accurate score estimator. This results in a worse dependence on the problem parameters; however, they do relax the smoothness condition on the whole trajectory to one on only the data distribution, and the bounded support assumption to sufficient tail decay.

Many other works have provided convergence analyses, but do not achieve polynomial complexity except in restricted settings, for example relying on functional inequalities (thus precluding multi-modal distributions) \cite{Block2020GenerativeMW,convergence-score,inexactLangevin}, manifold hypotheses \cite{Bortoli2022ConvergenceOD}, or $L^\infty$-accurate score estimates \cite{Bortoli2021DiffusionSB}.
In the setting where only an $L^2$-accurate score estimate of the data distribution is given, \cite{statistic-efficiency} 
give a statistical lower bound which 
shows it is in general impossible to accurately sample the distribution. 
This highlights the fact that having score estimates for multiple distributions---e.g., the data distribution with different amounts of noise added---is necessary for efficient sampling; this is done in practice and in our analysis. In a different direction, SGM is also related to recent work on algorithmic stochastic localization \cite{alaoui2022sampling}, in which for the spin glass models under consideration, the score function (i.e., the posterior mean) can be accurately estimated using approximate message passing. 

\subsection{Our Contributions}
In this paper, we quantitatively show that an $L^2$-accurate score estimator is enough to guarantee that the sampling dynamics \eqref{EM-discrete}, \eqref{semi-discrete} result in a distribution close to the data distribution in various regimes. Our results combine the advantages of \cite{SamplingEasy,convergencescore2}: under weak assumptions on the data distribution and the score estimator, we provide a concise analysis and refined guarantees for the convergence of SGM under several settings, described below and summarized in Table~\ref{table}.

\begin{table*}[h!]
\caption{Suppose $p_0$ has bounded 2nd moment $M_2$ and average $L^2$ score error is at most $\epsilon_0^2$. Guarantees for DDPM hold under the following smoothness assumptions, listed in order of decreasing strength. Note the 2nd bound also holds under the 1st assumption, but trades off dependence on $d$ and $L$. \cite{SamplingEasy} obtain TV guarantees, which are weaker by Pinsker's inequality. 
}
\label{table}
\begin{center}
\begin{small}
\begin{tabular}{l|c|c|c}
Assumption & Error guarantee & Steps to get $\tilde{O}(\epsilon_0^2)$ error & Theorem \\
\hline
$\forall t,\,\nabla \log p_t$ $L$-Lipschitz    & $\mathrm{KL}(p_0\|\hat{q}_T)$   &  $\tilde O\left(\frac{dL^2}{\epsilon_0^2}\right) $ & Theorem~\ref{smoothbound}\\
& $\TV(p_0, \hat q_T)^2$ & $\tilde O\pf{(d\vee M_2)L^2}{\ep_0^2}$ &\cite[Theorem 2]{SamplingEasy} \\
\hline
$\nabla \log p_0$ $L$-Lipschitz  &$\KL(p_0 \| \hat{q}_T)  $  &   $ \tilde O\left( \frac{d^2 \log^2 L}{\epsilon_0^2}\right)$ & Theorem~\ref{truncating} \\
\hline 
None & $\KL(p_\delta \| \hat{q}_{T-\delta}) $ & $\tilde O\left(\frac{d^2\log^2(1/\delta)}{\epsilon_0^2} \right) $ & Theorem~\ref{general} \\
Supported on $B_R(0)$
& $\TV(p_\delta, \hat q_{T-\delta})^2$
& $\tilde O \left(\frac{(d\vee M_2) R^4}{\ep_{0}^2\delta^4}\right)$ & 
\cite[Thm. 2 + Lem. 16]{SamplingEasy}\\
\hline 
\end{tabular}
\end{small}
\end{center}
\vskip -0.1in
\end{table*}
\paragraph{Smooth setting.}
Revisiting the setting where the Lipshitz constant of $\nabla \log p_t,\,0\leq t \leq T$ is uniformly bounded (the trajectory-smooth setting), we provide three refinements compared to \cite{SamplingEasy}: 1) We sidestep the technical issue of checking Novikov's condition and provide a reverse KL divergence guarantee, which is stronger than a TV guarantee. 2) For the exponential integrator scheme, the number of steps dependends logarithmically rather than polynomially on the second moment. 3) We do not assume the data distribution has finite KL divergence wrt the standard Gaussian.
 \paragraph{Non-smooth setting.} 
 We provide convergence guarantees for sampling from any distribution with bounded second-order moment, without any structural assumption or smoothness condition. In particular, for any small constant $\delta>0$, we show that running the sampling dynamics \eqref{semi-discrete} with appropriate early stopping and decreasing step size results in a distribution close to $p_{\delta}$, using a high-probability bound on the Hessian matrix $\nabla^2 \log p_t$ and a change-of-measure argument. Comparing to the early stopping result in \cite{SamplingEasy}, the use of a high-probability rather than uniform bound on the Hessian removes the bounded support assumption and induces a significantly tighter dependence on the problem parameters. Quantitatively, to obtain a bound of $\ep_{\TV}$ in TV-distance to $p_\de$, when the data distribution is supported on a ball of radius $R$, \cite{SamplingEasy} require $\tilde \Theta \left(\frac{dR^4}{\ep_{\TV}^2\delta^4}\right)$ steps, 
 while we consider a distribution with second moment bounded by $M_2$ and only require $\tilde \Theta \left(\frac{d^2}{\ep_{\TV}^2} 
 \log^2 \frac{M_2d}{\delta}
 \right)$ steps (typically, $R\asymp \sqrt d$).
 We have no dependence on $R$, and our dependence on $\delta$ and $M_2$ is logarithmic instead of polynomial.

 By adding an extra truncation step on the algorithm, we also obtain a pure Wasserstein bound depending on the tail decay of the data distribution, significantly improving the prior result \cite[Theorem 2.2]{convergence-score}.

%
 \paragraph{Smooth $p_0$ only.} 
Finally, we consider the intermediate assumption of smoothness of $\nabla \log p_0$, rather than the whole forward process as in~\cite{SamplingEasy}. 
In this case, we can bound discretization error in the low-noise regime so that early stopping is not required. 
We combine the smooth and non-smooth analyses to bound the number of steps logarithmically in $L$, the Lipschitz constant of $\nabla \log p_0$.

Furthermore, we analyze difference choices of discretization schemes and step-size schedules (equivalently, different variance functions). This may help guide the practical implementation of SGM.

\subsection{Notations}
\paragraph*{General Notations.}
Let $d$ be the dimension of the data, and $\gamma_d$ be the density of standard Gaussian measure $\cN(0,I_d)$. $\|\cdot\|$ denotes the $\ell^2$ norm for vectors or the spectral norm for matrices, and  $\|\cdot \|_F $ denotes the Frobenius norm of matrices. For a random variable $X$, the sub-exponential and sub-gaussian norms are defined by
\begin{align*}
    \|X\|_{\psi_k} :&= \inf\{t>0: \EE \exp\left(|X|^k/t \right) \leq 2 \},\, k=1,2.
\end{align*}
For random vectors, we denote $\|\cdot\|_{\psi_k} := \|\|\cdot\|\|_{\psi_k}  $. 
We use $ x \asymp y$ if there exist absolute constants $C_1, C_2 > 0$ such that $C_1y \leq x \leq C_2y$. Write $x \lesssim y $ to mean $x \leq Cy$ for an absolute constant $C > 0$, and define $x \gtrsim y$ analogously.
\paragraph*{Notations for the Forward Process.}
Let $P$ be the data distribution and $p_0$ be its density (if it exists). For $0< t \leq T$, let $p_t$ be the density of $x_t$ defined in the forward process \eqref{forward} with $f(t,x) = \frac{1}{2}g(t)^2x_t$. Define $\sigma_t$ as the conditional variance of $x_t$ given $x_0$, i.e.,
\begin{align*}
      \sigma_t^2 := 1- e^{-t}.
\end{align*}
For any $ 0\leq t \leq s \leq T$, let 
$$\alpha_{t,s} := 
e^{-\frac{1}{2}(s-t)},\quad \alpha_t :=\alpha_{0,t}$$
gives the scaling between times $t$ and $s$: $\EE[x_s|x_t] = \alpha_{t,s} x_t$.

\paragraph*{Notations for Reverse Processes.}
Let $s(x,t)$ be the estimated score function. The reverse processes arising in our setting are defined as follows:
\begin{itemize}
    \item Let $\tilde{x}_t$ be the the reverse process of $(x_t)_{0 \leq t \leq T}$, which is driven by the SDE
    \begin{align*}
      \d   \tilde{x}_t = \left(\frac{1}{2}\tilde{x}_t  + \nabla \log p_{T-t}(\tilde{x}_t) \right)\d t  + \d w_t,\, \tilde{x}_0 \sim p_T
    \end{align*}
    Then the law of $(\tilde{x}_t)_{0 \leq t \leq T}$ is identical to the law of $({x}_{T-t})_{0 \leq t \leq T}$. We use $\tilde{p}_t$ to denote the density of $\tilde{x}_t$.
     \item Let $\hat{y}_t$ be the discrete approximation of $y_t$ defined in \eqref{EM-discrete} or \eqref{semi-discrete} starting from $\hat{y}_0 \sim \cN(0,I_d)$. We use $\hat{q}_t$ to denote the density of $\hat{y}_{t}$.
\end{itemize}

\section{Main Results}  \label{Main-result}
We first 
consider the trajectory smoothness assumption, where we strengthen the result of \cite{SamplingEasy}.
Then, we state our results for more general settings in various regimes.

All the results rely on $L^2$-accuracy of the score estimator:
\begin{assumption}\label{as1}
The learned score function $s(x,t)$ satisfies for any $1\leq k \leq N$,
\begin{align} \label{score-error}
\frac{1}{T}\sum_{k=1}^N h_k\EE_{p_{t_k}} \|\nabla \log p_{t_k}(x) - s(x,t_k) \|^2 \leq \epsilon_0^2.
\end{align}
\end{assumption}
\begin{remark}\label{r:avg}
Because this is a weighted average of score estimation errors on the discretization points, it can be satisfied even if the error diverges as $t\to 0$.
This is useful because simply based on the size of the gradient, we can expect the error to scale as 
$\EE_{p_{t_k}} \|\nabla \log p_{t_k}(x) - s(x,t_k) \|^2 \lesssim \frac{\epsilon^2}{\sigma_{t_k}^2} $, where $\si_{t}^2\sim t$ as $t\to 0$. 
The calculation $\int_{t_1}^1\frac 1t\,dt = \log(1/t_1)$ tells us we can take $\ep_0^2=O(\ep^2 \log(1/t_1))$. 
See Appendix \ref{DSM} for details.   
\end{remark}


  \begin{assumption}\label{as4}
      The data distribution has a bounded second moment: $M_2 := \EE_{P} \|x\|^2 < \infty $.
   \end{assumption}

\subsection{Analysis under the Trajectory Smoothness Condition} \label{recall}

First, we improve result of \cite{SamplingEasy} for the trajectory-smooth setting, weakening the assumptions and strengthening the conclusion.

   
 \begin{assumption} \label{as2}
  For any $0\leq t \leq T$, $\nabla \log p_t$ is $L$-Lipschitz on $\RR^d$.
  \end{assumption}

 \begin{theorem}  \label{smoothbound}
Suppose that Assumptions \ref{as1},\ref{as4},\ref{as2} hold. If $L \geq 1 $, $h_k \leq 1 $ for $k=1,\ldots,N$ and $T \geq 1$, using uniform discretization points yields the followings
\begin{itemize}
\item Using exponential integrator scheme \eqref{semi-discrete}, we have
$$
           \mathrm{KL}(p_0 \| \hat{q}_T) \lesssim (M_2+d)e^{-T}+ T\epsilon_{0}^2 + \frac{dT^2L^2}{N}.$$
 In particular, choosing $T=\log \left(\frac{M_2+d}{\ep_0^2}\right)$ and $N=\Theta\left(\frac{d T^2 L^2}{\ep_0^2}\right)$ makes this $\widetilde{O}\left(\ep_0^2\right)$.
 \item Using the Euler-Maruyama scheme \eqref{EM-discrete}, we have
 $$ \mathrm{KL}(p_0 \| \hat{q}_T) \lesssim (M_2+d)e^{-T}+ T\epsilon_{0}^2 + \frac{dT^2L^2}{N} + \frac{T^3M_2}{N^2}. $$
\end{itemize}
\end{theorem}
For the exponential integrator, the error consists of three parts: 
 the error of the forward process, the score matching error, and the discretization error, detailed in Section~\ref{s:sketch}.
 \begin{remark}
 \begin{itemize}
 \item 
 The extra conditions on $L,h_k,T$ in the above theorem are introduced to present the result more concisely, and are not a limitation of the analysis. 
 \item Comparing to the exponential integrator scheme, the Euler-Maruyama scheme causes an
additional high-order discretization error term related to the second-order moment of the
data distribution. This implies a separation between the exponential integrator scheme
and the Euler-Maruyama scheme: the error of the exponential integrator scheme scales
logarithmically in the second moment of the data distribution (as it suffices for $T$ to increase by
$O(\log M_2)$), while the error of the Euler-Maruyama scheme scales linearly.
\item Rather than TV distance guarantees given in \cite{SamplingEasy}, 
we obtain (reverse) KL divergence guarantees which are stronger by Pinsker's inequality and nontrivial even when $\ep_{\KL}\ge 1$.
 \end{itemize}
 \end{remark}

 
 
\paragraph*{Discussion for Lipschitzness Assumption \ref{as2}.}
 Though Assumption \ref{as2} seems mild, it is hard to check whether the Lipschitz constant of the score function is bounded uniformly by a constant $L = O(\mathrm{poly}(d))$ throughout the entire process. In the log-concave setting, the smoothness of $\nabla \log p_0$ implies the smoothness of $\nabla \log p_t$ \cite[Lemma 28]{lee2021universal}.
 However, for non-log-concave distributions such as multi-modal distributions, 
 this can be difficult to check, and 
 may depend on the tail behavior of the data distribution. Our aim 
 in this work is to relax such smoothness assumptions. 
 
 \subsection{Results for General Distributions with Early Stopping} \label{sub-general}
We now consider the most general setting: we provide convergence guarantees for any distribution that has a bounded second-order moment, without introducing any structural assumptions or smoothness conditions. Hence, our results are applicable to the case that the score function is non-smooth or even not well defined, like distributions supported on a low-dimensional manifold. 

Due to our weak assumptions, the backward process \eqref{backward} may have very bad properties when $t$ is close to $0$, so we need to employ early stopping. For any small constant $\delta > 0$, we show that running the sampling dynamics \eqref{semi-discrete} for time $T-\delta$ will result in a distribution close to $p_{\delta}$ in KL divergence. Note that in general, it is impossible to obtain  KL or TV closeness to $P$ as this requires matching exactly the support of $P$.

We provide the convergence bound for general discretization and further quantify the bound for several specific choices.

\begin{theorem} \label{general}
There is a universal constant $K$ such that the following hold. 
Suppose that Assumptions \ref{as1} and \ref{as4} hold and the step sizes satisfy
\begin{align} \label{condition}
\frac{h_k}{\sigma_{t_{k-1}}^2} \le \frac{1}{Kd},\quad k=1,\ldots,N.
\end{align}
Define $\Pi:= \sum_{k=1}^N \frac{h_k^2}{\sigma_{t_{k-1}}^4}$.
For $T \geq 2, \delta \leq \frac{1}{2} $, the exponential integrator scheme \eqref{semi-discrete} with early stopping result in a distribution $\hat{q}_{T-\delta}$ such that 
    \begin{align}
        \mathrm{KL}(p_{\delta}\|\hat{q}_{T-\delta}) \lesssim  (d+M_2)\exp(-T) + T\epsilon_{0}^2  + d^2\Pi.
        \label{e:kl-ineq}
    \end{align}
In particular, for exponentially decreasing step size $h_k= c\min\{t_k,1\}$, where $c\le \frac1{Kd}$ (or, equivalently $\frac{{\log\prc{\de} + T}}{N} \le \frac{1}{Kd} $), then \eqref{condition} holds and
 \begin{align*}
 \Pi \lesssim \frac{\pa{\log\prc{\de} + T}^2}{N}.
 \end{align*}
 Choosing $T=\log \left(\frac{M_2+d}{\ep_0^2}\right), N=\Theta\left(\frac{\left(\log \left(\frac{1}{\delta}\right)+T\right)^2 d^2}{\ep_0^2}\right)$ makes this $\widetilde{O}\left(\ep_0^2\right)$.

 In addition, for Euler-Maruyama scheme \eqref{EM-discrete}, the same bounds hold with an additional term $M_2\sum_{k=1}^N h_k^3 $
term in the right hand side of \eqref{e:kl-ineq}.
 
 
\end{theorem}

\begin{remark}
\begin{itemize}
    \item The technical condition \eqref{condition} is required for the change-of-measure argument in Lemma \ref{noLip}. 
    \item By rescaling time, choosing constant variance function $g\equiv 1$ and exponentially decreasing step size is equivalent to choosing exponential $g$ and constant step size. We state the theorem with constant $g$ for convenience (with an exponential choice of $g$, we would only reach the data distribution $P$ at time $t=-\infty$). 
\end{itemize}
\end{remark}

The key difficulty in analyzing general distributions is that the discretization error is hard to control without the Lipschitz condition on $\nabla \log p_t$. Our approach is to use a high-probability bound for the Hessian matrix $\nabla^2 \log p_t$ with a change of measure. This approach works well for constant-order $t$, while in the low-noise regime the bound will explode as $t$ tends to 0. We overcome the blow-up of discretization error by early stopping. 


\paragraph*{Discussion on the Choice of Discretization Points.}
When $t$ goes to 0, the regularity of $\nabla \log p_t$ becomes worse so slowing down the SDE leads to a smaller discretization error. In the result of Theorem~\ref{general}, the term $\Pi = \sum_{k=1}^N \frac{h_k^2}{\sigma_{t_{k-1}}^4}$ in the upper bound \eqref{e:kl-ineq} depends on the choice of discretization points.  In particular,
\begin{itemize}
\item If we choose uniform discretization $h_k = c$, the dependence on $\frac{1}{\delta}$ becomes linear.
\item \cite{SGMSDEsong} considers variance function $g(t) = \sqrt{t}$ with uniform discreitzation. This is equivalent to using constant variance function with quadratic discretization points $t_k = (\delta + kh)^2 $ for appropriate $h$. This choice of discretization points induces a linear step size and our Theorem results in a square-root dependence on $\frac{1}{\delta}$.
\item In Theorem \ref{general}, by using exponentially decaying (and then constant) step size, we reduce this error to a logarithmic dependence. Indeed, the term $\Pi$ achieves its minimum (up to a constant) under our choice of discretization points.
\end{itemize}
 See Appendix \ref{discuss-discretization} for details. 
Although under our assumptions, the theory suggests that exponentially decreasing step sizes are optimal, other issues may arise in practice. 
We leave an experimental comparison of different $g$'s or step sizes to future work.

\paragraph*{Wasserstein+KL Guarantee.}
Notice that when $\delta$ is small, $p_{\delta}$ is only a small perturbation (in Wasserstein distance) of the data distribution $P$. Then stopping the algorithm at appropriate $\de$ results in a distribution that is close in KL divergence to a distribution that is close to $P$ in Wasserstein distance, and we obtain the following.
\begin{corollary}\label{cor:main}
Suppose that Assumptions \ref{as1} and \ref{as4} hold for data distribution $P$. 
Then using the exponential integrator scheme with exponentially decreasing step size, to reach a distribution $Q$ 
such that $W_2^2(P,M_\sharp p_\delta)\le \ew^2\le \fc d2$ and $\KL(M_\sharp p_\delta \|Q)\le \epsilon_{\KL}^2\le \fc{d+M_2}2$ requires 
\[
N = {\Theta} \left(\frac{d^2  \log^2\pf{(d+M_2)d}{\ep_{\KL}^2\ew^2}}{\epsilon_\KL^2} \right) 
\]
steps and Assumption \ref{as1} to hold with 
\[
\ep_0^2 \le \frac{\ep_{\KL}^2}{K\log^2\pf{d+M_2}{\ep_{\KL}^2}}
\]
for an appropriate absolute constant $K$.
Here, $M(x) = \exp(\fc\de2)x,\,Q=M_\sharp \hat q_{T-\delta}$.
\end{corollary}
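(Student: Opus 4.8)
The plan is to instantiate Theorem \ref{general} with the constant-variance, exponentially-decreasing-step-size choice and then optimize the free parameters $\delta$, $T$, $N$, $\epsilon_0$ so that each of the three error terms in \eqref{e:kl-ineq} is at most a constant fraction of $\epsilon_\KL^2$. First I would recall from Theorem \ref{general} that with $g\equiv 1$, $h_k = c\min\{t_k,1\}$ with $c\le \frac1{Kd}$, and $\delta = \sqrt\sigma$, we have $G_\delta = \sigma^2$ and
\[
\KL(p_\delta\|\hat q_{T-\delta}) \lesssim (d+M_2)\exp(-G_T) + G_T\epsilon_0^2 + d^2\cdot\frac{\left(\log\frac1{\sigma^2}+T\right)^2}{N},
\]
using $G_T\asymp T$. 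Now set the three terms equal (up to constants) to $\epsilon_\KL^2$: the first gives $T = \Theta\!\left(\log\frac{d+M_2}{\epsilon_\KL^2}\right)$; the second gives the stated requirement $\epsilon_0^2 \le \frac{\epsilon_\KL^2}{K\log^2((d+M_2)/\epsilon_\KL^2)}$ once we substitute $T$; and the third, after substituting $T$ and noting $\log\frac1{\sigma^2} = O(\log(1/\sigma))$, gives $N = \Theta\!\left(\frac{d^2(\log(1/\sigma)+\log\frac{d+M_2}{\epsilon_\KL^2})^2}{\epsilon_\KL^2}\right)$. I also need to double-check that the step-size constraint $c\le \frac1{Kd}$, i.e. $\frac{\log(1/G_\delta)+G_T}{N}\le \frac1{Kd}$, is implied by this choice of $N$ — it is, since $N/(\log(1/\sigma)+T) \gtrsim d^2/\epsilon_\KL^2 \cdot (\log(1/\sigma)+T) \ge Kd$ as long as $\epsilon_\KL^2 \lesssim d+M_2$, which is within the hypothesized range $\epsilon_\KL^2 \le \frac{d+M_2}{2}$.

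Next I would translate the scale $\sigma$ into the Wasserstein budget $\ew$. The map $M(x) = \exp(\delta/2)x$ pushes $p_\delta$ (whose covariance is contracted by $e^{-G_\delta}$ relative to the data plus added noise) back to the correct scale; more simply, $p_\delta$ is a Gaussian perturbation of $P$ at noise level governed by $\sigma_\delta^2 = 1 - e^{-G_\delta} = 1-e^{-\sigma^2}\le \sigma^2$, and after the rescaling by $M$ one has $W_2^2(P, M_\sharp p_\delta) \lesssim \sigma^2 d$ (the standard bound $W_2^2$ between a distribution and its $\sigma^2$-Gaussian convolution is at most $\sigma^2 d$, plus the $O(\delta^2 M_2)$ contribution from the deterministic rescaling, which is lower order since $\delta = \sqrt\sigma$). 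Hence to get $W_2^2(P,p_\delta) \le \ew^2$ — here I read the corollary as asking for closeness after the rescaling, consistent with the final sentence defining $Q = M_\sharp\hat q_{T-\delta}$ — it suffices to take $\sigma \asymp \ew^2/d$, so $\log(1/\sigma) = \Theta(\log(d/\ew^2))$. Note that KL divergence is invariant under the invertible map $M$, so $\KL(p_\delta\|Q) = \KL(M_\sharp p_\delta \| M_\sharp \hat q_{T-\delta})$, but what Theorem \ref{general} controls is $\KL(p_\delta\|\hat q_{T-\delta})$ directly, and this is exactly $\epsilon_\KL^2$; the rescaling only affects the Wasserstein statement, not the KL statement, so no loss there.

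Finally I would substitute $\log(1/\sigma) = \Theta(\log(d/\ew^2))$ into the expression for $N$. Combining the two logarithmic contributions, $\log(d/\ew^2) + \log\frac{d+M_2}{\epsilon_\KL^2} = \Theta\!\left(\log\frac{(d+M_2)d}{\epsilon_\KL^2\ew^2}\right)$, but since the corollary writes $\ew$ rather than $\ew^2$ inside the log (a harmless constant-factor change under the log, as $\log(1/\ew^2) = 2\log(1/\ew)$), this yields
\[
N = \Theta\!\left(\frac{d^2\log^2\!\pf{(d+M_2)d}{\epsilon_\KL^2\ew}}{\epsilon_\KL^2}\right),
\]
as claimed. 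The main obstacle — really the only non-bookkeeping step — is making the Wasserstein bound $W_2^2(P,p_\delta)\lesssim \ew^2$ precise: one must carefully account for both the variance-shrinking factor $e^{-G_\delta/2}$ in the forward OU map and the corrective rescaling $M$, and invoke the clean estimate that convolving any distribution with $\cN(0,\sigma_\delta^2 I_d)$ moves it by at most $\sigma_\delta\sqrt d$ in $W_2$; everything else is substitution and verifying the parameter ranges ($\ew^2\le d/2$, $\epsilon_\KL^2\le (d+M_2)/2$) are exactly what is needed for the various $\log$'s to be positive and for the step-size constraint to hold.
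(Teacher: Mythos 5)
Your proposal is correct and takes essentially the same route as the paper's (very terse) proof: instantiate the exponentially-decreasing-step-size case of Theorem \ref{general} with $g\equiv 1$, choose $T\asymp\log\frac{d+M_2}{\epsilon_{\KL}^2}$, require $\epsilon_0^2\lesssim \epsilon_{\KL}^2/T^2$, pick $N$ (equivalently $c$) to make the $d^2(\log(1/\delta)+T)^2/N$ term at most $\epsilon_{\KL}^2$, and fix $\delta$ through the Gaussian-perturbation bound $W_2(P,M_\sharp p_\delta)\lesssim\sqrt{d\sigma_\delta^2}$. One small inaccuracy worth noting: with $g\equiv1$ we have $G_\delta=\delta$, so the map $M(x)=e^{\delta/2}x$ exactly cancels the OU contraction of the mean and there is no residual $O(\delta^2 M_2)$ contribution to dismiss as ``lower order'' (nor would it be lower order if present); apart from this parenthetical, your parameter bookkeeping matches the paper's.
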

\begin{remark}
Corollary~\ref{cor:main} implies an upper bound for the bounded Lipschitz metric between the data distribution and $\hat{p}_{t_0}$(as mentioned in \cite{SamplingEasy}):
\begin{align*}
               \sup \left\{\EE_\mu f - \EE_\nu f:\left|\right. f:\RR^d \to [-1,1]\,\text{is 1-Lipschitz}\right\}.
\end{align*}
\end{remark}
Note our improved dependencies compared with \cite[Corollary 3]{SamplingEasy} and \cite[Theorem 2.1]{convergencescore2}.


While the smoothness assumption is relaxed, our analysis induces an additional $d$-factor in place of the Lipschitz constant of $\nabla \log p_t$ compared to Theorem \ref{smoothbound}. This 
 $d$-factor comes from the high-probability bound for the Hessian matrix (see Lemma \ref{hessian-moment}). However, \cite[Theorem 5]{SamplingEasy} suggests that the lower bound of the discretization error 
 scales linearly on $d$. 
 We leave open the problem of closing the gap between the dimension dependence in the upper and lower bounds. 
 
\paragraph*{Pure Wasserstein Guarantee.}
We can also obtain a pure Wasserstein guarantee by following \cite[Theorem 2.2]{convergence-score}. For this, we need to include an extra truncation step 
on the algorithm output, i.e., for some choice of $R$, replacing any sample $\hat{y}_{T-\delta} \sim \hat{q}_{T-\delta} $ falling outside $B_R(0)$ by 0. In addition, we need to assume some concentration for $P$, so that samples from $P$ lie in $B_R(0)$ with high probability.
\begin{corollary}\label{cor:W2}
Consider the distribution $\hat{q}_{T-\delta}^{\mathrm{trunc}}$ obtained by exponential integrator scheme with exponentially decreasing step size and the truncation step. Suppose that Assumptions \ref{as1} and \ref{as4} hold with $\ep_0 = O\pf{\ew^2}{R^2}$, and that $R\ge M_2,\delta,T,N$ satisfy
\begin{align} \label{parameter}
\begin{aligned} 
& \delta = \Theta\left(\frac{\ew^2}{d} \right),\quad R^2 \PP(\|x_{\delta}\| \geq R) = O(\ew^2),\\  & T =\Theta\left( \log\left(\frac{R^4(M_2+d)}{\ew^4}\right)\right),\\  & N = \Theta \left(\frac{d^2R^4\left(\log\frac{1}{\de}+T \right)^2}{\ew^4} \right).
\end{aligned}  
\end{align}
Then 
the resulting truncated and scaled distribution $M_\sharp \hat{q}_{T-\delta}^{\mathrm{trunc}}$ satisfies
$
 W_2^2(P, M_\sharp\hat{q}_{T-\delta}^{\mathrm{trunc}}) = \tilde{O}(\ew^2) $. (Here, $M$ is as in Corollary~\ref{cor:main}.)
\end{corollary}
\begin{remark}
Note that the appropriate $R$ in \eqref{parameter} exists under mild tail conditions on the data distribution $P$. For example: 
\begin{itemize}
\item If there exists a constant $\eta >0$ such that $\EE_P\|x\|^{2+\eta} =O(\mathrm{poly}(d))$, $R$ depends polynomially on $\frac{1}{\ew}$ and $d$ and thus we obtain a polynomial complexity guarantee.
\item When the data distribution $P$ is $K$ sub-exponential, $R$ has a logarithmic dependence on $\frac{1}{\ew}$ and \eqref{parameter} induces $N = \tilde{O}\left(\frac{d^2K^4}{\ew^4}\right)$.
\end{itemize}

\end{remark}

\subsection{Result for Smooth Data Distributions} 
We further provide convergence analysis for smooth $p_0$ without using early stopping. As mentioned in Subsection \ref{sub-general}, the early stopping technique is employed to bound the discretization error in the low-noise regime. We can alternatively bound this error by using the smoothness condition on $p_0$:
\begin{assumption} \label{as5} 
The data distribution admits a density $p_0 \in C^2(\RR^d)$ and $\nabla \log p_0$ is $L$-Lipschitz. 
 \end{assumption}
We bound the discretization error in two different time regimes: Choosing an appropriate constant $\delta_0 > 0 $, when $t > \delta_0$, we use a high-probability Hessian bound and a change of measure argument similar to the analysis in the early stopping setting; for $t < \delta_0$, we alternatively derive a Lipschitz constant bound for $\nabla \log p_t$ (stated in Lemma \ref{lownoise}) based on Assumption \ref{as5}.
\begin{theorem} \label{truncating}
There is a universal constant $K$ such that the following holds. 
Under Assumptions \ref{as1}, \ref{as4}, and \ref{as5} hold, by using the exponentially decreasing (then constant) step size $h_k = c\min\{\max\{t_k,\rc L\},1\}$, $c = \fc{\log L + T}{N}\le \rc {Kd}$, the sampling dynamic \eqref{semi-discrete} results in a distribution $\hat q_T$ such that 
\[
\mathrm{KL}(p_0 \| \hat{q}_T) \lesssim  (M_2+d)\exp(-T) + T\epsilon_{0}^2 + \frac{d^2(\log L + T)^2}{N}.\]
Choosing $T=\log \left(\frac{M_2+d}{\ep_0^2}\right)$ and $N=\Theta\left(\frac{d^2(T+\log L)^2}{\ep_0^2}\right)$ makes this $\widetilde{O}\left(\ep_0^2\right)$.

 In addition, for Euler-Maruyama scheme \eqref{EM-discrete}, the same bounds hold with an additional $M_2\sum_{k=1}^N h_k^3 $ term. 
\end{theorem}
Comparing to Theorem \ref{smoothbound}, this result only depends on the Lipschitz constant of $\nabla \log p_0$ rather than the uniform Lipschitz constant bound for $\nabla \log p_t,\, 0\leq t \leq T$. We also ease the dependency on $L$ from $L^2$ to $\log^2 L$ for optimal choice of variance function or step size, 
so the requirement on the smoothness of the data distribution is significantly relaxed: even if the Lipschitz constant $L$ scales exponentially on $d$, we can still obtain a polynomial complexity guarantee.  Note that we do pay an extra $d$ factor compared to Theorem \ref{smoothbound}.
\section{Proof sketches}
\label{s:sketch}
We sketch the proofs of the main theorems using the exponential integrator discretization, and give complete proofs in Appendices~\ref{overview} and~\ref{s:proofs}. We first consider the smooth setting, and then describe the modifications for the non-smooth case. Our main technical novelty lies in the arguments for the non-smooth setting, we also streamline the arguments in the smooth setting and use an interpolation rather than Girsanov approach that gives KL divergence bounds.

\subsection{Smooth setting (Theorem~\ref{smoothbound})}


\paragraph{First term.}
The first source of error arises from the mismatch between the distribution of the forward process $p_T$ at time $T$, and our Gaussian initialization for the reverse process, $\hat q_0=\gamma_d$. We can separate out this term using the chain rule for KL divergence:
\begin{align*}
\KL(p_0\|\hat  q_T) &\le \KL(p_T \|\hat  q_0) + \EE_{p_T(a)}\KL (p_{0|T}(\cdot|a)\| \hat  q_{T|0} (\cdot|a)).
\end{align*}
The first term can be bounded using exponential mixing of the forward (Ornstein-Uhlenbeck) process towards the standard Gaussian. In conjunction with the fact that after constant time, the KL-divergence is bounded by $O(d+M_2)$, we obtain (Lemma~\ref{converge-forward})
\begin{align*}
    \KL(p_T \|\hat  q_0)&\lesssim (d+M_2) e^{-T}.
\end{align*}
Note this estimate does not depend on the initial distance $\mathrm{KL}(p_0\|\gamma_d) $ as in \cite{SamplingEasy}.

The remaining term can be written as a sum, again using the chain rule for KL divergence, by comparing the continuous process with the estimated, discrete process through a chain of intermediate processes where we run the continuous process until time $t_k$. We can interpolate the discrete processes to realize them as SDE's. If Novikov's conditions are satisfied, Girsanov's Theorem then applies to bound the KL divergence in terms of the squared difference of the drift terms between the processes.
$$
\begin{aligned}
&\EE_{p_T(a)} \KL(p_{T|0}(\cdot|a)\| \hat  q_{T|0} (\cdot|a))\\
&=\sum_{k=1}^N \EE_{p_{t_k}(a)} \KL(p_{t_{k-1}|t_k}(\cdot|a) \| \hat  q_{T-t_{k-1}|T-t_k}(\cdot |a))\\
&\le \sum_{k=1}^N \rc 2\int_{t_{k-1}}^{t_{k}} \EE_{x_t\sim p_t} \ve{s(x_{t_{k}}, t_{k}) - \nb \log p_t(x_t)}^2\,dt\\
&\le \underbrace{\sum_{k=1}^N \int_{t_{k-1}}^{t_{k}}
\EE_{x_t\sim p_t} \ve{s(x_{t_{k}}, t_{k}) - \nb \log p_{t_k}(x_{t_k})}^2}{(2)} \\
&\qquad + \underbrace{\sum_{k=1}^N\EE \ve{ \nb \log p_{t_k}(x_{t_k}) - \nb \log p_t(x_t)}^2\,dt}{(3)}.
\end{aligned}
$$
In the last step we use the triangle inequality. However, in general Novikov's condition may not be satisfied; \cite{SamplingEasy} circumvent this using an involved truncation argument which only results in a TV bound and relies on the trajectory-smooth condition (Assumption \ref{as2}). We instead use a differential inequality argument which gives the same conclusion (Lemma \ref{diff}, \ref{tech}, Proposition \ref{Girsanov}) and is applicable to the non-smooth setting; this step requires significant technical work (Appendix~\ref{C}). 

\paragraph{Second term.}
Term (2) is exactly the score estimation error, and by Assumption~\ref{as1}, it is bounded by $T\ep_0^2$. 

\paragraph{Third term.}
Term (3) is the discretization error. This discretization error bound is non-trivial since in classical numerical analysis theory, the discretization error often depends exponentially on the time $T$ due to the use of Gronwall's inequality. Our analysis our will rely on the special structure of the Ornstein-Uhlenbeck process.
We note that (3) involves both a ``time" and ``space" discretization error (as both the time and space arguments are different). We show in Lemma~\ref{time-discrete} that this can be bounded purely in terms of the space discretization error (which streamlines the argument of~\cite{SamplingEasy})
\begin{multline*}
\EE \ve{\nb \log p_{s} (x_{s}) - \nb \log p_t(x_t)}^2\lesssim (s-t)^2\cdot \\
\EE\ve{\nb \log p_t(x_t)}^2
+
\EE \ve{\nb \log p_t(x_t) - \nb \log p_t(\alpha_{s,t}^{-1} x_s)}^2.
\end{multline*}
The explicit form of the OU process tells us that $\alpha_{s,t}^{-1} x_s = x_t+z$, where $z$ is a Gaussian of variance $O(s-t)$. Therefore, the second term (which dominates) can be bounded as a Lipschitz constant times the second moment of a Gaussian:
\begin{multline}
\EE \ve{\nb \log p_t(x_t) - \nb \log p_t(\al_{s,t}^{-1} x_s)}^2 
\lesssim L^2\EE \ve{z}^2 \\
\lesssim dL^2(s-t).
\label{e:space-disc}
\end{multline}
Note that we crucially use the Lipschitzness of the score in this step. Plugging this bound into the sum (3) gives the final error term.

\subsection{Non-smooth setting (Theorem~\ref{general})}
Comparing Theorem~\ref{smoothbound} (smooth setting) and Theorem~\ref{general} (non-smooth setting), we note that the discretization error changes from $\fc{T^2L^2d}{N}$ to $\fc{\pa{\log\prc\de + T}^2d}N$; the intuition is that $L$ is ``effectively" bounded by $\sqrt d$.
Previously, \cite{SamplingEasy} assume that $P$ is supported on a ball of radius $R$ to derive a global Lipschitzness bound $\ve{\nb^2 \log p_t}=O\pf{R^2}{t^2}$ to plug into the smooth theorem. 

Our main insight is that (1) because we are averaging the error over $p_t$, it suffices to have a high-probability rather than uniform bound on the the Hessian, and (2) such bounds are obtainable from the smoothing properties of the forward process. In fact, to bound~\eqref{e:space-disc}, we only need Lipschitzness in a \emph{random} direction, and hence a Frobenius norm bound is sufficient (Lemma \ref{hessian-moment}):
\begin{align}\label{e:hess-hiprob}
\ve{\ve{\nb^2 \log p_t(x)}_F}_{\psi_1} \lesssim \fc{d}{\min\{t,1\}}.
\end{align}
(This is the weaker analogue of an operator norm bound of $O(\sqrt d)$, which was suggested from the $L=O(\sqrt d)$ analogy.) This incurs significant savings over a uniform bound, and in particular does not depend on boundedness or tails of $P$. 
We prove this by giving a Bayesian interpretation of the Hessian as the posterior variance of the noise in the score matching objective. As a purely mathematical statement about smoothing of the OU process, this result may be of independent interest.

Finally, to use~\eqref{e:hess-hiprob} in~\eqref{e:space-disc}, we actually need to bound the Hessian not just at $x_t$ but along the path (in direction $z$) joining $x_t$ and $\alpha_{s,t}^{-1} x_s$: for this we need a change-of-measure argument (Lemma \ref{noLip}) which says that the distributions of $(x_t,z)$ and $(x_t + az, z)$ are close in $\chi^2$-divergence, for $0\le a \le 1$. Finally, although the bound~\eqref{e:hess-hiprob} blows up as $t\to 0$, by choosing an exponentially decreasing step size and stopping at time $\delta$, we only incur a $\log\prc{\de}$ dependence, similarly to the analysis of the score estimation error (Remark~\ref{r:avg}).

\subsection{Smooth $p_0$ (Theorem~\ref{truncating})}

If we only assume $\nabla \log p_0$ is $L$-Lipschitz, we can still derive Lipschitzness of $\nabla \log p_0$ for small time $t\le \rc L$ (Lemma~\ref{lownoise}). For large $t\ge L$, the argument in the non-smooth case applies (and gives a bound of $O(dL)$ in~\eqref{e:hess-hiprob}). Thus, we take exponentially decreasing step size until $t=1/L$, and then constant step size, and combine the analyses of Theorems~\ref{smoothbound} and~\ref{general} to obtain Theorem~\ref{truncating}.

\section{Conclusion}
In this paper, we analyzed the theoretical properties of SGM in various regimes. We extended existing result to the most general setting and provided refined guarantees. The current analysis provides guarantees for SGM in the framework that an $L^2$-accurate score estimator is available. This implies the training objective in denoising score matching is suitable for learning a generative model and partially explains why SGM is empirically successful at modeling very complex distributions, like multi-mode distributions or distributions with weak smoothness condition. 

We obtain guarantees for arbitrary data distributions without smoothness assumptions, by exploiting (high-probability) smoothing properties of the forward process. Besides closing the factor-$d$ gap between our upper bound and the (suggested) lower bound, it would be interesting to carry out this kind of analysis for other choices of the forward/backward processes, such as critically damped Langevin Diffusion~\cite{dockhorn2021score}, to see if improved guarantees are available. (\cite{SamplingEasy} show that no improvement is available only in the setting of a uniform bound on the Lipschitz constant of the score.)

Another future direction is to explore theories beyond the framework that an $L^2$-accurate score estimator is available and understand the learning of a score estimator, including the approximability, sample complexity, and the training dynamics of denoising score matching. This is related to the most challenging problems in deep learning theory; advances in deep learning theory may provide some new insight into SGM.

\bibliographystyle{amsalpha}
\bibliography{ref}

\newcommand{\etalchar}[1]{$^{#1}$}
\providecommand{\bysame}{\leavevmode\hbox to3em{\hrulefill}\thinspace}
\providecommand{\MR}{\relax\ifhmode\unskip\space\fi MR }
\providecommand{\MRhref}[2]{%
  \href{http://www.ams.org/mathscinet-getitem?mr=#1}{#2}
}
\providecommand{\href}[2]{#2}
\begin{thebibliography}{GPAM{\etalchar{+}}14}

\bibitem[AJH{\etalchar{+}}21]{Austin2021StructuredDD}
Jacob Austin, Daniel~D. Johnson, Jonathan Ho, Daniel Tarlow, and Rianne van~den
  Berg, \emph{Structured denoising diffusion models in discrete state-spaces},
  NeurIPS, 2021.

\bibitem[AMS22]{alaoui2022sampling}
Ahmed~El Alaoui, Andrea Montanari, and Mark Sellke, \emph{Sampling from the
  sherrington-kirkpatrick gibbs measure via algorithmic stochastic
  localization}, arXiv preprint arXiv:2203.05093 (2022).

\bibitem[And82]{Anderson1982ReversetimeDE}
Brian. D.~O. Anderson, \emph{Reverse-time diffusion equation models},
  Stochastic Processes and their Applications \textbf{12} (1982), 313--326.

\bibitem[BMR20]{Block2020GenerativeMW}
Adam Block, Youssef Mroueh, and Alexander Rakhlin, \emph{Generative modeling
  with denoising auto-encoders and langevin sampling}, 2020, arXiv:2002.00107.

\bibitem[BVE22]{Boffi2022ProbabilityFS}
Nicholas~M. Boffi and Eric Vanden-Eijnden, \emph{Probability flow solution of
  the fokker-planck equation}, 2022, arXiv:2206.04642.

\bibitem[CCL{\etalchar{+}}22]{SamplingEasy}
Sitan Chen, Sinho Chewi, Jungshian Li, Yuanzhi Li, Adil Salim, and Anru~R.
  Zhang, \emph{Sampling is as easy as learning the score: theory for diffusion
  models with minimal data assumptions}, ArXiv \textbf{abs/2209.11215} (2022).

\bibitem[CEL{\etalchar{+}}22]{pmlr-v178-chewi22a}
Sinho Chewi, Murat~A Erdogdu, Mufan Li, Ruoqi Shen, and Shunshi Zhang,
  \emph{Analysis of langevin monte carlo from poincare to log-sobolev},
  Proceedings of Thirty Fifth Conference on Learning Theory (Po-Ling Loh and
  Maxim Raginsky, eds.), Proceedings of Machine Learning Research, vol. 178,
  PMLR, 02--05 Jul 2022, pp.~1--2.

\bibitem[CSY21]{Chung2021ComeCloserDiffuseFasterAC}
Hyungjin Chung, Byeongsu Sim, and Jong-Chul Ye,
  \emph{Come-closer-diffuse-faster: Accelerating conditional diffusion models
  for inverse problems through stochastic contraction}, 2021, arXiv:2112.05146.

\bibitem[DeB22]{Bortoli2022ConvergenceOD}
Valentin DeBortoli, \emph{Convergence of denoising diffusion models under the
  manifold hypothesis}, 2022, arXiv:2208.05314.

\bibitem[DN21]{Dhariwal2021DiffusionMB}
Prafulla Dhariwal and Alex Nichol, \emph{Diffusion models beat gans on image
  synthesis}, Advances in Neural Information Processing Systems (2021).

\bibitem[DTHD21]{Bortoli2021DiffusionSB}
Valentin DeBortoli, James Thornton, Jeremy Heng, and A.~Doucet, \emph{Diffusion
  schr{\"o}dinger bridge with applications to score-based generative modeling},
  NeurIPS, 2021.

\bibitem[DVK21]{dockhorn2021score}
Tim Dockhorn, Arash Vahdat, and Karsten Kreis, \emph{Score-based generative
  modeling with critically-damped langevin diffusion}, arXiv preprint
  arXiv:2112.07068 (2021).

\bibitem[GPAM{\etalchar{+}}14]{Goodfellow2014GenerativeAN}
Ian~J. Goodfellow, Jean Pouget-Abadie, Mehdi Mirza, Bing Xu, David
  Warde-Farley, Sherjil Ozair, Aaron~C. Courville, and Yoshua Bengio,
  \emph{Generative adversarial nets}, NIPS, 2014.

\bibitem[GRG{\etalchar{+}}22]{Gnaneshwar2022ScoreBasedGM}
Dwaraknath Gnaneshwar, Bharath Ramsundar, Dhairya Gandhi, Rachel~C. Kurchin,
  and Venkatasubramanian Viswanathan, \emph{Score-based generative models for
  molecule generation}, 2022, arXiv:2203.04698.

\bibitem[KHR22]{statistic-efficiency}
Frederic Koehler, Alexander Heckett, and Andrej Risteski, \emph{Statistical
  efficiency of score matching: The view from isoperimetry}, 2022.

\bibitem[KS91]{Karatzas1987BrownianMA}
Ioannis Karatzas and Steven Shreve, \emph{Brownian motion and stochastic
  calculus}, vol. 113, Springer Science \& Business Media, 1991.

\bibitem[KW14]{Kingma2014AutoEncodingVB}
Diederik~P. Kingma and Max Welling, \emph{Auto-encoding variational bayes},
  2014, arXiv:1312.6114.

\bibitem[LLT22a]{convergence-score}
Holden Lee, Jianfeng Lu, and Yixin Tan, \emph{Convergence for score-based
  generative modeling with polynomial complexity}, 2022.

\bibitem[LLT22b]{convergencescore2}
\bysame, \emph{Convergence of score-based generative modeling for general data
  distributions}, 2022.

\bibitem[LM00]{Laurent2000AdaptiveEO}
B{\'e}atrice Laurent and Pascal Massart, \emph{Adaptive estimation of a
  quadratic functional by model selection}, Annals of Statistics \textbf{28}
  (2000), 1302--1338.

\bibitem[LPSR21]{lee2021universal}
Holden Lee, Chirag Pabbaraju, Anish Sevekari, and Andrej Risteski,
  \emph{Universal approximation for log-concave distributions using
  well-conditioned normalizing flows}, arXiv preprint arXiv:2107.02951 (2021).

\bibitem[RBL{\etalchar{+}}21]{Rombach2021HighResolutionIS}
Robin Rombach, A.~Blattmann, Dominik Lorenz, Patrick Esser, and Bj{\"o}rn
  Ommer, \emph{High-resolution image synthesis with latent diffusion models},
  CVPR (2021).

\bibitem[RM15]{JimenezRezende2015VariationalIW}
Danilo~Jimenez Rezende and Shakir Mohamed, \emph{Variational inference with
  normalizing flows}, ICML, 2015.

\bibitem[SDME21]{Song2021MaximumLT}
Yang Song, Conor Durkan, Iain Murray, and Stefano Ermon, \emph{Maximum
  likelihood training of score-based diffusion models}, NeurIPS, 2021.

\bibitem[SE19]{SGMsong}
Yang Song and Stefano Ermon, \emph{Generative modeling by estimating gradients
  of the data distribution}, Advances in Neural Information Processing Systems,
  vol.~32, 2019.

\bibitem[SGSE19]{Song2019SlicedSM}
Yang Song, Sahaj Garg, Jiaxin Shi, and Stefano Ermon, \emph{Sliced score
  matching: A scalable approach to density and score estimation}, UAI, 2019.

\bibitem[SLXT21]{Shi2021LearningGF}
Chence Shi, Shitong Luo, Minkai Xu, and Jian Tang, \emph{Learning gradient
  fields for molecular conformation generation}, ICML, 2021.

\bibitem[SME21]{Song2021DenoisingDI}
Jiaming Song, Chenlin Meng, and Stefano Ermon, \emph{Denoising diffusion
  implicit models}, 2021, arXiv:2010.02502.

\bibitem[SSK{\etalchar{+}}20]{SGMSDEsong}
Yang Song, Jascha Sohl{-}Dickstein, Diederik~P. Kingma, Abhishek Kumar, and Ben
  Poole, \emph{Score-based generative modeling through stochastic differential
  equations}, International Conference on Learning Representations, 2020.

\bibitem[SSXE22]{Song2022SolvingIP}
Yang Song, Liyue Shen, Lei Xing, and Stefano Ermon, \emph{Solving inverse
  problems in medical imaging with score-based generative models}, 2022,
  arXiv:2111.08005.

\bibitem[Vin11]{Vincent2011ACB}
Pascal Vincent, \emph{A connection between score matching and denoising
  autoencoders}, Neural Computation \textbf{23} (2011), 1661--1674.

\bibitem[VW19]{Vempala2019RapidCO}
Santosh~S. Vempala and Andre Wibisono, \emph{Rapid convergence of the
  unadjusted langevin algorithm: Isoperimetry suffices}, NeurIPS, 2019.

\bibitem[WHZ22]{Wang2022DiffusionPA}
Zhendong Wang, Jonathan~J. Hunt, and Mingyuan Zhou, \emph{Diffusion policies as
  an expressive policy class for offline reinforcement learning}, 2022,
  arXiv:2208.06193.

\bibitem[WY22]{inexactLangevin}
Andre Wibisono and Kaylee~Yingxi Yang, \emph{Convergence in kl divergence of
  the inexact langevin algorithm with application to score-based generative
  models}, 2022.

\bibitem[ZC22]{Zhang2022FastSO}
Qinsheng Zhang and Yongxin Chen, \emph{Fast sampling of diffusion models with
  exponential integrator}, 2022, arXiv:2204.13902.

\bibitem[ZML16]{Zhao2016EnergybasedGA}
Junbo~Jake Zhao, Micha{\"e}l Mathieu, and Yann LeCun, \emph{Energy-based
  generative adversarial network}, 2016, arXiv:1609.03126.

\end{thebibliography}
\newpage
\appendix

\section{Denoising Score Matching} \label{DSM}
For $0\leq t\leq T$, the goal of score matching for $p_t$ is to minimize
\begin{align*}
    \min_{\theta}   \EE_{p_t}\|s_\theta(t,x) - \nabla \log p_t(x) \|^2.
\end{align*}
Since the score function $\nabla \log p_t$ is not available, we alternatively consider a denoising score matching objective \cite{Vincent2011ACB}, which is derived from integrating by parts
\begin{align*}
    & \EE_{p_t} \|s_\theta(x,t) - \nabla \log p_t(x) \|^2 \\ &= \EE_{p_t} \|s_\theta(x,t)\|^2 + \EE_{p_t} \|\nabla \log p_t \|^2 - 2\EE_{p_t}\langle s_\theta(x,t), \nabla \log p_t(x) \rangle  \\
    & =  \EE_{p_t} \|s_\theta(x,t)\|^2 + \EE_{p_t} \|\nabla \log p_t(x) \|^2 - 2\EE_{p_t}  \nabla \cdot s_\theta(x,t) \\
    & = \EE_{p_t} \|s_\theta(x,t)\|^2 + \EE_{p_t} \|\nabla \log p_t(x) \|^2 - 2\EE_{p_0(x_0)}\EE_{p_{t|0}(x_t|x_0)}  \nabla \cdot s_\theta(x_t,t)  \\
    & =  \EE_{p_t} \|s_\theta(x,t)\|^2 + \EE_{p_t} \|\nabla \log p_t(x) \|^2 - 2\EE_{p_0(x_0)}\EE_{p_{t|0}(x_t|x_0)}  \langle  \nabla \log p_{t|0}(x_t|x_0), s_\theta(x_t,t) \rangle \\
    & = \EE_{p_t} \|s_\theta(x,t)\|^2 + \EE_{p_t} \|\nabla \log p_t(x) \|^2 - 2\EE_{p_0(x_0)}\EE_{p_{t|0}(x_t|x_0)}  \left\langle   \frac{x_t-\alpha_t x_0}{\sigma_t^2}, s_\theta(x_t,t) \right\rangle  \\
    & =  \EE \left\|s_\theta(x_t,t) -\frac{x_t-\alpha_t x_0}{\sigma_t^2}  \right\|^2 +  \EE_{p_t} \|\nabla \log p_t(x) \|^2  - \frac{d}{\sigma_t^2} \\
    & =  \EE \left\|s_\theta(x_t,t) -\frac{x_t-\alpha_t x_0}{\sigma_t^2}  \right\|^2 + C,
\end{align*}
where $p_{t|0}$ is the conditional distribution of $x_t$ given $x_0$, and $C$ is a constant independent of $\theta$. 

Noticing that $\mathbb{E}\left\|\frac{x_t-\alpha_t x_0}{\sigma_t^2}\right\|^2=\frac{1}{\sigma_t^2}$, it is natural to expect the error to scale as
$$\EE_{p_{t_k}} \|\nabla \log p_{t_k}(x)- s(x,t_k) \|^2 \lesssim \frac{\epsilon^2}{\sigma_{t_k}^2}.$$ 
In this case, by noting that $  \sigma_{t_k}^2 \asymp \min\{1,t_k\}$, we have 
$$
\EE_{p_{t_k}} \left\|\nabla \log p_{t_k}(x)-s\left(x, t_k\right)\right\|^2 \lesssim \frac{\ep^2}{\min \left\{t_k, 1\right\}},
$$
then \eqref{score-error} is satisfied with a log factor:
$$
\frac{1}{T} \sum_{k=1}^T h_k\EE_{p_{t_k}}\left\|\nabla \log p_{t_k}(x)-s\left(x, t_k\right)\right\|^2 \lesssim \frac{1}{T} \int_{t_1}^T \frac{\ep^2}{t \wedge 1} d t \lesssim \ep^2 \log \left(\frac{1}{t_1}\right)
$$

 \section{Discussion on Choices of Discretization Points} \label{discuss-discretization}
 In this section, we consider the scaling of the term $\Pi = \sum_{k=1}^N \frac{h_k^2}{\sigma_{t_{k-1}}^4}$ in \eqref{e:kl-ineq} under different choices of discreitzation points.
 \paragraph*{The Constant Step Size}
 For uniform discretization(inducing constant step size) $t_k = \delta + kh,\, h = \frac{T-\delta}{N}$, we have
 \begin{align*}
 \sum_{k=1}^N \frac{h_k^2}{\sigma_{t_{k-1}}^4} & \asymp \sum_{t_k \leq 1} \frac{h_k^2}{t_{k-1}^2} + \sum_{t_k > 1} h_k^2 \\
& \asymp h\int_\delta^1 \frac{1}{t^2} \d t + \frac{T^2}{N} \\
 &  \asymp  \frac{T/\delta + T^2}{N}.
 \end{align*}
 Thus the upper bound for discretization error has a linear dependence on $\frac{1}{\delta}$.
 \paragraph*{The Linear Step Size}
For quadratic discretization points(inducing linear step size) $t_k = (\delta + kh)^2,\, h = \frac{\sqrt{T} - \delta}{N} $, by noting that $\frac{h_k}{h} \asymp \sqrt{t_k} $, we have
 \begin{align*}
     \sum_{k=1}^N \frac{h_k^2}{\sigma_{t_{k-1}}^4} &  \asymp \sum_{t_k \leq 1} \frac{h_k^2}{t_{k-1}^2} + \sum_{t_k > 1} h_k^2  \\
     & \asymp h\sum_{t_k \leq 1} \frac{h_k}{t_k^{3/2}} + h \sum_{t_k > 1} \sqrt{t_k} h_k  \\
     & \asymp h\int_\delta^1 \frac{1}{t_k}\d t + h \int_1^T \sqrt{t} \d t \\
     & \asymp \frac{1}{N}\left(\sqrt{\frac{T}{\delta}} + T^2  \right).
     \end{align*}
 \paragraph*{Optimality of Exponential Decaying Step Size}
Now we will show that the discretization points used in Theorem~\ref{general} minimizes the term $\Pi$ (up to a constant). Indeed, note that
 $$\Pi \asymp \Pi_1 + \Pi_2,\quad \Pi_1:=\sum_{t_k \leq 1}\frac{(t_k - t_{k-1})^2 }{t_k^2},\, \Pi_2 := \sum_{t_k> 1} (t_k - t_{k-1})^2.$$
 
 For the term $\Pi_1$, let $z_k = \log \frac{t_k}{t_{k-1}} > 0$, we have $\Pi = \sum_{k=1}^n (e^{z_k} - 1)^2$.
 Note that $z \mapsto (e^z -1)^2$ is convex for $z>0$. By Jensen's inequality, when the summation of $z_k$'s are fixed, the minimum of $\Pi_1$ is reached when $z_k$'s are identical. Equivalently, $h_k = ct_k$ for $t_k \leq 1$. For the term $\Pi_2$, we have $ \Pi_2 = \sum_{t_k > 1}h_k^2$. Similarly, since $ h \mapsto h^2$ is convex for $h>0$, the minimum of $\Pi_2$ is reached when $h_k$'s are identical.
\section{Main Proof Ingredients}
\label{overview}
The key idea of the proof is motivated by the Girsanov change of measure framework used in \cite{SamplingEasy}. However, in order to avoid the technical challenge of altering the process to satisfy Novikov's condition, we use a differential inequality-based argument instead.
\begin{lemma} \label{diff}
Consider the following two It\^o processes 
\begin{align*}
    \d X_t & = F_1(X_t,t) \d t + g(t) \d w_t,\, &  X_0 &= a, \\
    \d Y_t & = F_2(Y_t,t) \d t + g(t) \d w_t,\, &  Y_0 &= a, 
\end{align*}
where $F_1,F_2,g$ are continuous functions and may depend on $a$. We assume the uniqueness and regularity condition:
\begin{itemize}
    \item The two SDEs have unique solutions.
    \item $X_t,Y_t$ admit densities $ p_t,q_t \in C^2(\RR^d)$ for $t>0$. 
\end{itemize}  
Define the relative Fisher information between $p_t$ and $q_t$ by
\begin{align*}
    J(p_t \| q_t) = \int p_t(x) \left\|\nabla \log \frac{p_t(x)}{q_t(x)} \right\|^2 \d x.
\end{align*}
Then for any $t>0$, the evolution of $\KL(p_t \| q_t)$ is given by
\begin{align*}
\dt \KL(p_t \| q_t) = -g(t)^2 J(p_t \| q_t) + \EE \left[\left\langle F_1(X_t,t) - F_2(X_t,t) , \nabla \log \frac{p_t(X_t)}{q_t(X_t)} \right\rangle \right].
\end{align*}
\end{lemma}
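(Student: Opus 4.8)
The plan is to prove Lemma~\ref{diff} by combining the Fokker--Planck equations for the two densities with a careful differentiation of the KL functional. First I would write down the forward Kolmogorov (Fokker--Planck) equations satisfied by $p_t$ and $q_t$: since both SDEs have the same diffusion coefficient $g(t)$ (isotropic, state-independent), we have
\begin{align*}
\partial_t p_t &= -\nabla\cdot(F_1(\cdot,t) p_t) + \tfrac12 g(t)^2 \Delta p_t,\\
\partial_t q_t &= -\nabla\cdot(F_2(\cdot,t) q_t) + \tfrac12 g(t)^2 \Delta q_t.
\end{align*}
The regularity hypothesis ($p_t,q_t\in C^2$ for $t>0$, together with the implicit integrability/decay needed to justify integration by parts and differentiation under the integral sign) is exactly what lets these manipulations go through.

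Next I would differentiate $\KL(p_t\|q_t)=\int p_t\log(p_t/q_t)$ in time. Writing $u_t = \log(p_t/q_t)$, the derivative splits as $\int (\partial_t p_t)\log(p_t/q_t) + \int p_t\,\partial_t(\log p_t) - \int p_t\,\partial_t(\log q_t)$; the middle term $\int \partial_t p_t$ vanishes since $p_t$ integrates to $1$, so $\dt\KL(p_t\|q_t) = \int (\partial_t p_t)\,u_t - \int \frac{p_t}{q_t}\partial_t q_t$. Now I would substitute the two Fokker--Planck equations and integrate by parts. The diffusion contributions combine: $\tfrac12 g^2\int (\Delta p_t) u_t - \tfrac12 g^2 \int \frac{p_t}{q_t}\Delta q_t$, and a standard but slightly involved integration-by-parts computation (using $\nabla u_t = \nabla p_t/p_t - \nabla q_t/q_t$) turns this into $-\tfrac12 g^2 J(p_t\|q_t)$ plus a term that will cancel against part of the drift contribution. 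The drift contributions are $-\int \nabla\cdot(F_1 p_t)\,u_t + \int\frac{p_t}{q_t}\nabla\cdot(F_2 q_t)$; integrating by parts gives $\int p_t\langle F_1,\nabla u_t\rangle$ from the first, and the second becomes $-\int q_t\langle F_2,\nabla(p_t/q_t)\rangle = -\int p_t\langle F_2,\nabla u_t\rangle + (\text{a term involving } \nabla q_t)$. Collecting everything, the ``extra'' pieces from the drift and diffusion integrations by parts should assemble into exactly $-\tfrac12 g^2 J(p_t\|q_t)$ again (doubling it to the full $-g^2 J$) and $\EE[\langle F_1(X_t,t)-F_2(X_t,t),\nabla u_t(X_t)\rangle]$, since $\int p_t\langle F_1-F_2,\nabla u_t\rangle = \EE[\langle F_1(X_t,t)-F_2(X_t,t),\nabla\log(p_t(X_t)/q_t(X_t))\rangle]$.

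The cleanest bookkeeping route, which I would actually follow to avoid sign errors, is: first establish the de Bruijn-type identity for a single diffusion and its ``score-flow'' — i.e.\ treat $-\tfrac12 g^2\Delta$ and the drift separately by noting that under the flow generated by the drift field alone the KL changes by $\EE[\langle F_1-F_2,\nabla u_t\rangle]$ (a transport/continuity-equation computation), and under the pure-diffusion part the KL changes by $-g^2 J(p_t\|q_t)$ (the classical relative-entropy dissipation identity along the heat flow), then add the two by linearity of $\partial_t$. The main obstacle is not the algebra but the analytic justification: one must argue that all the integration-by-parts steps have no boundary terms at infinity and that $\dt$ can be passed inside the integral, given only that $F_1,F_2,g$ are continuous and $p_t,q_t\in C^2$. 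In the paper's applications these densities are Gaussians or mixtures thereof with rapidly decaying tails, so I would either invoke the stated ``regularity condition'' as licensing these formal steps, or sketch a truncation/dominated-convergence argument on balls $B_R$ with $R\to\infty$, using Gaussian-type tail bounds on $p_t,q_t$ and their first two derivatives to kill the boundary integrals. I expect this regularity verification to be the only genuinely delicate point; the identity itself then follows by assembling the three contributions ($-g^2 J$, the drift inner-product term, and the vanishing $\int\partial_t p_t$ term) as above.
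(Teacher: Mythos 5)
Your overall route is the same as the paper's: write the Fokker--Planck equations for $p_t$ and $q_t$, differentiate $\KL(p_t\|q_t)$, and integrate by parts (the paper's Appendix C proof does exactly this). However, there is a genuine error in your key algebraic step: the claimed ``extra pieces'' that supposedly assemble into a second $-\tfrac12 g^2 J$ and double the dissipation to $-g^2 J$ do not exist. Carrying out the computation, the drift part contributes exactly $\int p_t\langle F_1-F_2,\nabla\log(p_t/q_t)\rangle$ with nothing left over, and the diffusion part contributes exactly
\begin{align*}
\frac{g^2}{2}\Bigl[\int \Delta p_t \log\tfrac{p_t}{q_t} - \int \tfrac{p_t}{q_t}\Delta q_t\Bigr]
= \frac{g^2}{2}\int p_t\Bigl\langle \nabla\log\tfrac{p_t}{q_t},\ \nabla\log q_t - \nabla\log p_t\Bigr\rangle
= -\frac{g^2}{2} J(p_t\|q_t),
\end{align*}
so the identity your argument actually yields is $\dt \KL(p_t\|q_t) = -\tfrac{g^2}{2}J(p_t\|q_t) + \EE[\langle F_1-F_2,\nabla\log(p_t/q_t)\rangle]$. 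A one-line sanity check confirms the $\tfrac12$: take $d=1$, $g\equiv 1$, $F_1=0$, $F_2=c$ constant; then $p_t,q_t$ are Gaussians with equal variance $t$ and means $a$, $a+ct$, so $\KL=\tfrac{c^2t}{2}$, $\dt\KL=\tfrac{c^2}{2}$, while $J=c^2$ and the inner-product term is $c^2$; only the coefficient $\tfrac12$ balances. The same factor-of-two slip appears in your ``alternative route,'' where under the convention $\partial_t p=\tfrac{g^2}{2}\Delta p$ the pure-diffusion relative-entropy dissipation is $-\tfrac{g^2}{2}J$, not $-g^2 J$.

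In fairness, the constant $-g^2 J$ in the lemma statement itself appears to be a typo: the paper's own proof derives the $-\tfrac{g^2}{2}J$ version, and the downstream application (the differential inequality in the proof of Proposition~\ref{Girsanov}) uses $-\tfrac12 g(T-t)^2$ times the Fisher information, consistent with the $\tfrac12$. So the fix is not to invent a doubling mechanism to match the stated constant, but to prove the correct identity with $\tfrac12 g^2$; since the Fisher term is only used (after Cauchy--Schwarz, via $\langle v,w\rangle\le\tfrac12\|v\|^2+\tfrac12\|w\|^2$) to absorb half of the cross term, everything downstream goes through with this constant. Your remarks on justifying the integration by parts and the vanishing of boundary terms are reasonable and align with the paper's reliance on the stated regularity hypotheses.
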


\begin{remark}
While we have written the same Brownian motion for $X$ and $Y$, as we only care about distributions, the Brownian motions can be chosen independent with each other.  
\end{remark}

We will apply Lemma \ref{diff} on $(\tilde{x}_t)_{0\leq t \leq T-\delta}$ and $(\hat{y}_t)_{0\leq t \leq T-\delta}$ to show the convergence in KL divergence. The following lemma collects some technical properties of the two processes. The proof of both lemmas is deferred to  Appendix \ref{C}. 

\begin{lemma} \label{tech}
  For $0\leq k \leq N-1$, consider the reverse SDE starting from $\tilde{x}_{t_k'} = a$
\begin{align} \label{backward-local}
    \d \tilde{x}_t = \left[\frac{1}{2} \tilde{x}_t +  \nabla \log \tilde{p}_{t}(\tilde{x}_t) \right]\d t +  \d {w}_t,\quad  \tilde{x}_{t_k'} = a
\end{align}
and its discrete approximation:   
\begin{align}  \label{discrete-local}
    \d \hat{y}_t = \left[\frac{1}{2} \hat{y}_t + s(a, T-t_k')\right] \d t + \d w_t,\quad \hat{y}_{t_k'} = a\, 
    \end{align}
for time $t \in (t_k', t_{k+1}']$. Let $\tilde{p}_{t|t_k'}$ be the density
of $\tilde{x}_t$ given $\tilde{x}_{t_k'} $ and $\hat{q}_{t|t_k'}$ be density of $\hat{y}_t $ given $\hat{y}_{t_k'}$. Then we have
\begin{enumerate}
    \item \label{1.} For any $a \in \RR^d $, the two processes satisfy the uniqueness and regularity condition stated in Lemma \ref{diff}, that is, \eqref{backward-local} and \eqref{discrete-local} have unique solution and $\tilde{p}_{t|t_k'}(\cdot|a),\hat{q}_{t|t_k'}(\cdot|a) \in C^2(\RR^d)$ for $t>t_k'$.
    \item \label{2.} For a.e. $a\in \RR^d$ (with respect to the Lebesgue measure), we have
    $$ \lim_{t \to {t_k'}+} \KL(\tilde{p}_{t|t_k'}(\cdot | a) \| \hat{q}_{t|t_k'}(\cdot |a)) = 0.$$ 
\end{enumerate}
In addition, the above results also hold if we replace $\hat{y}_t$ with that corresponding to the Euler-Maruyama scheme:
\begin{align*} 
    \d \hat{y}_t = \left[\frac{1}{2}g(T-t)^2 a + g(T-t)^2 s_\theta(a, T-t_k')\right] \d t +\d w_t,\, \hat{y}_{t_k'} = a.
    \end{align*}
\end{lemma}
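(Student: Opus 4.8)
The plan is to treat the two claims separately: the discrete side is explicit, and the reverse-SDE side reduces to standard regularity facts because $\tilde p_t=p_{T-t}$ is, for $t<T$, a Gaussian-smoothed measure. (Throughout $0\le k\le N-1$, so on $(t_k',t_{k+1}']$ one has $T-t\in[t_{N-k-1},t_{N-k})\subseteq[0,T)$; here $T-t>0$ except in the edge case $k=N-1$, $\delta=0$, where the left endpoint $T-t=0$ is reached and the regularity of $p_0$ is instead supplied by Assumption \ref{as5}.) For Part \ref{1.}, the drift of \eqref{discrete-local} is affine in $\hat y_t$ with a space-independent deterministic inhomogeneity $g(T-t)^2 s_\theta(a,T-t_k')$, so the equation is an inhomogeneous Ornstein--Uhlenbeck SDE; it has a unique strong solution, and $\hat y_t$ given $\hat y_{t_k'}=a$ is Gaussian with mean and (for $t>t_k'$, positive-definite) covariance given in closed form (the endpoint formula being the display after \eqref{semi-discrete}), so $\hat q_{t|t_k'}(\cdot\mid a)\in C^\infty(\RR^d)$; the Euler--Maruyama variant is the same, only with a spatially constant drift. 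For the reverse SDE \eqref{backward-local}, since $\sigma_{T-t}^2>0$ on the relevant range, $\tilde p_t=p_{T-t}$ is --- up to the deterministic dilation by $\alpha_{T-t}$ --- the convolution of $P$ with $\cN(0,\sigma_{T-t}^2 I_d)$; hence $\tilde p_t$ is strictly positive and $C^\infty$, and from $\nabla\log\tilde p_t(x)=\sigma_{T-t}^{-2}\bigl(\alpha_{T-t}\,\EE[x_0\mid x_{T-t}=x]-x\bigr)$ one checks that $\nabla\log\tilde p_t$ is $C^\infty$ jointly in $(x,t)$ with at most linear growth in $x$, locally uniformly in $t$. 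Consequently the drift of \eqref{backward-local} is locally Lipschitz with linear growth, giving existence and pathwise uniqueness of a strong solution (non-explosion being immediate since \eqref{backward-local} is the time reversal of the forward OU, which has finite second moment at every time); and since $g(T-t)\ne 0$ the equation is uniformly non-degenerate with smooth coefficients, so $\tilde p_{t|t_k'}(\cdot\mid a)$ exists and is $C^\infty$ by standard parabolic / H\"ormander regularity.

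For Part \ref{2.}, fix $a$ in the full-measure set on which $s_\theta(a,T-t_k')$ is a finite vector. Both \eqref{backward-local} and \eqref{discrete-local} start from $a$ at time $t_k'$, are driven by the same Brownian motion, and share the same scalar diffusion coefficient $g(T-\cdot)$; hence a Girsanov change of measure on $(t_k',t]$ yields, writing $b_1,b_2$ for the respective drifts,
\[
\KL\!\Bigl(\mathrm{Law}\bigl((\tilde x_s)_{t_k'\le s\le t}\bigr)\,\Big\|\,\mathrm{Law}\bigl((\hat y_s)_{t_k'\le s\le t}\bigr)\Bigr)=\frac12\,\EE\!\left[\int_{t_k'}^{t}\frac{\bigl\|b_1(\tilde x_s,s)-b_2(\tilde x_s,s)\bigr\|^2}{g(T-s)^2}\,\d s\right],
\]
and the data-processing inequality bounds $\KL\bigl(\tilde p_{t|t_k'}(\cdot\mid a)\,\big\|\,\hat q_{t|t_k'}(\cdot\mid a)\bigr)$ by the right-hand side. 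On $[t_k',t_{k+1}']$ the continuous positive function $g(T-\cdot)^2$ is bounded away from $0$, while $b_1(x,s)-b_2(x,s)$ --- which equals $g(T-s)^2\bigl(\nabla\log\tilde p_s(x)-s_\theta(a,T-t_k')\bigr)$ for \eqref{discrete-local}, plus $\frac12 g(T-s)^2(x-a)$ for the Euler--Maruyama variant --- extends continuously to $\overline{B(a,R)}\times[t_k',t_{k+1}']$ by Part \ref{1.}, hence is bounded there for every $R$. Since $s\mapsto\tilde x_s$ is a.s.\ continuous with $\tilde x_{t_k'}=a$, $\sup_{s\in[t_k',t]}\|\tilde x_s-a\|\to 0$ a.s.\ as $t\downarrow t_k'$; using the linear growth of $\nabla\log\tilde p_s$ together with a maximal moment bound on $\sup_{t_k'\le s\le t}\|\tilde x_s\|$ to absorb the contribution of the rare excursions out of $B(a,R)$, the integrand is dominated and the integral tends to $0$, which is the claim; the Euler--Maruyama case is verbatim.

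The main obstacle is making the Girsanov step legitimate: since $\nabla\log\tilde p_t$ has only linear (not bounded) growth, Novikov's condition is not automatic on $(t_k',t]$. This is handled by localization --- stop at $\tau_R=\inf\{s>t_k':\|\tilde x_s-a\|>R\}$, apply the change of measure to the stopped process (where the drift difference is bounded, so Novikov holds trivially), let $t\downarrow t_k'$ so that $\Pr(\tau_R<t)$ decays faster than any bound on the stopped KL grows, then send $R\to\infty$, with the tail terms controlled by the same moment bounds on $\sup_{s\le t}\|\tilde x_s\|$ (themselves a consequence of the linear growth of the drift). A secondary, more bookkeeping-heavy point is invoking the precise regularity theorem for the transition density of a non-degenerate SDE with smooth, linearly-growing coefficients, which is where most of the care in Appendix \ref{C} goes.
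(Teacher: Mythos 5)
Your overall architecture matches the paper's (explicit Gaussian transition for the discrete scheme; local Lipschitzness of the score for uniqueness; data processing plus Girsanov on each short interval for Part \ref{2.}), but two of your key steps are under-justified in ways that matter. First, for the $C^2$ regularity of $\tilde{p}_{t|t_k'}(\cdot|a)$ you invoke ``standard parabolic / H\"ormander regularity'' for a non-degenerate SDE with smooth coefficients. The off-the-shelf versions of those theorems assume bounded coefficients with bounded derivatives of all orders; here the drift $\frac12 g^2 x + g^2\nabla\log\tilde p_t(x)$ is unbounded and its higher spatial derivatives are not controlled, so this citation does not directly apply, and making it apply is substantially more work than you acknowledge. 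The paper avoids this entirely with an elementary Bayes identity: since $\tilde x_t \overset{d}{=} x_{T-t}$, one has $\tilde{p}_{t|t_k'}(x|a) = p_{T-t}(x)\,p_{T-t_k'|T-t}(a|x)/p_{T-t_k'}(a)$, and the forward transition $p_{T-t_k'|T-t}(a|x)$ is an explicit Gaussian in $a$ with parameters depending smoothly on $x$, so smoothness of the conditional density is immediate from smoothness of the marginals. You may want to adopt that shortcut rather than repair the parabolic-regularity route.

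Second, your treatment of Novikov's condition by localization has a genuine gap as written. Stopping at $\tau_R$ gives you control of the KL between the \emph{stopped} path laws, but KL between the true conditional laws $\tilde p_{t|t_k'}(\cdot|a)$ and $\hat q_{t|t_k'}(\cdot|a)$ does not decompose into ``stopped part plus a term controlled by $\Pr(\tau_R<t)$'': the contribution of the bad event to a KL divergence is governed by the likelihood ratio there, not merely by its probability, so ``let $t\downarrow t_k'$ so that $\Pr(\tau_R<t)$ decays faster than the stopped KL grows, then send $R\to\infty$'' does not, as stated, yield an upper bound on the quantity you need. (It can be repaired, e.g.\ via the lower-semicontinuity/approximation form of the Girsanov KL identity, but that argument is not in your sketch.) The paper instead verifies Novikov directly for $t-t_k'$ small: by Lemma \ref{score-sub} the score is sub-Gaussian under the marginal, so $\int_{t_k'}^t g(T-s)^2\|\nabla\log\tilde p_s(\tilde x_s)\|^2\,\d s$ has sub-exponential norm at most $\tfrac12$ for $t-t_k'$ small, hence finite exponential moment unconditionally, hence finite conditional exponential moment for a.e.\ $a$ by Fubini --- this is also where the ``a.e.\ $a$'' in the statement comes from, not from finiteness of $s_\theta(a,T-t_k')$ as you suggest. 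Relatedly, your pointwise ``linear growth of $\nabla\log\tilde p_t$, locally uniformly in $t$'' is asserted with ``one checks''; it is believable for Gaussian-smoothed measures but needs an argument (and degenerates as $\sigma_{T-t}\to0$), whereas the paper's distributional sub-Gaussian bound is what actually gets used.
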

\begin{proposition}\label{Girsanov}
Under Assumption \ref{as1}, we have
\begin{itemize} 
    \item The exponential integrator scheme \eqref{semi-discrete} satisfies
\begin{align*}
\mathrm{KL}(p_\delta\| \hat{q}_{T-\delta}) \lesssim \mathrm{KL}(p_T\| \gamma_d) + T\epsilon_0^2 + \sum_{k=1}^N \int_{t_{k-1}}^{t_k} \EE \|\nabla \log p_t(x_t) - \nabla \log p_{t_k}(x_{t_k}) \|^2 \d t.
\end{align*}
\item The Euler-Maruyama scheme \eqref{EM-discrete} satisfies
\begin{align*}
\mathrm{KL}(p_\delta \| \hat{q}_{T-\delta}) &\lesssim  \mathrm{KL}(p_T\| \gamma_d) + T\epsilon_0^2  \\
& \quad + \sum_{k=1}^N \int_{t_{k-1}}^{t_k} \left(\EE \|\nabla \log p_t(x_t) - \nabla \log p_{t_k}(x_{t_k}) \|^2 + \EE \|x_t - x_{t_k} \|^2 \right)\d t.
\end{align*}
\end{itemize}
\end{proposition}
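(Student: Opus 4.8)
\emph{Approach.} The plan is to apply the differential inequality of Lemma~\ref{diff} on each discretization interval $[t_k',t_{k+1}']$ to the local reverse SDE~\eqref{backward-local} and its discrete surrogate~\eqref{discrete-local} (or the Euler--Maruyama variant), both started from the common point $a=\tilde x_{t_k'}$, and then to telescope the per-interval estimates with the chain rule for relative entropy. The key structural facts I would use are that $(\tilde x_{t_k'})_{k=0}^{N}$ and $(\hat y_{t_k'})_{k=0}^{N}$ are Markov chains with $\tilde x_{t_0'}\sim p_T$, $\hat y_{t_0'}\sim\gamma_d$, and, since $(\tilde x_t)_{0\le t\le T}$ has the same law as $(x_{T-t})_{0\le t\le T}$ and $t_N'=T-\delta$, $\tilde x_{t_N'}\sim p_\delta$ while $\hat y_{t_N'}\sim\hat q_{T-\delta}$.

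\emph{Per-interval bound.} Fix $k$ and a starting point $a$. Lemma~\ref{tech} guarantees that \eqref{backward-local} and \eqref{discrete-local} satisfy the hypotheses of Lemma~\ref{diff} (with diffusion coefficient $g(T-t)$ in place of $g(t)$) and that $\KL(\tilde p_{t|t_k'}(\cdot|a)\,\|\,\hat q_{t|t_k'}(\cdot|a))\to 0$ as $t\downarrow t_k'$ for a.e.\ $a$. The drift gap is $F_1(x,t)-F_2(x,t)=g(T-t)^2\bigl(\nabla\log\tilde p_t(x)-s_\theta(a,T-t_k')\bigr)$ for the exponential integrator, with an additional $\tfrac12 g(T-t)^2(x-a)$ for Euler--Maruyama. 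Plugging into Lemma~\ref{diff}, bounding the cross term by $\langle g(T-t)^2 v,z\rangle\le \tfrac12 g(T-t)^2\|v\|^2+\tfrac12 g(T-t)^2\|z\|^2$ with $z=\nabla\log(\tilde p_{t|t_k'}/\hat q_{t|t_k'})$, and using $\EE\|z\|^2=J(\tilde p_{t|t_k'}\|\hat q_{t|t_k'})$ to absorb half of the $-g(T-t)^2 J$ term, would give $\tfrac{\d}{\d t}\KL(\tilde p_{t|t_k'}\|\hat q_{t|t_k'})\le \tfrac12 g(T-t)^2\,\EE\|v_t\|^2$, where $v_t=\nabla\log\tilde p_t(\tilde x_t)-s_\theta(a,T-t_k')$ (with $\tfrac12(\tilde x_t-a)$ added for Euler--Maruyama). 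Integrating over $[t_k',t_{k+1}']$ with the vanishing initial condition yields, for a.e.\ $a$,
\[
\KL\bigl(\tilde p_{t_{k+1}'|t_k'}(\cdot|a) \,\|\, \hat q_{t_{k+1}'|t_k'}(\cdot|a)\bigr) \le \tfrac12\int_{t_k'}^{t_{k+1}'} g(T-t)^2\,\EE\bigl[\|v_t\|^2 \mid \tilde x_{t_k'}=a\bigr]\,\d t .
\]

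\emph{Gluing, time reversal, and splitting.} Next I would invoke the elementary relative-entropy estimate $\KL(\mu\tilde K\|\nu\hat K)\le \KL(\mu\|\nu)+\EE_{x\sim\mu}\KL(\tilde K(x,\cdot)\|\hat K(x,\cdot))$ (chain rule on the two-step joint laws followed by data processing to the second coordinate), applied along the Markov chains with one-step kernels $\tilde K_k(a,\cdot)=\tilde p_{t_{k+1}'|t_k'}(\cdot|a)$, $\hat K_k(a,\cdot)=\hat q_{t_{k+1}'|t_k'}(\cdot|a)$ and iterated starting from $\KL(p_T\|\gamma_d)$, to get
\[
\KL(p_\delta\|\hat q_{T-\delta}) \le \KL(p_T\|\gamma_d)+\sum_{k=0}^{N-1}\EE_{\tilde x_{t_k'}}\bigl[\KL\bigl(\tilde p_{t_{k+1}'|t_k'}(\cdot|\tilde x_{t_k'}) \,\|\, \hat q_{t_{k+1}'|t_k'}(\cdot|\tilde x_{t_k'})\bigr)\bigr].
\]
Inserting the per-interval bound and changing variables $t\mapsto T-t$ (using $\tilde x_t\leftrightarrow x_{T-t}$, $\tilde x_{t_k'}\leftrightarrow x_{t_{N-k}}$, $T-t_k'\leftrightarrow t_{N-k}$, $\tilde p_t=p_{T-t}$, and reindexing $N-k\to k$) would turn the double sum into $\tfrac12\sum_{k=1}^N\int_{t_{k-1}}^{t_k}g(t)^2\,\EE\|\nabla\log p_t(x_t)-s_\theta(x_{t_k},t_k)\|^2\,\d t$, plus an extra $\sum_{k=1}^N\int_{t_{k-1}}^{t_k}g(t)^2\,\EE\|x_t-x_{t_k}\|^2\,\d t$ for Euler--Maruyama. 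Finally, $\|\nabla\log p_t(x_t)-s_\theta(x_{t_k},t_k)\|^2\le 2\|\nabla\log p_t(x_t)-\nabla\log p_{t_k}(x_{t_k})\|^2+2\|\nabla\log p_{t_k}(x_{t_k})-s_\theta(x_{t_k},t_k)\|^2$: the first term integrates to the stated discretization sum, while the second integrates to $2\sum_{k=1}^N G_k\,\EE_{p_{t_k}}\|\nabla\log p_{t_k}-s_\theta(\cdot,t_k)\|^2\le 2G_T\epsilon_0^2$ by Assumption~\ref{as1}, i.e.\ the $\epsilon_0^2$ term (carrying the $G_T$ weight that appears in the downstream theorems).

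\emph{Main obstacle.} I expect the gluing step to be the delicate part: one must check that restarting the exact reverse SDE from $\tilde x_{t_k'}=a$ still uses the \emph{unconditional} score $\nabla\log\tilde p_t$, so that (by the Markov property) its one-step transition law is precisely the kernel $\tilde p_{t_{k+1}'|t_k'}(\cdot|a)$ controlled in the per-interval estimate, and that the chain-rule--plus--data-processing step loses only the transition terms and contributes nothing beyond $\KL(p_T\|\gamma_d)$ from initialization. The remaining ingredients---the Young's inequality annihilating the relative-Fisher-information term, the time-reversal change of variables, and the triangle-inequality split separating discretization error from score error---should be routine, given the regularity and vanishing-initialization facts already supplied by Lemma~\ref{tech}.
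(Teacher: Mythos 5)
Your proposal is correct and follows essentially the same route as the paper's own proof: a per-interval application of Lemma~\ref{diff} with Young's inequality absorbing the relative Fisher information, the regularity and vanishing-initialization facts of Lemma~\ref{tech}, gluing via the chain rule for KL divergence along the Markov chains, time reversal, and the triangle-inequality split into the score error controlled by Assumption~\ref{as1} and the discretization term. Your accounting of the score-error contribution as $G_T\epsilon_0^2$ rather than $\epsilon_0^2$ matches what is actually used in the downstream theorems, so this is not a discrepancy with the intended result.
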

 \begin{proof}
Let us consider first the exponential integrator. For $t_k' < t \leq t_{k+1}'$, let $\tilde{p}_{t|t_k'}$ be the distribution of $\tilde{x}_t$ given $\tilde{x}_{t_k'}$ and $\hat{q}_{t|t_k'}$ be the distribution of $\hat{y}_t$ given $\tilde{y}_{t_k'}$. From Lemma \ref{tech}(\ref{1.}) the uniqueness and regularity condition in Lemma \ref{diff} hold for \eqref{backward-local} and \eqref{discrete-local}. Thus for any $a\in \RR^d$ and $t > t_k'$ we have
 \begin{align}
      \frac{\d}{\d t} \KL(\tilde{p}_{t|t_k'}(\cdot|a) \|\hat{q}_{t|t_k'}(\cdot|a) ) & = - {\frac12}  \EE_{\tilde{p}_{t|t_k'}(y|a)} \left\|\nabla \log \frac{\tilde{p}_{t|t_k'}(y|a)}{\hat{q}_{t|t_k'}(y|a)} \right\|^2   \notag \\ &\quad + \EE_{\tilde{p}_{t|t_k'}(y|a)} \left[\left\langle ( \nabla \log \tilde{p}_t(y) - s(a, t_{N-k}) ),  \nabla \log \frac{\tilde{p}_{t|t_k'}(y|a)}{\hat{q}_{t|t_k'}(y|a)} \right\rangle \right] \notag \\
     & \leq \frac{1}{2} \EE_{\tilde{p}_{t|t_k'}(y|a)} \| s(a, t_{N-k}) - \nabla \log \tilde{p}_t(y)\|^2, \label{diff-inequality}
 \end{align}
 where we use the fact that $\langle v,w\rangle \le \frac12\|v\|^2 + \frac12 \|w\|^2$. By Lemma \ref{tech}(\ref{2.}), for a.e.{} $a \in \RR^d$ we have
 $$ \lim_{t \to {t_k'}+} \KL(\tilde{p}_{t|t_k'}(\cdot | a) \| \hat{q}_{t|t_k'}(\cdot |a)) = 0, $$
and hence
 \begin{align*}
 \KL(\tilde{p}_{t_{k+1}'|t_k'}(\cdot|a) \|\hat{q}_{t_{k+1}'|t_k'}(\cdot|a) ) \leq \frac{1}{2}\int_{t_k'}^{t_{k+1}'}\EE_{\tilde{p}_{t|t_k'}(y|a)} \| s(a, t_{N-k}) - \nabla \log \tilde{p}_t(y)\|^2 \d t.
 \end{align*}
 Since $\tilde{p}_{t_k'} $ is absolutely continuous w.r.t. the Lebesgue measure,  integrating on the both sides w.r.t.{} $\tilde{p}_{t_k'}$ yields
 \begin{align*}
\EE_{\tilde{p}_{t_k'}(a)}  \KL(\tilde{p}_{t_{k+1}'|t_k'}(\cdot|a) \|\hat{q}_{t_{k+1}'|t_k'}(\cdot|a) ) \leq \frac{1}{2}\int_{t_k'}^{t_{k+1}'}  \EE \| s(\tilde{x}_{t_k'}, t_{N-k}) - \nabla \log \tilde{p}_t(\tilde{x}_t)\|^2  \d t.
 \end{align*}
 For $0\leq  k \leq N-1$, we use the chain rule of KL divergence to obtain
 \begin{align*}
     \KL(\tilde{p}_{t_{k+1}'} \| \hat{q}_{t_{k+1}'} ) & \leq \EE_{\tilde{p}_{t_k'}(a)}  \KL(\tilde{p}_{t_{k+1}'|t_k'}(\cdot|a) \|\hat{q}_{t_{k+1}'|t_k'}(\cdot|a) ) +   \KL(\tilde{p}_{t_k'} \| \hat{q}_{t_k'} ) \\
     &  \leq \KL(\tilde{p}_{t_k'} \| \hat{q}_{t_k'} ) + \frac{1}{2} \int_{t_k'}^{t_{k+1}'}  \EE \| s(\tilde{x}_{t_k'}, T-t_k') - \nabla \log \tilde{p}_t(\tilde{x}_t)\|^2  \d t.
 \end{align*}
Summing over $k=0,1,\ldots,N-1$ and using $p_t = \tilde{p}_{T-t}$, we obtain
 \begin{align*}
     \KL(p_\delta \| \hat{q}_{T-\delta}) & \leq \KL(p_T \| \gamma_d ) + \frac{1}{2} \sum_{k=0}^{N-1}\int_{t_k'}^{t_{k+1}'} \EE \| s(\tilde{x}_{t_k'}, T-t_k') - \nabla \log \tilde{p}_t(\tilde{x}_t)\|^2 \d t \\
     & \leq  \KL(p_T \| \gamma_d ) + \frac{1}{2}\sum_{k=1}^N \int_{t_{k-1}}^{t_k}  \|s(x_{t_k}, t_k) - \nabla \log p_t(x_t) \|^2 \d t \\
     & \leq \KL(p_T \| \gamma_d )  + \sum_{k=1}^N \int_{t_{k-1}}^{t_k}  \| s(x_{t_k}, t_k) - \nabla \log p_{t_k}(x_{t_k})  \|^2 \d t \\ & \quad + \sum_{k=1}^N \int_{t_{k-1}}^{t_k}  \|\nabla \log p_{t_k}(x_{t_k}) - \nabla \log p_t(x_t) \|^2 \d t 
     \\ & \leq \KL(p_T \| \gamma_d ) + T\epsilon_0^2 + \sum_{k=1}^N \int_{t_{k-1}}^{t_k} \|\nabla \log p_{t_k}(x_{t_k}) - \nabla \log p_t(x_t) \|^2 \d t.
 \end{align*}
This completes the proof for the exponential integrator scheme.
 The proof for the Euler-Maruyama scheme is similar; the only difference is the differential inequality becomes
 \begin{align*}
       \frac{\d}{\d t}\EE_{\tilde{p}_{t_k'}} \KL(\tilde{p}_{t|t_k'}(\cdot|x) \|\hat{q}_{t|t_k'}(\cdot|x) ) \leq \frac{1}{2} \EE \left\|   \nabla \log \tilde{p}_t(\tilde{x}_t) - s(\tilde{x}_{t_k'}, t_{N-k}) + \frac{1}{2}(\tilde{x}_t - \tilde{x}_{t_k'}) \right\|^2
 \end{align*}
and we can obtain the result in an analogous way. 
\end{proof}
The three terms in the upper bound of Proposition \ref{Girsanov} match the claim in Theorem \ref{smoothbound}. The first term is controlled by the exponential convergence of the forward process, which is given in the following lemma.
\begin{lemma}\label{converge-forward}
Under Assumption \ref{as4}, for $T>1$, we have
\begin{align*}
    \mathrm{KL}(p_T \| \gamma_d) \leq (d+M_2) e^{-T}.
\end{align*}
\end{lemma}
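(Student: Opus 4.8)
The plan is to sidestep the naive contraction estimate $\mathrm{KL}(p_T\|\gamma_d)\le e^{-G_T}\mathrm{KL}(p_0\|\gamma_d)$, which is useless under Assumption~\ref{as4} alone since $\mathrm{KL}(p_0\|\gamma_d)$ may be infinite (indeed $P$ need not have a density), and to instead exploit the explicit Gaussian form of the forward transition kernel together with convexity of the KL divergence. Concretely, recall from the description of the forward process that $x_T\mid x_0\sim\cN(\alpha_T x_0,\sigma_T^2 I_d)$ with $\alpha_T^2=e^{-G_T}$ and $\sigma_T^2=1-e^{-G_T}$, so that $p_T=\int \cN(\alpha_T x_0,\sigma_T^2 I_d)\,\d P(x_0)$ is a mixture of Gaussians. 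I would first invoke joint convexity of $\mathrm{KL}$ (equivalently, the data-processing inequality for the channel $x_0\mapsto x_T$ applied to $P$ versus $\gamma_d$ as input) to get
\[
\mathrm{KL}(p_T\|\gamma_d)\le \EE_{x_0\sim P}\,\mathrm{KL}\bigl(\cN(\alpha_T x_0,\sigma_T^2 I_d)\,\big\|\,\cN(0,I_d)\bigr).
\]

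Next I would plug in the closed form for the KL divergence between two isotropic Gaussians. Writing $\lambda:=e^{-G_T}$, so that $\sigma_T^2=1-\lambda$ and $\|\alpha_T x_0\|^2=\lambda\|x_0\|^2$, one has
\[
\mathrm{KL}\bigl(\cN(\alpha_T x_0,\sigma_T^2 I_d)\,\big\|\,\cN(0,I_d)\bigr)=\frac12\Bigl(-d\log(1-\lambda)-d\lambda+\lambda\|x_0\|^2\Bigr).
\]
The dimension term is controlled by the elementary bound $-\log(1-\lambda)-\lambda=\sum_{k\ge2}\frac{\lambda^k}{k}\le \frac{\lambda^2}{2(1-\lambda)}\le \lambda^2\le\lambda$, which is valid because $G_T>1$ forces $\lambda<e^{-1}<\frac12$. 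Taking expectation over $x_0\sim P$ and using $\EE_P\|x_0\|^2=M_2$ then yields
\[
\mathrm{KL}(p_T\|\gamma_d)\le \frac{d}{2}\lambda+\frac{M_2}{2}\lambda\le (d+M_2)\lambda=(d+M_2)e^{-G_T},
\]
as desired (with a spare factor of $2$).

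I do not expect a genuine obstacle here: the two points needing a little care are the justification of the mixture/convexity step (one may also phrase it as the data-processing inequality, or note that $\mathrm{KL}(\cdot\,\|\gamma_d)$ is convex and apply Jensen to the representation $p_T=\int \cN(\alpha_T x_0,\sigma_T^2 I_d)\,\d P(x_0)$), and the verification of the logarithmic inequality under the hypothesis $G_T>1$. Everything else is a routine Gaussian KL computation.
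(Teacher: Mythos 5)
Your proof is correct, but it takes a genuinely different route from the paper. The paper bounds the differential entropy of $p_t$ via Jensen's inequality applied to $x\mapsto x\log x$ over the mixture of conditional Gaussians, combines this with the second-moment bound to control $\mathrm{KL}(p_t\|\gamma_d)$ at an intermediate time $t$ with $G_t=\log 2$, and then invokes the exponential KL contraction of the Ornstein--Uhlenbeck (Langevin with strongly log-concave target) dynamics from $t$ to $T$; this yields the bound only up to an absolute constant ($\lesssim$). You instead work directly at time $T$: you use convexity of $\mathrm{KL}(\cdot\|\gamma_d)$ (Jensen over the mixture representation $p_T=\int\cN(\alpha_T x_0,\sigma_T^2 I_d)\,\d P(x_0)$), plug in the closed-form Gaussian KL, and bound $-\log(1-\lambda)-\lambda\le\lambda$ using $\lambda=e^{-G_T}<e^{-1}$. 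This is shorter, avoids the Langevin contraction lemma entirely, handles the case $\mathrm{KL}(p_0\|\gamma_d)=\infty$ just as the paper's argument does, and in fact delivers the stated inequality with the explicit constant $1$ (indeed with a spare factor of $2$), which is slightly sharper than what the paper's own proof establishes. One small caution: your parenthetical rephrasing as ``data processing for the channel $x_0\mapsto x_T$ applied to $P$ versus $\gamma_d$ as input'' would, taken literally, give $\mathrm{KL}(p_T\|\gamma_d)\le\mathrm{KL}(P\|\gamma_d)$ (since $\gamma_d$ is invariant under the OU channel), which is exactly the useless bound you set out to avoid; the correct data-processing formulation is marginalization of the joint law of $(x_0,x_T)$ against $P\otimes\gamma_d$, or simply the convexity/Jensen argument you give as your primary justification, which is sound.
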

\begin{proof}
Notice that $x \mapsto x\log x $ is a convex function for $x>0$. Let $p_{t|0}$ be the conditional density of $x_t$ given $x_0$. For any $t>0$, we can use Jensen's inequality to bound the entropy of $p_t$:
\begin{align*}
 \int_{\RR^d} p_t(x) \log p_t(x)\d x & = \int_{\RR^d} \left[\left(\int_{\RR^d}  p_{t|0}(x|y) \d P(y)\right) \log \left( \int_{\RR^d} p_{t|0}(x|y) \d P(y) \right)\right] \d x \\
 & \leq \int_{\RR^d}\left[\int_{\RR^d} p_{t|0}(x|y)\log p_{t|0}(x|y) \d P(y)\right] \d x \\
 & = \int_{\RR^d} \left(\int_{\RR^d} p_{t|0}(x|y)\log p_{t|0}(x|y) \d x\right)\d P(y).
\end{align*}
Since $x_t| x_0 = y \sim \cN(\alpha_t x_0,  \sigma_t^2 I_d )$, we have
\begin{align*}
 \int_{\RR^d} p_{t|0}(x|y)\log p_{t|0}(x|y) \d x  = -\frac{d}{2}\log(2\pi \sigma_t^2 ) - \frac{d}{2}.
\end{align*}
Thus 
\begin{align*}
    \int_{\RR^d} p_t(x) \log p_t(x) \d x \leq -\frac{d}{2}\log (2\pi \sigma_t^2) - \frac{d}{2}.
\end{align*}
Therefore,
\begin{align*}
    \mathrm{KL}(p_t \| \gamma_d )&=  \int_{\RR^d} p_t(x) \log p_t(x) \d x + \EE_{p_t}\left[\frac{\|x\|^2}{2} + \frac{d}{2} \log (2\pi) \right] \\
    & \leq \frac{d}{2}\log \sigma_t^{-2} + \frac{1}{2}(M_2-d).
\end{align*}
From the exponential convergence of Langevin dynamics with strongly log-concave stationary distribution (see, e.g., \cite{Vempala2019RapidCO}), we obtain
\begin{align*}
    \KL(p_T \| \gamma_d ) \leq e^{-T+t}\left(\frac{d}{2}\log \sigma_t^{-2} + \frac{1}{2}(M_2-d) \right).
\end{align*}
By choosing $t = \log 2 $, we have
\begin{align*}
       e^{t}\log\left(\frac{1}{\sigma_t^2} \right)  \lesssim 1.
\end{align*}
Thus
\begin{equation*}
      \KL(p_T \| \gamma_d ) \lesssim e^{-T}(d+M_2).  \qedhere
\end{equation*}
\end{proof}

The second term in the upper bound of Proposition~\ref{Girsanov} is exactly the same as the score estimation error defined in Assumption \ref{as1}. So the key challenge is to bound the third term, which is caused by the discretization error.

According to Proposition \ref{Girsanov}, the discretization error of the Euler-Maruyama scheme induces an extra linear term $\EE \|x_t - x_{t_k} \|^2$ compared to the exponential integrator scheme. The following lemma bounds this extra term.
\begin{lemma} \label{linear-discretization}
Suppose that $h_k \leq 1$ for $1\leq k \leq N$. We have 
    $$\EE \|x_t - x_{t_k} \|^2 \lesssim d (t_k-t) + M_2 (t_k-t)^2,\quad  t_{k-1} \leq t \leq t_k, $$
and
\begin{align*}
    \sum_{k=1}^N \int_{t_{k-1}}^{t_k} \EE \|x_t - x_{t_k} \|^2 \d t \lesssim d \sum_{k=1}^N h_k^2 + M_2 \sum_{k=1}^N h_k^3.
\end{align*} 
\end{lemma}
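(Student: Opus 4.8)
The plan is to exploit the Markov structure of the Ornstein--Uhlenbeck forward process to get an \emph{exact} expression for $\EE\|x_t-x_{t_k}\|^2$, and then reduce everything to two elementary one-variable integrals. Fix $k$ and $t\in[t_{k-1},t_k]$. By the explicit transition kernel of the forward process, conditionally on $x_t$ we have $x_{t_k}=\alpha_{t,t_k}\,x_t+\sqrt{1-\alpha_{t,t_k}^2}\;\xi$ with $\xi\sim\cN(0,I_d)$ independent of $x_t$ and $\alpha_{t,t_k}=\exp(-\tfrac12 G_{t,t_k})$. Hence
\begin{align*}
\EE\|x_{t_k}-x_t\|^2 = (1-\alpha_{t,t_k})^2\,\EE\|x_t\|^2 + d\,(1-\alpha_{t,t_k}^2).
\end{align*}
Next I would bound $\EE\|x_t\|^2\le M_2+d$, which follows from $x_t=\alpha_t x_0+\sigma_t\xi$ and $\alpha_t,\sigma_t\le 1$.

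For the prefactors, since $t\ge t_{k-1}$ we have $G_{t,t_k}\le G_k\le 1$, so the elementary inequalities $1-e^{-x/2}\le x/2$ and $1-e^{-x}\le x$ give $(1-\alpha_{t,t_k})^2\le \tfrac14 G_{t,t_k}^2$ and $1-\alpha_{t,t_k}^2\le G_{t,t_k}$. Plugging in,
\begin{align*}
\EE\|x_{t_k}-x_t\|^2 \le \tfrac14 G_{t,t_k}^2(M_2+d) + d\,G_{t,t_k}
\lesssim M_2 G_{t,t_k}^2 + d\,G_{t,t_k},
\end{align*}
where in the last step I absorb $\tfrac14 G_{t,t_k}^2 d$ into $d\,G_{t,t_k}$ using $G_{t,t_k}\le 1$. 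This is the first claimed bound.

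For the summed bound, I would integrate the pointwise estimate against $g(t)^2$ on $[t_{k-1},t_k]$ using the substitution $u=G_{t,t_k}=\int_t^{t_k} g(r)^2\,\d r$, so that $\d u = -g(t)^2\,\d t$, $u=G_k$ at $t=t_{k-1}$, and $u=0$ at $t=t_k$. This gives
\begin{align*}
\int_{t_{k-1}}^{t_k} g(t)^2 G_{t,t_k}\,\d t = \int_0^{G_k} u\,\d u = \tfrac12 G_k^2,\qquad
\int_{t_{k-1}}^{t_k} g(t)^2 G_{t,t_k}^2\,\d t = \int_0^{G_k} u^2\,\d u = \tfrac13 G_k^3.
\end{align*}
Therefore $\sum_{k=1}^N\int_{t_{k-1}}^{t_k} g(t)^2\,\EE\|x_t-x_{t_k}\|^2\,\d t \lesssim \sum_{k=1}^N\big(d\,G_k^2 + M_2 G_k^3\big) = d\,\Pi_2 + M_2\Pi_3$, as desired.

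There is no serious obstacle here: the only points requiring care are (i) invoking the correct conditional law $x_{t_k}\mid x_t$ for the non-autonomous OU process (which is where the hypothesis $G_k\le 1$, used via $G_{t,t_k}\le 1$, enters to linearize the exponentials), and (ii) keeping the antiderivative bookkeeping clean so that the sums match the definitions of $\Pi_2$ and $\Pi_3$ exactly. Everything else is routine.
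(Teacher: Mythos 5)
Your proof is correct, but it takes a different route from the paper. The paper writes $x_{t_k}-x_t$ via the SDE as a drift integral plus a stochastic integral, applies Cauchy--Schwarz to the drift term, and bounds $\EE\|x_u\|^2\le M_2+d$ along the way; you instead invoke the exact Gaussian transition law $x_{t_k}\mid x_t\sim\cN\bigl(\alpha_{t,t_k}x_t,(1-\alpha_{t,t_k}^2)I_d\bigr)$, which yields the identity $\EE\|x_{t_k}-x_t\|^2=(1-\alpha_{t,t_k})^2\EE\|x_t\|^2+d(1-\alpha_{t,t_k}^2)$ and then reduces everything to $1-e^{-x}\le x$. Your version is exact at the first step (no Cauchy--Schwarz loss) and makes transparent why the $d$-term is linear in $G_{t,t_k}$ (fresh noise) while the $M_2$-term is quadratic (mean contraction); the paper's SDE-plus-Cauchy--Schwarz argument is more robust in that it only needs a second-moment bound on the trajectory and would survive drifts without a closed-form kernel. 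Your explicit substitution $u=G_{t,t_k}$ for the integrated bound is also cleaner than the paper's, which states the integral estimate directly. One small quibble: the hypothesis $G_k\le 1$ is not needed to "linearize the exponentials" ($1-e^{-x}\le x$ holds for all $x\ge 0$); it is only used, as you in fact do in the displayed computation, to absorb the $dG_{t,t_k}^2$ term into $dG_{t,t_k}$ (and likewise $dG_k^3$ into $dG_k^2$ in the sum), so the parenthetical remark at the end slightly mislocates where the assumption enters.
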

\begin{proof}
From the definition of the forward process \eqref{forward}, we have
     \begin{align} 
 \EE\|{x}_t - {x}_{t_k}\|^2  \notag  & =  \EE \left\|\int_t^{t_k} \frac{1}{2} {x}_{u} \d u - \int_{t}^{t_k'}  \d w_u \right\|^2  \notag \\
         & \lesssim \EE \left\|\int_t^{t_k}  x_u \d u \right\|^2 +  \left\|\int_t^{t_k}  \d w_u \right\|^2  \notag \\
         &  \leq (t_k-t) \left(\int_t^{t_k}  \EE \|x_u \|^2 \d u \right) + d(t_k-t), \label{112}
     \end{align}
     where the last inequality follows from the Cauchy-Schwartz inequality. From the explicit form of the conditional density
\begin{align*} 
        x_{u} | x_0 \sim   \cN\left(e^{-\frac{1}{2}u}x_0, \left( 1-e^{-u} \right)I_d \right)           ,
\end{align*}
the second moment of $x_{u}$ is bounded by $\EE \|x_{u} \|^2 \leq M_2 + d$. Pluging this into \eqref{112}, we arrive at 
\begin{align*}
     \EE \|x_t - x_{t_k}\|^2 \lesssim d(t_k-t) + (d+M_2)(t_k-t)^2.
\end{align*}
Therefore, 
\begin{align*}
\int_{t_{k-1}}^{t_k}  \EE\|x_t - x_{t_k} \|^2 &\lesssim dh_k^2 + (d+M_2)h_{k}^3. 
\end{align*}
Taking summation over $k=1,\ldots,N$, we complete the proof.
\end{proof}

Therefore, we only need to focus on the term $\EE \|\nabla \log p_t(x_t) - \nabla \log p_{t_k}(x_{t_k})\|^2$.
This discretization is taken both in space and time. One observation is that the time-discretization error can be absorbed by the space-discretization error.
\begin{lemma}  \label{time-discrete}
For any $0 \leq t \leq s \leq T$, the forward process \eqref{forward} satisfies
\begin{multline*}
    \EE \|\nabla \log p_t(x_t) - \nabla \log p_s(x_s) \|^2 \leq 4 \EE \|\nabla \log p_t(x_t) - \nabla \log p_t(\alpha_{t,s}^{-1}x_s) \|^2  + 2\EE\| \nabla \log p_t(x_t)\|^2 \left(1-\alpha_{t,s}^{-1}\right)^2.
\end{multline*}
\end{lemma}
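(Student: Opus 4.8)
The plan is to reduce everything to a single identity between the two scores plus elementary $L^2$-geometry. The one analytic input is a consequence of the Ornstein--Uhlenbeck structure: for $s\ge t$ one has $x_s\mid x_t\sim\cN(\alpha_{t,s}x_t,(1-\alpha_{t,s}^2)I_d)$, so $p_s$ is the law of $\alpha_{t,s}x_t$ convolved with $\cN(0,(1-\alpha_{t,s}^2)I_d)$. Writing $p_s(y)=\int p_t(x)\varphi(y-\alpha_{t,s}x)\,\d x$ with $\varphi$ the $\cN(0,(1-\alpha_{t,s}^2)I_d)$ density, differentiating under the integral, using $\nabla_y\varphi(y-\alpha_{t,s}x)=-\alpha_{t,s}^{-1}\nabla_x\varphi(y-\alpha_{t,s}x)$, and integrating by parts in $x$ (boundary terms vanish since $p_t$ decays and $\varphi$ is bounded), one obtains
\[
\nabla\log p_s(x_s)=\alpha_{t,s}^{-1}\,\EE\!\left[\nabla\log p_t(x_t)\mid x_s\right].
\]
This is the only place the particular forward process enters; the rest is bookkeeping.

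Next I would set $V:=\nabla\log p_t(x_t)$, $W:=\nabla\log p_t(\alpha_{t,s}^{-1}x_s)$ (a deterministic function of $x_s$), and $S:=\nabla\log p_s(x_s)=\alpha_{t,s}^{-1}\EE[V\mid x_s]$, and split
\[
V-S=\bigl(V-\EE[V\mid x_s]\bigr)+\bigl(1-\alpha_{t,s}^{-1}\bigr)\EE[V\mid x_s].
\]
The two summands are orthogonal in $L^2$: the first has vanishing conditional mean given $x_s$, while the second is $\sigma(x_s)$-measurable, so their $\EE\langle\cdot,\cdot\rangle$ vanishes. By Pythagoras,
\[
\EE\|V-S\|^2=\EE\bigl\|V-\EE[V\mid x_s]\bigr\|^2+\bigl(1-\alpha_{t,s}^{-1}\bigr)^2\EE\bigl\|\EE[V\mid x_s]\bigr\|^2.
\]

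To conclude, bound the two pieces. For the second, conditional Jensen gives $\EE\|\EE[V\mid x_s]\|^2\le\EE\|V\|^2=\EE\|\nabla\log p_t(x_t)\|^2$, well within the claimed coefficient budget $2(1-\alpha_{t,s}^{-1})^2$. For the first, $\EE[V\mid x_s]$ is the $L^2$-orthogonal projection of $V$ onto $\sigma(x_s)$-measurable random variables and $W$ is one such variable, so $\EE\|V-\EE[V\mid x_s]\|^2\le\EE\|V-W\|^2=\EE\|\nabla\log p_t(x_t)-\nabla\log p_t(\alpha_{t,s}^{-1}x_s)\|^2$, within the claimed factor $4$. (This argument in fact gives the stronger inequality with constants $1$ and $1$; a cruder pure triangle-inequality argument also closes the proof but with somewhat worse constants.) The only genuine obstacle is making the score identity of the first step rigorous -- justifying differentiation under the integral sign and the integration by parts -- which is routine when $t>0$ (or when $t=0$ under a smoothness assumption on $p_0$), but does require $p_t$, and $\nabla\log p_t(x_t)$, to be regular and square-integrable enough; these conditions hold in all regimes where the lemma is applied.
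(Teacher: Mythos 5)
Your proof is correct. It rests on the same analytic input as the paper's: the identity $\nabla\log p_s(x_s)=\alpha_{t,s}^{-1}\,\EE[\nabla\log p_t(x_t)\mid x_s]$, which the paper quotes from Lemma \ref{computing-score} (your convolution-plus-integration-by-parts derivation is essentially the proof of that lemma). Where you genuinely differ is in the bookkeeping afterwards. The paper splits the error into a space term and a time term via $\|a+b\|^2\le 2\|a\|^2+2\|b\|^2$, bounds the time term by conditional Jensen followed by another application of the same elementary inequality, and then recombines; this is where its constants come from (and if one tracks that chain literally it actually yields $6$ and $4$ rather than the stated $4$ and $2$ --- harmless, since the lemma is only ever used inside $\lesssim$ bounds). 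You instead exploit that $\EE[V\mid x_s]$ is the $L^2$-orthogonal projection onto $\sigma(x_s)$-measurable vectors: the split $V-S=\bigl(V-\EE[V\mid x_s]\bigr)+\bigl(1-\alpha_{t,s}^{-1}\bigr)\EE[V\mid x_s]$ is orthogonal, conditional Jensen handles the second piece, and since $W=\nabla\log p_t(\alpha_{t,s}^{-1}x_s)$ is itself $\sigma(x_s)$-measurable, the projection property gives $\EE\|V-\EE[V\mid x_s]\|^2\le\EE\|V-W\|^2$. This proves the inequality with constants $1$ and $1$, strictly sharper than the statement, so the lemma follows a fortiori; nothing changes downstream because only the orders of magnitude are used. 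Your regularity caveat matches how the lemma is actually applied (either $t\ge\delta>0$, or $t=0$ under Assumption \ref{as5}), and the projection step implicitly needs $\EE\|V\|^2<\infty$ and $W\in L^2$ --- but if these fail the right-hand side of the claimed inequality is infinite (for $t<s$), so there is nothing to prove in that case.
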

\begin{proof}
Since $x_s | x_t \sim \cN \left(\alpha_{t,s} {x}_{t}, (1-\alpha_{t,s}^2) I_d\right)$, from Lemma \ref{computing-score}, we can rewrite $\nabla \log {p}_{s}$ as 
\begin{align*}
   \nabla \log p_s(x)  = \alpha_{t,s}^{-1} \EE_{p_{t|s}(y|x)} \nabla_y \log p_t(y), 
\end{align*}
where $p_{t|s}$ is the conditional density of $x_t$ given $x_s$. 
Thus the time discretization error can be bounded by
\begin{align*}
     \EE \|\nabla \log {p}_{t}(\alpha_{t,s}^{-1}x_s) & - \nabla \log {p}_{s}({x}_{s})  \|^2  = \EE_{p_s} \left\|\alpha_{t,s}^{-1} \EE_{p_{t|s}(y|x_s)}\nabla \log {p}_t(y)  - \nabla \log {p}_t(\alpha_{t,s}^{-1}x_s) \right\|^2 \\
    & \leq \EE \|\alpha_{t,s}^{-1} \nabla \log p_t(x_t) - \nabla \log p_t(\alpha_{t,s}^{-1}x_s) \|^2  \\
    & \leq 2(1-\alpha_{t,s}^{-1})^2 \EE \|\nabla \log p_t(x_t) \|^2 + 2\EE \|\nabla \log p_t(x_t) - \nabla \log p_t(\alpha_{t,s}^{-1}x_s) \|^2.
\end{align*}
Therefore, splitting the error into the space-discretization and the time-discretization error,
\begin{align*}
  &   \EE \|\nabla \log {p}_{t}(x_t) - \nabla \log {p}_{s}(\alpha_{t,s}^{-1}{x}_{s})  \|^2 \\   & \leq 2   \EE \|\nabla \log {p}_{t}(x_t) - \nabla \log {p}_{t}(\alpha_{t,s}^{-1}{x}_{s})  \|^2 +   2 \EE \|\nabla \log {p}_{t}(\alpha_{t,s}^{-1}x_s) - \nabla \log {p}_{s}({x}_{s})  \|^2 \\ 
     & \leq 2(1-\alpha_{t,s}^{-1})^2 \EE \|\nabla \log p_t(x_t) \|^2 + 4\EE \|\nabla \log p_t(x_t) - \nabla \log p_t(\alpha_{t,s}^{-1}x_s) \|^2.
\end{align*}
We complete the proof.
\end{proof}
In Lemma \ref{time-discrete}, $(1-\alpha_{t,s}^{-1})^2 = O((s-t)^2)$ and the term $\EE \|\nabla \log p_t(x_t) \|^2$ can be bounded by Lemma \ref{score-bound}, so the space-discretization error dominates the right hand side. In what follows, we tackle the space-discretization term $\EE \|\nabla \log p_t(x_t) - \nabla \log p_t(\alpha_{t,s}^{-1}x_s) \|^2$ in various regimes. In particular:
\begin{itemize}
    \item If the score functions of the forward process is smooth, i.e., Assumption \ref{as2} holds, the space-discretization error can be directly bounded using the Lipschitz condition on $\nabla \log p_t$.
    \item In the general setting, we choose a early stopping time $t_0$ and bound the space-discretization error for $t>t_0$ by a high-probability bound on the Hessian matrix $\nabla^2 \log p_t$ and a change of measure argument, which are worked out in section~\ref{3.1}.
    \item For smooth $p_0$, we further bound the space-discretization error for small $t$ by providing a Lipschitz constant bound for $\nabla \log p_t$ when $t$ is sufficient small, which is given in section~\ref{3.2}.
\end{itemize}
\subsection{The High-probability Hessian Bound and Change of Measure} \label{3.1}
In this subsection, we establish the high-probability bound for the Hessian matrix $\nabla^2 \log p_t$ and use the high-probability bound to control the space-discretization error. This is the critical part of our analysis that allows us to prove Theorem~\ref{general}.
\begin{lemma}  \label{hessian-moment}
Let $P$ be a probability measure on $\RR^d$. Consider the density its Gaussian perturbation $p_\sigma(x) \propto \int_{\RR^d}\exp\left(-\frac{\|x-y\|^2}{2\sigma^2} \right) \d P(y)$. Then for $x \sim p_\sigma$, we have the sub-exponential norm bound
\begin{align*}
    \| \nabla^2 \log p_\sigma(x) \|_{F,\psi_1} \lesssim \frac{d}{\sigma^2},
\end{align*}
where $\|\cdot \|_{F,\psi_1} = \|\|\cdot \|_F \|_{\psi_1} $ denote the sub-exponential norm of the Frobenius norm of a random matrix.
\end{lemma}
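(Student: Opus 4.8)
The plan is to combine the classical identity expressing the derivatives of a Gaussian-convolved density in terms of a posterior (``denoising'') distribution with the observation that the relevant second moment, \emph{before} conditioning, is simply a rescaled $\chi^2_d$ variable.

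First I would write $p_\sigma = P*\gamma_\sigma$, where $\gamma_\sigma$ is the density of $\cN(0,\sigma^2 I_d)$, and introduce the posterior measure $P_x(\d y)\propto \exp\pa{-\|x-y\|^2/(2\sigma^2)}\,\d P(y)$. Differentiating under the integral sign (legitimate since the Gaussian kernel dominates all polynomial factors locally uniformly in $x$, so in fact $p_\sigma\in C^\infty(\RR^d)$), a direct computation gives
\begin{align*}
\nabla\log p_\sigma(x)=\frac1{\sigma^2}\pa{\EE_{P_x}[y]-x},\qquad
\nabla^2\log p_\sigma(x)=-\frac1{\sigma^2}I_d+\frac1{\sigma^4}\,\mathrm{Cov}_{P_x}(y),
\end{align*}
so that $\nabla^2\log p_\sigma(x)+\sigma^{-2}I_d$ is positive semidefinite. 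Since $\mathrm{Cov}_{P_x}(y)$ is positive semidefinite, its Frobenius norm is at most its trace, so the triangle inequality for $\|\cdot\|_F$ yields
\begin{align*}
\|\nabla^2\log p_\sigma(x)\|_F\le \frac{\sqrt d}{\sigma^2}+\frac1{\sigma^4}\,\mathrm{tr}\,\mathrm{Cov}_{P_x}(y)\le \frac{\sqrt d}{\sigma^2}+\frac1{\sigma^4}\,\EE_{P_x}\|y-x\|^2,
\end{align*}
using that the trace of the covariance is the mean squared deviation from the mean, which is dominated by $\EE_{P_x}\|y-x\|^2$. The first term is a constant, hence has $\psi_1$-norm $\lesssim d/\sigma^2$, so by the triangle inequality for $\|\cdot\|_{\psi_1}$ everything reduces to showing $\bigl\|\EE_{P_x}\|y-x\|^2\bigr\|_{\psi_1}\lesssim d\sigma^2$ for $x\sim p_\sigma$.

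The key step is then the following. Under the joint law ``$x\sim p_\sigma$, then $y\sim P_x$'', the pair $(x,y)$ has the same distribution as ``$y\sim P$, then $x\mid y\sim \cN(y,\sigma^2 I_d)$'', so $x-y$ is marginally $\cN(0,\sigma^2 I_d)$ and $\|x-y\|^2$ has the law of $\sigma^2\chi^2_d$. For $t>2\sigma^2$, applying Jensen's inequality to $u\mapsto e^{u/t}$ and to the conditional expectation $\EE_{P_x}\|y-x\|^2=\EE\bigl[\|x-y\|^2\bigm| x\bigr]$,
\begin{align*}
\EE_{x\sim p_\sigma}\exp\pf{\EE_{P_x}\|y-x\|^2}{t}\le \EE_{x,y}\exp\pf{\|x-y\|^2}{t}=\EE\exp\pf{\sigma^2\chi^2_d}{t}=\pa{1-\frac{2\sigma^2}{t}}^{-d/2},
\end{align*}
and choosing $t=Cd\sigma^2$ for a suitable absolute constant $C$ makes the right-hand side at most $2$. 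This gives $\bigl\|\EE_{P_x}\|y-x\|^2\bigr\|_{\psi_1}\le Cd\sigma^2$, and combining with the previous paragraph proves the claim.

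I expect the only real content to be this last step: $\EE_{P_x}\|y-x\|^2$ is a function of $x$ with no obvious tail bound on its own, but Jensen lets us trade it, under the exponential, for the \emph{unconditioned} quantity $\|x-y\|^2$, which is exactly $\sigma^2\chi^2_d$ and so has the clean moment generating function $(1-2\lambda)^{-d/2}$. The remaining ingredients---the posterior representation of $\nabla^2\log p_\sigma$, and the passage from $\|\cdot\|_F$ to trace for positive semidefinite matrices---are standard, and the differentiation-under-the-integral justification in the first step is routine.
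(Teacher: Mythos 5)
Your proof is correct and follows essentially the same route as the paper: both rest on the identity $\nabla^2\log p_\sigma(x) = -\sigma^{-2}I_d + \sigma^{-4}\mathrm{Cov}_{P_x}(y)$, domination of the (centered) covariance by the uncentered second moment of $y-x$, and the fact that under the joint law $\|x-y\|^2\sim\sigma^2\chi^2_d$. The only cosmetic difference is the finish: you verify sub-exponentiality directly via Jensen and the $\chi^2_d$ moment generating function, while the paper bounds $p$-th moments of the Frobenius norm and invokes the moment characterization of the $\psi_1$-norm.
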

\begin{proof}
Define the conditional density $\tilde{P}_\sigma(y|x)$ as $\d \tilde{P}_\sigma(y|x) \propto \exp\left(-\frac{\|y-x\|^2}{2\sigma^2} \right)\d P(y)$. Using Lemma \ref{computing-second-score}, $\nabla^2 \log p_\sigma$ can be written as
\begin{align*}
    \nabla^2 \log p_\sigma(x) = \mathrm{Var}_{\tilde{P}_\sigma(y|x)}\left(\frac{y}{\sigma^2} \right) - \frac{I_d}{\sigma^2}.
\end{align*}
For any positive integer $p$, using the fact that $\frac{y-x}{\sigma}$ is distributed as $\cN(0,I_d)$ and the power mean inequality,
\begin{align*}
    \EE_{p_\sigma(x)} \left\| \mathrm{Var}_{\tilde{P}_\sigma(y|x)} \left(\frac{y}{\sigma^2}\right)\right\|_F^p & \leq \frac{1}{\sigma^{2p}}\EE_{p_\sigma(x)} \left\|\EE_{\tilde{P}_\sigma(y|x)}\left(\frac{y-x}{\sigma}\right)\left(\frac{y-x}{\sigma}\right)^\top \right\|_F^p  \\
    & \leq \frac{1}{\sigma^{2p}} \EE_{z \sim \cN(0,I_d)}\|zz^\top \|_F^{p}. \\
    & \lesssim \left(\frac{pd}{\sigma^2}\right)^{p}.
\end{align*} 
Using the arbitrariness of $p$, we know that
$$\left\|\mathrm{Var}_{\tilde{P}_\sigma(y|x)} \left(\frac{y}{\sigma^2}\right) \right\|_{F,\psi_1} \lesssim \frac{d}{\sigma^2}.
$$
Thus by the triangle inequality,
$$\|\nabla^2 \log p_{\sigma}(x) \|_{F,\psi_1} \lesssim \frac{d}{\sigma^2}. $$
We complete the proof.
\end{proof}
\begin{lemma} \label{noLip}
There is a universal constant $K>0$ so that the following holds. 
For  $0 \leq t \leq s \leq T,\,\frac{s-t}{\sigma_t^2} \le\frac{1}{Kd}$, we have
\begin{align*}
   \EE \|\nabla \log {p}_t(x_t) - \nabla \log {p}_t(\alpha_{t,s}^{-1}x_s) \|^2 \lesssim  \frac{d^2(s-t)}{\sigma_t^4}.
\end{align*}
\end{lemma}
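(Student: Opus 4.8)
The plan is to reduce the claim to a second-moment bound on $\|\nabla^2\log p_t\|_F$ under $p_t$, which is exactly what Lemma~\ref{hessian-moment} supplies, via a fundamental-theorem-of-calculus expansion together with a change of measure. Since $x_s\mid x_t\sim\cN(\alpha_{t,s}x_t,(1-\alpha_{t,s}^2)I_d)$ we may write $\alpha_{t,s}^{-1}x_s=x_t+\tau\xi$ with $\xi\sim\cN(0,I_d)$ independent of $x_t$ and $\tau^2:=\alpha_{t,s}^{-2}-1=e^{G_{t,s}}-1$; the hypothesis $G_{t,s}/\sigma_t^2\le 1/(Kd)$ forces $G_{t,s}\le 1$, so $\tau^2\asymp G_{t,s}$ and the perturbation is small, $\tau^2\le 3\sigma_t^2/(Kd)$. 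As $p_t$ is a Gaussian convolution it is smooth and strictly positive, so
\begin{align*}
\nabla\log p_t(x_t+\tau\xi)-\nabla\log p_t(x_t)=\int_0^1\nabla^2\log p_t(x_t+u\tau\xi)\,(\tau\xi)\,\d u,
\end{align*}
and Jensen's inequality gives
\begin{align*}
\EE\bigl\|\nabla\log p_t(x_t)-\nabla\log p_t(\alpha_{t,s}^{-1}x_s)\bigr\|^2\le\tau^2\int_0^1\EE_{x_t,\xi}\bigl\|\nabla^2\log p_t(x_t+u\tau\xi)\,\xi\bigr\|^2\,\d u.
\end{align*}
Since $\tau^2\asymp G_{t,s}$, it suffices to prove $\EE_{x_t,\xi}\|\nabla^2\log p_t(x_t+u\tau\xi)\,\xi\|^2\lesssim d^2/\sigma_t^4$ uniformly for $u\in[0,1]$.

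Two facts drive the bound. First, for any fixed matrix $A$ and $\xi\sim\cN(0,I_d)$ one has $\EE_\xi\|A\xi\|^2=\|A\|_F^2$. Second, $p_t$ is a rescaled Gaussian perturbation of the data, $p_t(x)=\alpha_t^{-d}\pi_t(\alpha_t^{-1}x)$ with $\pi_t=P*\cN(0,(\sigma_t/\alpha_t)^2I_d)$, so Lemma~\ref{hessian-moment} applied to $\pi_t$ together with the chain rule give $\|\nabla^2\log p_t(x)\|_{F,\psi_1}\lesssim d/\sigma_t^2$ for $x\sim p_t$, hence $\EE_{p_t}\|\nabla^2\log p_t\|_F^2\lesssim(d/\sigma_t^2)^2$ (a sub-exponential norm bound controls the second moment). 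The obstruction to combining these is that in $\EE_{x_t,\xi}\|\nabla^2\log p_t(x_t+u\tau\xi)\xi\|^2$ the evaluation point moves with $\xi$. To remove it, note that for fixed $\xi$ the law of $x_t+u\tau\xi$ has density $p_t(\cdot-u\tau\xi)$, so
\begin{align*}
\EE_{x_t,\xi}\bigl\|\nabla^2\log p_t(x_t+u\tau\xi)\,\xi\bigr\|^2=\EE_\xi\,\EE_{x\sim p_t}\Bigl[\bigl\|\nabla^2\log p_t(x)\,\xi\bigr\|^2\,\frac{p_t(x-u\tau\xi)}{p_t(x)}\Bigr],
\end{align*}
where now $x$ and $\xi$ are independent. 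If the likelihood ratio $p_t(x-u\tau\xi)/p_t(x)$ were bounded by an absolute constant $C$, we could integrate over $\xi$ first and obtain $C\,\EE_{p_t}\|\nabla^2\log p_t\|_F^2\lesssim(d/\sigma_t^2)^2$, finishing the proof.

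The remaining and main difficulty is precisely to control this likelihood ratio; it is \emph{not} uniformly bounded because the largest eigenvalues of $\nabla^2\log p_t$ need not be. The plan is to split the expectation on a good event on which $\|\xi\|^2\lesssim d$ and $\nabla\log p_t,\nabla^2\log p_t$ are of typical magnitude along the segment from $x-u\tau\xi$ to $x$ (the relevant magnitudes being governed by Lemma~\ref{hessian-moment} and a score bound of the type of Lemma~\ref{score-bound}); writing $\log\frac{p_t(x-u\tau\xi)}{p_t(x)}=-\int_0^1\langle\nabla\log p_t(x-ru\tau\xi),u\tau\xi\rangle\,\d r$ and using $\|u\tau\xi\|\lesssim\sigma_t/\sqrt K$, one checks that on this event the exponent is controlled, the key being that the scaling $G_{t,s}/\sigma_t^2\le 1/(Kd)$ in~\eqref{condition} is exactly what keeps it so. Off the good event, which has probability exponentially small in $d$ by sub-exponential and Gaussian concentration, the integrand is only polynomially large in $d$ and $1/\sigma_t$ and contributes negligibly. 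The delicate point throughout is to keep the likelihood-ratio correction a multiplicative $O(1)$ rather than invoking the crude bound $\|A\xi\|^2\le\|A\|^2\|\xi\|^2$: the latter would replace the direction-averaged $\|A\|_F^2$ (whose $p_t$-expectation is $\lesssim (d/\sigma_t^2)^2$) by $\|A\|^2\|\xi\|^2$ and lose an extra factor of $d$, giving the wrong bound $d^3/\sigma_t^4$; preserving the identity $\EE_\xi\|A\xi\|^2=\|A\|_F^2$ is what makes the dimension come out as $d^2$, and the $1/d$ in~\eqref{condition} is calibrated precisely to this.
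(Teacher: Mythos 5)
Your reduction is sound up to the point you yourself flag as "the remaining and main difficulty": the setup (FTC expansion, $\tau^2=e^{G_{t,s}}-1\asymp G_{t,s}$, the identity $\EE_\xi\|A\xi\|^2=\|A\|_F^2$, the rescaling argument giving $\EE_{p_t}\|\nabla^2\log p_t\|_F^2\lesssim(d/\sigma_t^2)^2$ from Lemma~\ref{hessian-moment}, and the change-of-measure identity with $x\sim p_t$ independent of $\xi$) all check out, and it matches the paper's starting point. But the proposed control of the likelihood ratio does not go through as sketched. On the good event, your stated estimates give only $\bigl|\log\tfrac{p_t(x-u\tau\xi)}{p_t(x)}\bigr|\le\int_0^1\|\nabla\log p_t(x-ru\tau\xi)\|\,\|u\tau\xi\|\,\d r\lesssim\frac{\sqrt d}{\sigma_t}\cdot\frac{\sigma_t}{\sqrt K}=\sqrt{d/K}$, so the ratio is $e^{\Theta(\sqrt d)}$ rather than $O(1)$; to do better you must exploit the directional cancellation $\langle\nabla\log p_t(\cdot),\xi\rangle\approx\|\nabla\log p_t\|\cdot\|\xi\|/\sqrt d$, which is exactly what is obstructed by the fact that the score is evaluated at a point depending on $\xi$ (this is the same dependence problem you set out to remove). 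Moreover, the bad-event handling is wrong as stated: off the good event the integrand is not "polynomially large" --- the ratio $p_t(x-u\tau\xi)/p_t(x)$ can be exponentially large in $\|\xi\|^2/\sigma_t^2$ and in $\langle x,\xi\rangle/\sigma_t^2$ (already for $P$ a point mass, where $p_t$ is Gaussian), and "probability exponentially small in $d$" is not the relevant smallness (it is vacuous for constant $d$, and in general it must beat an exponentially large integrand, which requires a quantitative moment bound on the ratio, not a union-bound sketch).

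The paper avoids a pointwise ratio bound altogether: after the same FTC step it applies Cauchy--Schwarz, which splits the problem into (i) $\EE\|\nabla^2\log p_t(x_t)z_{t,s}\|^4$, computed exactly using the independence of $x_t$ and $z_{t,s}$ and Gaussian fourth moments, yielding $(e^{G_{t,s}}-1)^2\,\EE\|\nabla^2\log p_t\|_F^4\lesssim(e^{G_{t,s}}-1)^2(d/\sigma_t^2)^4$ via Lemma~\ref{hessian-moment}; and (ii) the second moment of the likelihood ratio, which by the data-processing inequality can be taken conditionally on $x_0$, where both laws are Gaussians with the same covariance $\sigma_t^2 I_d$ and shifted means, so the chi-square factor is exactly $\EE\exp\bigl(a^2\|z_{t,s}\|^2/\sigma_t^2\bigr)=\bigl(1-2a^2(e^{G_{t,s}}-1)/\sigma_t^2\bigr)^{-d/2}\lesssim1$ under $\frac{G_{t,s}}{\sigma_t^2}\le\frac1{Kd}$. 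If you want to salvage your route, replacing your pointwise/good-event step with this Cauchy--Schwarz plus conditional-Gaussian computation (paying with a fourth moment of $\|\nabla^2\log p_t\|_F$ instead of a second moment, which Lemma~\ref{hessian-moment} still supplies) is the fix; as written, the proposal has a genuine gap at its central step.
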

\begin{proof}
 We bound the difference between the value of $\nabla \log {p}_t$ at different points with the Hessian:
 \begin{align*}
  \nabla \log {p}_t({x}_t) - \nabla \log {p}_{t}(\alpha_{t,s}^{-1}x_s)  = \int_0^1 \nabla^2 \log p_t(x_t+a(\alpha_{t,s}^{-1}x_s - x_t))(\alpha_{t,s}^{-1}x_s - x_t) \d a.
\end{align*}
Thus 
\begin{align}
    \EE \|\nabla \log p_t(x_t) - \nabla \log p_t(\alpha_{t,s}^{-1}x_s) \|^2 \leq \int_0^1 \EE\left\|\nabla^2 \log p_t(x_t+az_{t,s})z_{t,s} \right\|^2 \d a, \label{122}
\end{align}
where $ z_{t,s}$ is defined by $z_{t,s} =  
\alpha_{t,s}^{-1}x_s - x_t \sim \cN\left(0,  (e^{s-t} - 1) I_d \right)$ and is independent of $x_t$. For random vectors $X,Y$, we use $P_{X,Y}$ to denote the joint probability measure of $(X,Y)$ and $P_{X|Y}$ to denote the conditional probability measure of $X$ given $Y$. Then for $0\le a\le 1$, we use change of measure to bound $\EE\left\|\nabla^2 \log p_t(x_t+az_{t,s})z_{t,s} \right\|^2$:
\begin{align} \label{123}
\begin{aligned}
 \EE\left\|\nabla^2 \log p_t(x_t+az_{t,s})z_{t,s} \right\|^2 &=  \EE \left[\left\|\nabla^2 \log p_t(x_t)z_{t,s} \right\|^2 \frac{\d P_{x_t+az_{t,s},z_{t,s}}(x_t,z_{t,s})}{\d P_{x_t,z_{t,s}}(x_t,z_{t,s})} \right] \\
 & \lesssim \left(\EE \left\|\nabla^2 \log p_t(x_t)z_{t,s} \right\|^4  \EE \left(\frac{\d P_{x_t+az_{t,s},z_{t,s}}(x_t,z_{t,s})}{\d P_{x_t,z_{t,s}}(x_t,z_{t,s})}  \right)^2 \right)^{1/2}. 
 \end{aligned}
\end{align}
Let $M_t =  \nabla^2 \log p_t(x_t)( \nabla^2 \log p_t(x_t))^\top,\,Z_{t,s} = z_{t,s} z_{t,s}^\top$. For $A,B \in \RR^{d\times d} $, define the tensor product $A \otimes B \in (\RR^d)^{\otimes 4}$ as $(A \otimes B)_{i_1,i_2,i_3,i_4} = A_{i_1i_2}B_{i_3i_4}$. Since $M_t$ and $Z_{t,s}$ are independent, the first factor in \eqref{123} can be written as
\begin{align*}
    \EE  \left\|\nabla^2 \log p_t(x_t)z_{t,s} \right\|^4  & = \EE\left[ \mathrm{Tr}\left(M_t^\top  Z_{t,s}\right)^2\right] \\
    & = \EE \langle M_t \otimes M_t, Z_{t,s} \otimes Z_{t,s} \rangle  \\
    & =  \langle\EE M_t \otimes M_t,  \EE Z_{t,s} \otimes Z_{t,s} \rangle.
\end{align*}
Notice that 
\begin{align*}
    \EE(Z_{t,s} \otimes Z_{t,s})_{i_1,i_2,i_3,i_4} = 
\begin{cases}
 3(e^{s-t} - 1)^2 ,\quad & i_1=i_2=i_3=i_4, \\
(e^{s-t} - 1)^2  ,\quad & i_1\neq i_2,\,(i_1,i_2)=(i_3,i_4)\, \text{or}\, (i_1,i_2)=(i_4,i_3), \\
0, \quad \text{else}.
\end{cases} 
\end{align*}
So we can bound the inner product by
\begin{align*}
    \langle\EE M_t \otimes M_t,  \EE Z_{t,s} \otimes Z_{t,s} \rangle & \lesssim (e^{s-t} - 1)^2 \left(\sum_{(i_1,i_2) =(i_3,i_4) } +\sum_{(i_1,i_2) =(i_4,i_3) }  \right) \EE(M_t \otimes M_t)_{i_1,i_2,i_3,i_4} \\
&  \lesssim (e^{s-t} - 1)^2  \sum_{(i_1,i_2) =(i_3,i_4) }  \EE(M_t \otimes M_t)_{i_1,i_2,i_3,i_4}  \\
&  \lesssim (e^{s-t} - 1)^2 \EE \|M_t\|_F^2 \\
& \lesssim  (e^{s-t}-1)^2  \EE \|\nabla^2 \log p_t(x_t) \|_F^4 \\
    & \lesssim (e^{s-t}-1)^2 \left(\frac{d}{\sigma_t^2}\right)^4.
\end{align*}
where the last inequality comes from Lemma \ref{hessian-moment}. Next, we bound the second term in \eqref{123}. By the data processing inequality, 
\begin{align*}
 \EE \left(\frac{\d P_{x_t+az_{t,s},z_{t,s}}(x_t,z_{t,s})}{\d P_{x_t,z_{t,s}}(x_t,z_{t,s})}  \right)^2 & = \EE \left(\frac{\d P_{x_t+az_{t,s}|z_{t,s}}(x_t|z_{t,s})}{\d P_{x_t|z_{t,s}}(x_t|z_{t,s})}  \right)^2 \\
 & \leq \EE \left(\frac{\d P_{x_t+az_{t,s}|z_{t,s},x_0}(x_t|z_{t,s},x_0)}{\d P_{x_t|z_{t,s},x_0}(x_t|z_{t,s},x_0)}  \right)^2 \\
 & = \EE \left(\frac{\d P_{x_t+az_{t,s}|z_{t,s},x_0}(x_t|z_{t,s},x_0)}{\d P_{x_t|x_0}(x_t|x_0)}  \right)^2.
 \end{align*}
 Notice that $x_t+az_{t,s} |(z_{t,s},x_0) \sim \cN(\alpha_t^{-1}x_0 + az_{t,s} , \sigma_t^2 I_d)$ and $x_t |x_0 \sim \cN(\alpha_t^{-1}x_0 , \sigma_t^2I_d )$. We can compute the chi-squared divergence explicitly:
 \begin{align*}
     \EE \left(\frac{\d P_{x_t+az_{t,s}|z_{t,s},x_0}(x_t|z_{t,s},x_0)}{\d P_{x_t|x_0}(x_t|x_0)}  \right)^2 = \EE \exp\left(\frac{a^2\|z_{t,s} \|^2 }{\sigma_t^2} \right)
 \end{align*}
 Finally, the condition $\frac{s-t}{\sigma_t^2} \le \frac{1}{Kd}$ implies $e^{s-t} -1 \lesssim s-t$ and $\frac{e^{s-t} -1}{\sigma_t^2} \lesssim \frac{1}{Kd}$. Thus for large enough $K$(actually, $K=1$ is enough),
 $$\EE \exp\left(\frac{a^2\|z_{t,s}\|^2}{\sigma_t^2} \right) = \left(1 - 2\frac{a^2 (e^{s-t} - 1)}{\sigma_t^2} \right)^{-d/2} \lesssim 1.$$
 Combining the bound for the first and the second terms of \eqref{123}, we conclude that
 \begin{align}
     \EE \|\nabla^2 \log p_t(x_t+az_{t,s})z_{t,s} \|^2 \lesssim \frac{d^2(s-t)}{\sigma_t^4}.\label{124}
 \end{align}
 Plugging \eqref{124} into \eqref{122}, we complete the proof. 
 \end{proof}
 \subsection{Stability of the Lipschitz Constant} \label{3.2}
In this subsection, we show that if $p_0$ satisfies the smoothness condition, $p_t$ is also smooth for sufficiently small $t$. In particular, under Assumption \ref{as5}, we can choose $t_0 \asymp \frac{1}{L}$ and an absolute constant $C$ such that for any $ 0\leq t \leq t_0$, the Lipschitz constant of $\nabla \log p_t$ is bounded by $CL$.
\begin{lemma}  \label{lownoise}
Suppose that Assumption \ref{as5} holds. If $\sigma_t^2 \leq \frac{\alpha_t}{2L}$, we have $\nabla \log p_t$ is $2L\alpha_t^{-1}$-Lipschitz on $\RR^d$.
\end{lemma}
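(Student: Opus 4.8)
The plan is to reduce the Lipschitz bound on $\nabla \log p_t$ to a two-sided spectral bound on the covariance of the posterior of $x_0$ given $x_t$. Since $x_t \mid x_0 \sim \cN(\alpha_t x_0, \sigma_t^2 I_d)$, this posterior $\mu_x$ has density proportional to $p_0(y)\exp\left(-\frac{\alpha_t^2}{2\sigma_t^2}\|y - x/\alpha_t\|^2\right)$, and the second-order Tweedie identity (Lemma~\ref{computing-second-score}, applied after the rescaling $x_0 \mapsto \alpha_t x_0$ that puts the Gaussian convolution into the standard form) gives
\[
\nabla^2 \log p_t(x) = \frac{\alpha_t^2}{\sigma_t^4}\,\mathrm{Cov}_{\mu_x}(x_0) - \frac{1}{\sigma_t^2} I_d.
\]
It therefore suffices to sandwich $\mathrm{Cov}_{\mu_x}(x_0)$ between multiples of $I_d$, uniformly in $x$, and read off the resulting bound on $\|\nabla^2 \log p_t(x)\|$.

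The key input is Assumption~\ref{as5}, which gives $-L I_d \preceq \nabla^2 \log p_0 \preceq L I_d$. Hence the log-density of $\mu_x$ has Hessian $\nabla^2 \log p_0(y) - \frac{\alpha_t^2}{\sigma_t^2} I_d$, lying between $-\left(L + \frac{\alpha_t^2}{\sigma_t^2}\right) I_d$ and $\left(L - \frac{\alpha_t^2}{\sigma_t^2}\right) I_d$. The hypothesis $\sigma_t^2 \le \frac{\alpha_t}{2L}$, together with the identity $\sigma_t^2 = 1 - \alpha_t^2$ (which keeps $\alpha_t$ bounded below), guarantees $\frac{\alpha_t^2}{\sigma_t^2} > L$, so $\mu_x$ is strongly log-concave and the Brascamp--Lieb inequality yields $\mathrm{Cov}_{\mu_x}(x_0) \preceq \left(\frac{\alpha_t^2}{\sigma_t^2} - L\right)^{-1} I_d = \frac{\sigma_t^2}{\alpha_t^2 - L\sigma_t^2} I_d$. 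For the matching lower bound I would use the integration-by-parts (Cram\'er--Rao) estimate $\mathrm{Cov}_{\mu_x}(x_0) \succeq \left(\EE_{\mu_x} \nabla^2(-\log \mu_x)\right)^{-1} \succeq \frac{\sigma_t^2}{\alpha_t^2 + L\sigma_t^2} I_d$, which needs only the upper smoothness bound and the $C^2$ regularity.

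Plugging both estimates into the Tweedie identity gives, for every $x$, $-\frac{L}{\alpha_t^2 + L\sigma_t^2} I_d \preceq \nabla^2 \log p_t(x) \preceq \frac{L}{\alpha_t^2 - L\sigma_t^2} I_d$, hence $\|\nabla^2 \log p_t(x)\| \le \frac{L}{\alpha_t^2 - L\sigma_t^2}$. Finally, the hypothesis gives $L\sigma_t^2 \le \frac{\alpha_t}{2}$, and combining this with $\sigma_t^2 = 1 - \alpha_t^2$ in a short elementary estimate of the denominator reduces the bound to the claimed $2L\alpha_t^{-1}$, i.e.\ $\nabla \log p_t$ is $2L\alpha_t^{-1}$-Lipschitz.

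The main obstacle is that Assumption~\ref{as5} provides only smoothness of $p_0$, not log-concavity, so the posterior $\mu_x$ is not automatically log-concave; making the Gaussian factor dominate so as to restore strong log-concavity (so that Brascamp--Lieb applies with a useful constant) is precisely what forces the small-noise hypothesis $\sigma_t^2 \lesssim 1/L$. Relatedly, the lower bound on $\nabla^2 \log p_t$ genuinely requires the two-sided smoothness of $\log p_0$: the trivial bound $\nabla^2 \log p_t \succeq -\sigma_t^{-2} I_d$ coming from $\mathrm{Cov}_{\mu_x}(x_0) \succeq 0$ is useless here, since the hypothesis makes $\sigma_t^{-2} \ge 2L\alpha_t^{-1}$. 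A minor technical point is justifying differentiation under the integral and the validity of the Tweedie and Cram\'er--Rao identities, which follows from the $C^2$ assumption and the Gaussian tails of the transition kernel.
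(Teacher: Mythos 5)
Your route is genuinely different from the paper's, though it runs on the same mechanism. The paper rescales to $q(x)\propto p_0(\alpha_t^{-1}x)$, uses the first form of the second-order identity, $\nabla^2\log p_t(x)=\EE_{\tilde q_{\sigma_t}(y|x)}\nabla^2\log q(y)+\mathrm{Var}_{\tilde q_{\sigma_t}(y|x)}(\nabla\log q(y))$ (Lemma~\ref{computing-second-score}), and kills the variance term with a Poincar\'e inequality for the strongly log-concave conditional law (Lemma~\ref{poin}); the lower bound on the Hessian then comes for free from $\mathrm{Var}\succeq 0$. You instead work with the posterior of $x_0$ given $x_t$, use the covariance form $\nabla^2\log p_t(x)=\frac{\alpha_t^2}{\sigma_t^4}\mathrm{Cov}_{\mu_x}(x_0)-\sigma_t^{-2}I_d$, and sandwich the posterior covariance by Brascamp--Lieb above and Cram\'er--Rao below. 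Both arguments hinge on the Gaussian factor making the posterior strongly log-concave; yours avoids the variance-of-score step and gives genuinely two-sided control of $\mathrm{Cov}_{\mu_x}(x_0)$, at the price of invoking Cram\'er--Rao where the paper needs nothing (with the covariance form, $\mathrm{Cov}\succeq 0$ is indeed useless, as you note).

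However, the quantitative endgame does not close as you state it. First, the hypothesis $\sigma_t^2\le\frac{\alpha_t}{2L}$ does not by itself give $\frac{\alpha_t^2}{\sigma_t^2}>L$: your parenthetical that $\sigma_t^2=1-\alpha_t^2$ ``keeps $\alpha_t$ bounded below'' fails for small $L$ (take $\alpha_t=0.3$, $L=0.15$: the hypothesis holds but $\alpha_t^2/\sigma_t^2\approx0.099<L$), so strong log-concavity, hence Brascamp--Lieb, may not apply. Second, even in the strongly log-concave regime your bound is $\|\nabla^2\log p_t\|\le\frac{L}{\alpha_t^2-L\sigma_t^2}$, and $\frac{L}{\alpha_t^2-L\sigma_t^2}\le 2L\alpha_t^{-1}$ requires $L\sigma_t^2\le\alpha_t(\alpha_t-\tfrac12)$, which does not follow from $L\sigma_t^2\le\frac{\alpha_t}{2}$; when the hypothesis is tight you only get $\frac{L}{\alpha_t(\alpha_t-1/2)}$. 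Both defects vanish once $\alpha_t$ is bounded away from $\tfrac12$, which the hypothesis does force when $L\ge1$ (then $1-\alpha_t^2\le\frac{\alpha_t}2$ gives $\alpha_t\ge0.78$), i.e.\ exactly the regime in which Theorem~\ref{truncating} invokes the lemma and where only an $O(L\alpha_t^{-1})$ constant matters. To be fair, the paper's own proof has the same soft spot: it asserts $\nabla\log q$ is $\alpha_t^{-1}L$-Lipschitz while the rescaling actually gives $\alpha_t^{-2}L$, and with the corrected constant its argument also delivers $2L\alpha_t^{-1}$ only for $\alpha_t$ bounded away from $\tfrac12$; indeed for small $L$ and small $\alpha_t$ the stated constant can fail outright (e.g.\ $p_0=\cN(0,100\,I_d)$, $\alpha_t=0.1$: the hypothesis holds with $L=0.01$, yet $\|\nabla^2\log p_t\|\approx0.5>2L\alpha_t^{-1}=0.2$). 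So: a valid alternative machinery and essentially the right proof, but the final constant-chasing step needs the extra input $\alpha_t$ bounded below (e.g.\ $L\ge1$) to be rigorous, and you should state that restriction rather than attribute it to $\sigma_t^2=1-\alpha_t^2$ alone.
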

\begin{proof}
Define a density $q(x) \propto p_0(\alpha_t^{-1} x)$. Then $\nabla \log q$ is $\alpha_t^{-1}L $-Lipschitz. Notice that $p_t$ is the Gaussian perturbation of $q$. Using Lemma \ref{computing-second-score}, we write the second-order score function of $p_t$ as
\begin{align*}
    \nabla^2 \log p_t(x) = \EE_{\tilde{q}_{\sigma_t} (y|x)} \nabla^2 \log q(y) + \mathrm{Var}_{\tilde{q}_{\sigma_t} (y|x)} (\nabla \log q(y)),
\end{align*}
where $\tilde{q}_{\sigma_t}(y|x)$ is the conditional density given by $\tilde{q}_{\sigma_t}(y|x) \propto q(y)\exp\left(\frac{\|x-y\|^2}{2\sigma_t^2} \right)$.
 When $\sigma_t^2 \leq \frac{\alpha_t}{2L}$, the conditional density satisfies $\log \tilde{q}_{\sigma_t}(y|x) = -\frac{y-x}{\sigma_t^2} +  \log q$ is $L\alpha_t^{-1} $-strongly concave, thus it satisfies the Poincar\'e inequality with a constant $\alpha_t L^{-1}$. From Lemma \ref{poin}, we obtain
    \begin{align*}
        \mathrm{Var}_{\tilde{q}_{\sigma_t}(y|x)}(\nabla \log q(y))    \preceq  \alpha_t L^{-1} \EE_{\tilde{q}_{\sigma_t}(y|x)} (\nabla^2 \log q(y))(\nabla^2 \log q(y))^\top \preceq L\alpha_t^{-1}I_d.  
    \end{align*}
Therefore, we have 
\begin{align*}
    \EE_{\tilde{q}_{\sigma_t} (y|x)} \nabla^2 \log q(y) + \mathrm{Var}_{\tilde{q}_{\sigma_t}(y|x)}(\nabla \log q(y)) 
 \preceq 2L\alpha_t^{-1} I_d.
\end{align*}
Meanwhile,
\begin{align*}
     \EE_{\tilde{q}_{\sigma_t} (y|x)}  \nabla^2 \log q(y) + \mathrm{Var}_{\tilde{q}_{\sigma_t}(y|x)}(\nabla \log q(y)) \succeq -L\alpha_t^{-1}I_d
\end{align*}
we complete the proof.
\end{proof}
\begin{lemma}\label{poin}
Let $P$ be a probability distribution on $\RR^d$ that satisfies a Poincar\'e inequality with constant $C_P$. For any function $f \in C^2(\mathrm{supp}(P))$, we have
\begin{align*}
   \mathrm{Var}_P(\nabla f)    \preceq C_P \EE_P (\nabla^2 f) (\nabla^2 f)^\top.
\end{align*}
\end{lemma}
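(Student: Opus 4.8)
The plan is to reduce the asserted matrix inequality to a one-parameter family of scalar Poincar\'e inequalities. Recall that for symmetric matrices $A,B$ one has $A \preceq B$ iff $v^\top A v \le v^\top B v$ for every $v \in \RR^d$ with $\|v\|=1$. So I would fix such a unit vector $v$ and set $g := \langle v, \nabla f\rangle = v^\top \nabla f$, a scalar-valued function which is $C^1$ on $\mathrm{supp}(P)$ because $f \in C^2$.

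First I would rewrite the left-hand side tested against $v$: by linearity of expectation, $v^\top \mathrm{Var}_P(\nabla f)\, v = \mathrm{Var}_P(v^\top \nabla f) = \mathrm{Var}_P(g)$. Next, using that the Hessian $\nabla^2 f$ is symmetric, $\nabla g = \nabla(v^\top \nabla f) = (\nabla^2 f)\,v$, hence $\|\nabla g\|^2 = v^\top (\nabla^2 f)^\top (\nabla^2 f)\, v = v^\top (\nabla^2 f)(\nabla^2 f)^\top v$. Applying the Poincar\'e inequality to $g$ then gives
\begin{align*}
 v^\top \mathrm{Var}_P(\nabla f)\, v = \mathrm{Var}_P(g) \le C_P\, \EE_P \|\nabla g\|^2 = C_P\, v^\top \big( \EE_P (\nabla^2 f)(\nabla^2 f)^\top \big) v.
\end{align*}
Since $v$ was an arbitrary unit vector, this is exactly $\mathrm{Var}_P(\nabla f) \preceq C_P\, \EE_P (\nabla^2 f)(\nabla^2 f)^\top$.

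The computation is essentially immediate, so there is no substantive obstacle here; the one point deserving a word of care is that $g = v^\top \nabla f$ be an admissible test function in the Poincar\'e inequality and that the quantities involved be finite. This is ensured by the $C^2$ hypothesis on $f$ (so $g \in C^1$), and if $\EE_P \|\nabla^2 f\|_F^2 = \infty$ the inequality is vacuous; otherwise $g$ lies in the relevant Sobolev space and the Poincar\'e inequality applies directly. (If one wished to be fully rigorous about integrability without extra assumptions, one could first prove the bound with $\nabla^2 f$ replaced by a truncation and pass to the limit by monotone convergence, but this is not needed under the stated smoothness.)
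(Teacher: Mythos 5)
Your proof is correct and follows essentially the same route as the paper: test the matrix inequality against a fixed vector, identify $v^\top \mathrm{Var}_P(\nabla f)\,v$ with the variance of the scalar function $v^\top\nabla f$, and apply the scalar Poincar\'e inequality using $\nabla(v^\top\nabla f)=(\nabla^2 f)v$. The only (harmless) cosmetic difference is that the paper writes the first step as an inequality where you correctly note it is an equality, and your added remark on admissibility/integrability is a reasonable extra precaution not present in the paper.
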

\begin{proof}
For any vector $a \in \RR^d$, we have
\begin{align*}
    a^\top  \mathrm{Var}_P(\nabla f) a & \leq  \mathrm{Var}_P(a^\top \nabla f)  \\
    & \leq C_P \EE_P  \left\|\nabla (a^\top \nabla f) \right\|^2 \\
    & = C_P \EE_P \left\|(\nabla^2 f) a\right\|^2 \\
    & = C_P a^\top \EE_P (\nabla^2 f)(\nabla^2 f)^\top a. 
\end{align*}
We complete the proof.
\end{proof}
\section{Proofs for the Main Theorems}
\label{s:proofs}
Now we follow the discussion in Section \ref{overview} and combine everything together to complete the proof of our main theorems stated in Section \ref{Main-result}.
\subsection{Proof of Theorem \ref{smoothbound}} \label{proofsmoothbound}
\begin{lemma} \label{discrete-Lip}
For $t_{k-1} \leq t \leq t_k $, suppose that $\nabla \log p_t$ is $L$-Lipschitz for $t_{k-1} \leq t \leq t_k$. If $L \geq 1, h_k \leq 1$, we have
\begin{align*} 
     \EE \|\nabla \log p_t(x_t) - \nabla \log p_{t_k}(x_{t_k}) \|^2  \lesssim dL^2 (t_k-t)
\end{align*}
\end{lemma}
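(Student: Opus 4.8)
The plan is to apply Lemma~\ref{time-discrete} with $s=t_k$, which reduces the claim to bounding the \emph{space-discretization} term $\EE\|\nabla\log p_t(x_t) - \nabla\log p_t(\alpha_{t,t_k}^{-1}x_{t_k})\|^2$ and the \emph{time-discretization} term $\EE\|\nabla\log p_t(x_t)\|^2(1-\alpha_{t,t_k}^{-1})^2$, then adding the two.

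For the space-discretization term I would use the $L$-Lipschitzness of $\nabla\log p_t$ directly, so that it is at most $L^2\,\EE\|x_t - \alpha_{t,t_k}^{-1}x_{t_k}\|^2$. By the explicit conditional law of the forward OU process, $\alpha_{t,t_k}^{-1}x_{t_k}-x_t$ is exactly the Gaussian vector $z_{t,t_k}\sim\cN(0,(e^{G_{t,t_k}}-1)I_d)$ appearing in Lemma~\ref{noLip}, so its expected squared norm is $d(e^{G_{t,t_k}}-1)$. Since $G_{t,t_k}\le G_k\le 1$, we have $e^{G_{t,t_k}}-1\lesssim G_{t,t_k}$, whence this term is $\lesssim dL^2 G_{t,t_k}$.

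For the time-discretization term I would bound $(1-\alpha_{t,t_k}^{-1})^2=(e^{G_{t,t_k}/2}-1)^2\lesssim G_{t,t_k}^2\le G_{t,t_k}$, again using $G_{t,t_k}\le 1$, and bound $\EE\|\nabla\log p_t(x_t)\|^2\lesssim dL$. The latter follows from Lemma~\ref{score-bound} together with the Lipschitz hypothesis (equivalently, from the identity $\EE_{p_t}\|\nabla\log p_t\|^2=\EE_{p_t}\mathrm{Tr}(-\nabla^2\log p_t)$ and $\nabla^2\log p_t\succeq -LI_d$). Hence this term is $\lesssim dL\, G_{t,t_k}\le dL^2 G_{t,t_k}$ by $L\ge 1$, and combining the two estimates gives the claim. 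I do not anticipate a genuine obstacle here; the only points requiring care are making the score-norm bound $\EE\|\nabla\log p_t(x_t)\|^2\lesssim dL$ precise under the Lipschitz assumption, and systematically using $L\ge 1$ and $G_k\le 1$ to absorb the higher powers of $G_{t,t_k}$ that appear.
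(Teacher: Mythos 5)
Your proposal is correct and follows essentially the same route as the paper's proof: decompose via Lemma~\ref{time-discrete}, bound the space term by Lipschitzness of $\nabla\log p_t$ together with $\EE\|x_t-\alpha_{t,t_k}^{-1}x_{t_k}\|^2 = d(e^{G_{t,t_k}}-1)\lesssim dG_{t,t_k}$, and bound the time term using Lemma~\ref{score-bound} (giving $\EE\|\nabla\log p_t(x_t)\|^2\le dL$) and $(1-\alpha_{t,t_k}^{-1})^2\lesssim G_{t,t_k}^2$, absorbing everything into $dL^2G_{t,t_k}$ via $L\ge 1$, $G_k\le 1$. The only cosmetic difference is the order in which you absorb the powers of $G_{t,t_k}$ and $L$, which does not affect the argument.
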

\begin{proof}
The space-discretization error is easily bounded by the Lipschitz condition:
\begin{align}\label{linear2}
\begin{aligned}  
    \EE \|\nabla \log p_t(x_t) - \nabla \log p_t(\alpha_{t,t_k}^{-1}x_{t_k}) \|^2 & \leq dL^2 \EE\|x_t -\alpha_{t,t_k}^{-1}x_{t_k} \|^2 \\ &=  dL^2(e^{t_k-t} - 1)  \\
    & \lesssim dL^2(t_k-t),                   
\end{aligned}
\end{align}
where the last inequality is because of $t_k-t \lesssim 1 $. Combining Lemma \ref{time-discrete}, Lemma \ref{score-bound}, and \eqref{linear2}, we have
\begin{align*} 
  &  \EE \|\nabla \log p_t(x_t) - \nabla \log p_{t_k}(x_{t_k}) \|^2 \\ &\lesssim    \EE \|\nabla \log p_t(x_t) - \nabla \log p_{t}(\alpha_{t,t_k}^{-1}x_{t_k}) \|^2 + \EE \|\nabla \log p_t(x_t) \|^2(1-\alpha_{t,t_k}^{-1})^2  \\
    & \lesssim dL^2 (t_k-t) + dL (t_k-t)^2 \\
    & \lesssim dL^2 (t_k-t).
\end{align*}
We complete the proof.
\end{proof}
\paragraph*{Proof of Theorem \ref{smoothbound}.}
 As shown in Section \ref{overview}, the extra terms arising in the discretization error of Euler-Maruyama scheme can be bounded by Lemma \ref{linear-discretization}, so we only need to consider the exponential integrator scheme. 
 By Proposition \ref{Girsanov}, we can bound the KL divergence between $p_0$ and $\hat{q}_T$ by 
 \begin{align} \label{Girsanov3}
\mathrm{KL}(p_0 \| \hat{q}_T) \lesssim \mathrm{KL}(p_T\| \gamma_d) + T\epsilon_0^2 + \sum_{k=1}^N \int_{t_{k-1}}^{t_k} \EE \|\nabla \log p_t(x_t) - \nabla \log p_{t_k}(x_{t_k}) \|^2 \d t.
 \end{align}
The first term in \eqref{Girsanov3} is bounded by Lemma \ref{converge-forward}. Then, we apply Lemma \ref{discrete-Lip} to bound the discretization error:
\begin{align*}
& \sum_{k=1}^N \int_{t_{k-1}}^{t_k}\EE \|\nabla \log p_t(x_t) - \nabla \log p_{t_k}(x_{t_k}) \|^2 \d t \\
& \lesssim \sum_{k=1}^N dL^2 \int_{t_{k-1}}^{t_k}(t_k-t) \d t \\
& \lesssim dL^2 \sum_{k=1}^N h_k^2.
\end{align*}
For uniform discretization, the above quantity is $\frac{dT^2L^2}{N}$. We complete the proof.  \qed

\subsection{Proof of Theorem \ref{general}}
\begin{lemma} \label{early-discrete}
There is a constant $K$ such that the following holds. 
In the early stopping setting, suppose that the variance function $g$ satisfies $\frac{h_k}{\sigma_{t_{k-1}}^2} \le \frac{1}{Kd} $ for any integer $1 \leq k \leq N$. Then we have
 \begin{align*}
    \sum_{k=1}^N \int_{t_{k-1}}^{t_k}  \EE \|\nabla \log p_t(x_t) - \nabla \log p_{t_k}(x_{t_k}) \|^2 \d t \lesssim d^2\sum_{k=1}^N\frac{h_k^2}{\sigma_{t_{k-1}}^4} 
 \end{align*}
\end{lemma}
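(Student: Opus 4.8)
The plan is to combine the three discretization lemmas already established: Lemma \ref{time-discrete} reduces the full time-plus-space discretization error to a space-discretization term plus a $(1-\alpha_{t,t_k}^{-1})^2 \EE\|\nabla\log p_t(x_t)\|^2$ term; Lemma \ref{noLip} controls the space-discretization term $\EE\|\nabla\log p_t(x_t) - \nabla\log p_t(\alpha_{t,t_k}^{-1}x_t)\|^2$ by $d^2 G_{t,t_k}/\sigma_t^4$; and Lemma \ref{score-bound} (stated earlier) bounds $\EE\|\nabla\log p_t(x_t)\|^2$. First I would fix $1\le k\le N$ and $t\in[t_{k-1},t_k]$, and apply Lemma \ref{time-discrete} with $s=t_k$ to get
\begin{align*}
\EE\|\nabla\log p_t(x_t) - \nabla\log p_{t_k}(x_{t_k})\|^2 \lesssim \EE\|\nabla\log p_t(x_t) - \nabla\log p_t(\alpha_{t,t_k}^{-1}x_{t_k})\|^2 + (1-\alpha_{t,t_k}^{-1})^2\,\EE\|\nabla\log p_t(x_t)\|^2.
\end{align*}

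Next I would verify the hypothesis of Lemma \ref{noLip}: for $t\in[t_{k-1},t_k]$ we have $\sigma_t^2 \ge \sigma_{t_{k-1}}^2$ and $G_{t,t_k}\le G_k$, so $G_{t,t_k}/\sigma_t^2 \le G_k/\sigma_{t_{k-1}}^2 \le \frac{1}{Kd}$, which is exactly the required smallness condition. Hence the space-discretization term is $\lesssim d^2 G_{t,t_k}/\sigma_t^4 \le d^2 G_k/\sigma_{t_{k-1}}^4$. For the second term, Lemma \ref{score-bound} gives $\EE\|\nabla\log p_t(x_t)\|^2 \lesssim d/\sigma_t^2$ (the natural scaling of the score at noise level $\sigma_t$), and $(1-\alpha_{t,t_k}^{-1})^2 \lesssim G_{t,t_k}^2 \le G_k^2$ since $G_k \lesssim 1$ (which follows from $G_k/\sigma_{t_{k-1}}^2\le \frac1{Kd}$ and $\sigma_{t_{k-1}}^2\le 1$); thus this term is $\lesssim d G_k^2/\sigma_{t_{k-1}}^2$, which is dominated by $d^2 G_k/\sigma_{t_{k-1}}^4$ because $G_k/\sigma_{t_{k-1}}^2 \le \frac1{Kd}\le 1 \le d$. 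So pointwise in $t$,
\begin{align*}
\EE\|\nabla\log p_t(x_t) - \nabla\log p_{t_k}(x_{t_k})\|^2 \lesssim \frac{d^2 G_k}{\sigma_{t_{k-1}}^4}.
\end{align*}

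Finally I would integrate in $t$ over $[t_{k-1},t_k]$ against $g(t)^2$, picking up a factor $\int_{t_{k-1}}^{t_k} g(t)^2\,dt = G_k$, giving $\lesssim d^2 G_k^2/\sigma_{t_{k-1}}^4$ for the $k$-th block, and then sum over $k=1,\dots,N$ to obtain $d^2\sum_{k=1}^N G_k^2/\sigma_{t_{k-1}}^4$, as claimed. The only genuinely delicate point is the bookkeeping of which quantities to evaluate at $t$ versus at the endpoints $t_{k-1},t_k$ so that monotonicity ($\sigma_t$ increasing, $G_{t,t_k}\le G_k$) is used in the correct direction; everything else is a routine chain of the cited lemmas. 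I should also double-check that Lemma \ref{score-bound} indeed delivers the $d/\sigma_t^2$ bound in the form needed here; if instead it only gives something like $d/\sigma_t^2 + \EE\|x_t\|^2$ or similar, the extra terms are lower order under the stated assumptions and get absorbed the same way.
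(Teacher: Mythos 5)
Your proposal is correct and follows essentially the same route as the paper: Lemma \ref{time-discrete} to split off the time-discretization term, Lemma \ref{noLip} (whose hypothesis you verify via $G_{t,t_k}/\sigma_t^2 \le G_k/\sigma_{t_{k-1}}^2 \le \frac{1}{Kd}$) for the space term, Lemma \ref{score-bound} (its Gaussian-perturbation case, giving $d/\sigma_t^2$) for the score moment, then integration and summation. Your absorption of the $(1-\alpha_{t,t_k}^{-1})^2$ term is slightly different in bookkeeping from the paper's (which notes $(1-\alpha_{t,t_k}^{-1})^2/\sigma_t^2 \lesssim G_{t,t_k}/d$) but is equivalent and valid.
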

\begin{proof}
By Lemma \ref{time-discrete} and Lemma \ref{score-bound}, we have
\begin{align}
\begin{aligned} \label{555}
  &  \EE \|\nabla \log p_t(x_t) - \nabla \log p_{t_k}(x_{t_k}) \|^2 \\ & \lesssim    \EE \|\nabla \log p_t(x_t) - \nabla \log p_{t}(\alpha_{t,t_k}^{-1}x_{t_k}) \|^2 + \EE \|\nabla \log p_t(x_t) \|^2(1-\alpha_{t,t_k}^{-1})^2 \\  
    & \lesssim  \EE \|\nabla \log p_t(x_t) - \nabla \log p_{t}(\alpha_{t,t_k}^{-1}x_{t_k}) \|^2 + \frac{d(1-\alpha_{t,t_k}^{-1})^2}{\sigma_t^2}.
\end{aligned}
\end{align}
From Lemma \ref{noLip} we have
\begin{align} \label{666}
    \EE \|\nabla \log p_t(x_t) - \nabla \log p_t(\alpha_{t,t_k}^{-1}x_{t_k}) \|^2 \lesssim \frac{d^2(t_k-t)}{\sigma_t^4}.
\end{align}
Noticing that $\frac{h_k}{\sigma_{t_{k-1}}^2} \lesssim \frac{1}{d}$ implies $\frac{(1-\alpha_{t,t_k}^{-1})^2}{\sigma_t^2} \lesssim  \frac{t_k-t}{d}$ and combining this with \eqref{555} and \eqref{666}, we conclude that
\begin{align*}
    \EE \|\nabla \log p_t(x_t) - \nabla \log p_{t_k}(x_{t_k}) \|^2 \lesssim \frac{d^2(t_k-t)}{\sigma_t^4}.
\end{align*}
Therefore,
\begin{align*}
 & \sum_{k=1}^N \int_{t_{k-1}}^{t_k} \EE \|\nabla \log p_t(x_t) - \nabla \log p_{t_k}(x_{t_k}) \|^2 \d t \\ & \lesssim \sum_{k=1}^N  \int_{t_{k-1}}^{t_k} \frac{d^2(t_k-t)}{\sigma_t^4}\d t\\  & \lesssim  \sum_{k=1}^N  \int_{t_{k-1}}^{t_k} \frac{d^2(t_k-t)}{\sigma_{t_{k-1}}^4}\d t \\ &  \lesssim d^2\sum_{k=1}^N \frac{h_k^2}{\sigma_{t_{k-1}}^4}.
\end{align*}
We complete the proof.
\end{proof}

 \begin{lemma}\label{quant-exp}
If $K\ge 2$, $c\le \frac{1}{Kd}$, $t_0=\de$, $t_N=T$, and
$h_k:=t_k-t_{k-1}=c\min\{t_k,1\}$, then $\frac{h_k}{\sigma_{t_k}^2} \lesssim \frac{1}{Kd}$ for $k=1,\ldots,N$ and
\[
\Pi:= \sum_{k=1}^N\fc{h_k^2}{\si_{t_{k-1}}^4} \lesssim  c\pa{\log\rc\de + T}.
\]
\end{lemma}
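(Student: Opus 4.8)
The plan is to deduce both conclusions from a single elementary one-step estimate together with a sum-to-integral comparison. Throughout, since $g\equiv 1$ we have $G_k=h_k$, $\sigma_t^2=1-e^{-t}$, the recursion $t_{k-1}=t_k-h_k$ with $h_k=c\min\{t_k,1\}$, and $c\le\frac1{Kd}\le\frac12$ (as $K\ge 2$). \emph{Step 1 (one-step ratio bound).} I would first show $\frac{G_k}{\sigma_{t_{k-1}}^2}=\frac{h_k}{1-e^{-t_{k-1}}}\le 4c$ for every $k$, by splitting on whether $t_k\le 1$. If $t_k\le 1$, then $h_k=ct_k$ and $t_{k-1}=(1-c)t_k\ge t_k/2$, so the elementary bound $1-e^{-x}\ge x/2$ on $[0,1]$ gives $1-e^{-t_{k-1}}\ge t_{k-1}/2\ge t_k/4$, hence the ratio is $\le 4c$. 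If $t_k>1$, then $h_k=c$ and $t_{k-1}=t_k-c>1-c\ge\frac12$, so $1-e^{-t_{k-1}}\ge 1-e^{-1/2}\ge\frac13$ and the ratio is $\le 3c$. Since $\sigma_{t_{k-1}}^2\le\sigma_{t_k}^2$, this already yields $\frac{G_k}{\sigma_{t_k}^2}\lesssim\frac1{Kd}$ (indeed the stronger version with $\sigma_{t_{k-1}}$).

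\emph{Step 2 (reducing $\Pi$ and comparing to an integral).} By Step 1, $\frac{G_k^2}{\sigma_{t_{k-1}}^4}=\big(\frac{G_k}{\sigma_{t_{k-1}}^2}\big)^2\le 4c\cdot\frac{h_k}{1-e^{-t_{k-1}}}$, so $\Pi\le 4c\,S$ where $S:=\sum_{k=1}^N \frac{h_k}{1-e^{-t_{k-1}}}$, and it remains to prove $S\lesssim\log\frac1\delta+T$. I would compare $S$ with $\int_\delta^T\frac{dt}{1-e^{-t}}$. Writing $\phi(t)=\frac1{1-e^{-t}}$, which is decreasing, monotonicity gives $\int_{t_{k-2}}^{t_{k-1}}\phi\ge h_{k-1}\phi(t_{k-1})$ for $k\ge 2$, hence $h_k\phi(t_{k-1})\le\frac{h_k}{h_{k-1}}\int_{t_{k-2}}^{t_{k-1}}\phi$. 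A case analysis as in Step 1 controls the consecutive step-size ratio $\frac{h_k}{h_{k-1}}=\frac{\min\{t_k,1\}}{\min\{t_{k-1},1\}}$: it equals $\frac1{1-c}\le 2$ when $t_k\le 1$, equals $1$ when $t_{k-1}\ge 1$, and equals $\frac1{t_{k-1}}\le\frac1{1-c}\le 2$ when $t_{k-1}<1\le t_k$ (using $t_{k-1}=t_k-c\ge 1-c$). Telescoping, $\sum_{k=2}^N h_k\phi(t_{k-1})\le 2\int_\delta^{t_{N-1}}\phi\le 2\int_\delta^T\phi$, while the $k=1$ term is $\le 4c$ by Step 1.

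\emph{Step 3 (evaluating the integral and concluding).} Finally, $\int_\delta^T\frac{dt}{1-e^{-t}}=\log\frac{e^T-1}{e^\delta-1}\le\log\frac{e^T}{\delta}=T+\log\frac1\delta$, using $e^\delta-1\ge\delta$ and $e^T-1\le e^T$. Since $x-\log x\ge 1$ for all $x>0$ and $\delta\le T$, we get $T+\log\frac1\delta\ge\delta+\log\frac1\delta\ge 1$, so the stray $4c\le 2$ from the $k=1$ term is absorbed and $S\le 4(T+\log\frac1\delta)$, giving $\Pi\le 16c(T+\log\frac1\delta)\lesssim c(\log\frac1\delta+T)$. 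I do not expect a genuine obstacle: the argument is entirely elementary. The only place needing care is the sum-to-integral comparison in Step 2, where the bound $\frac{h_k}{h_{k-1}}\le 2$ must be extracted from the implicit step-size recursion and the endpoint $k=1$ handled separately.
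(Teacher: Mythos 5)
Your proof is correct, and it reaches the bound by a somewhat different mechanism than the paper. The paper's proof replaces $\sigma_{t}^2$ by $\min\{t,1\}$ up to constants, observes that each summand $\frac{h_k^2}{\min\{t_{k-1},1\}^2}$ is $\asymp c^2$ in both regimes, and then simply counts steps: $\lesssim \log_{1-c}(\delta)\asymp \frac{\log(1/\delta)}{c}$ indices with $t_k\le 1$ and $\lesssim \frac{T}{c}$ indices with $t_k>1$, giving $c\log(1/\delta)+cT$. You instead factor out a single power of $c$ through the one-step bound $\frac{G_k}{\sigma_{t_{k-1}}^2}\le 4c$ (which, as you note, also delivers the first claim of the lemma, in the stronger $\sigma_{t_{k-1}}$ form needed for condition \eqref{condition}), and then bound the remaining sum $\sum_k \frac{h_k}{1-e^{-t_{k-1}}}$ by a Riemann-sum-to-integral comparison, using the decreasing integrand $\phi(t)=\frac{1}{1-e^{-t}}$, the consecutive step-ratio bound $\frac{h_k}{h_{k-1}}\le 2$, and the explicit antiderivative $\log(e^t-1)$, so that the sum is $\le 2\log\frac{e^T-1}{e^\delta-1}+4c\le 4\bigl(T+\log\frac1\delta\bigr)$. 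The two arguments exploit the same structural fact ($h_k\asymp c\,\sigma_{t_{k-1}}^2$), but your finish trades the paper's explicit step count for an integral comparison: this buys a cleaner closed-form evaluation and some robustness to the precise step-size rule, at the cost of the extra bookkeeping on $h_k/h_{k-1}$ and the separate treatment of the $k=1$ endpoint, which the counting argument does not need. All the elementary inequalities you invoke ($1-e^{-x}\ge x/2$ on $[0,1]$, $t_{k-1}\ge(1-c)t_k$ resp. $t_k-c$, $e^\delta-1\ge\delta$, $\delta-\log\delta\ge1$) check out, so there is no gap.
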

\begin{proof}
Note that $\sigma_t^2 \asymp \max\{1,t\} $.
We consider the sum with $t_k\le 1$ and $t_k>1$ separately. For $t_k\le 1$, we have $\fc{G_k}{\sigma_{k-1}^2} \asymp  \fc{ct_k}{t_{k-1}} \le \frac{2}{Kd}$ (when $K\ge 2$, so $\fc{t_k}{t_{k-1}}\le 2$). Noting that the number of terms in the sum is $\lesssim \log_{1-c}(\de)$, 
\begin{align}\label{e:exp1}
     \sum_{k:t_k\le 1} \fc{h_k^2}{\min\{t_{k-1}^2,1\}}
     &=     \sum_{k:t_k\le 1} \fc{c^2t_k^2}{t_{k-1}^2} \asymp 
     c^2 \log_{1-c}(\de)\asymp c^2 \fc{\log(1/\de)}{c}=c\log(1/\de).
\end{align}
For $t_k>1$, $\fc{h_k}{\min\{t_{k-1},1\}} = c \le \frac{1}{Kd}  $  and
\begin{align}\label{e:exp2}
    \sum_{k:t_k> 1} \fc{h_k^2}{\min\{t_{k-1}^2,1\}}
     &= \sum_{k:t_k> 1}c^2 \lesssim  c^2 \cdot \fc Tc = cT.
\end{align}
Combining~\eqref{e:exp1} and~\eqref{e:exp2} gives the result. Note the number of steps is 
\[N\lesssim\log_{1-c}(\de) + \fc Tc = \rc c(\log \de + T). \]
\end{proof}

 \paragraph*{Proof of Theorem \ref{general}.}
 As shown in Section \ref{overview}, the extra terms arising in the discretization error of the Euler-Maruyama scheme can be bounded by Lemma \ref{linear-discretization}, so we only need to consider the exponential integrator scheme.
 From Proposition \ref{Girsanov} we obtain
 \begin{align} \label{earlyGrisanov1}
    \mathrm{KL}(p_{t_0} \| \hat{q}_{T-t_0}) \lesssim \mathrm{KL}(p_T\| \gamma_d) + \sum_{k=1}^N \int_{t_{k-1}}^{t_k} \EE \| \nabla \log p_t(x_t) - \nabla \log p_{t_k}(x_{t_k})\|^2 \d t + T\epsilon_{0}^2.
\end{align}
By bounding the first term in \eqref{earlyGrisanov1} with Lemma \ref{converge-forward} and the second term in \eqref{earlyGrisanov1} with Lemma \ref{early-discrete}, we obtain \eqref{e:kl-ineq}. Further more, we can further quantify the term $\Pi = \sum_{k=1}^N \frac{G_k^2}{\sigma_{t_{k-1}}^4}$ for exponentially decaying
(and then constant) step size with Lemma \ref{quant-exp}.
\qed

\subsection{Proof of Corollary \ref{cor:main} and Corollary \ref{cor:W2}}
\paragraph*{Proof of Corollary \ref{cor:main}.}
We use the exponentially decreasing step size in Theorem~\ref{general}. We note that $W_2(P,M_\sharp p_{\delta}) \lesssim \sqrt{d\si_\delta^2}\asymp \sqrt{d\delta} $, so choose $\de \asymp \frac{\ew^2}{d}$. 
Choose $T \asymp \log\pf{d+M_2}{\ep_{\KL}^2}$. Also choose $c \asymp \fc{\ep_{\KL}^2}{d^2\pa{T + \log \prc\de}} \gtrsim \fc{\ep_{\KL}^2}{d^2\log\pf{(d+M_2)d}{\ep_{\KL}^2\ew^2}}$. 
If $\ep_0^2 \lesssim \frac{\ep_{\KL}^2}{T^2}$, this ensures that all terms are $\lesssim \ep_{\KL}^2$. Choosing appropriate implied constants completes the proof.  \qed

\begin{lemma}\cite[Lemma 6.6]{convergencescore2}\label{gaussian-tail-calculation} Let $\mu$ be the standard Gaussian measure on $N\left(0, I_d\right)$. Then
$$
\begin{aligned}
 \sup _{\mu(A) \leq \ep} \int_A\|x\|^2 \mu(d x) \leq \ep\left(2 d+3 \ln \left(\frac{1}{\ep}\right)+3\right)=O\left(\ep\left(d+\ln \left(\frac{1}{\ep}\right)\right)\right).
\end{aligned}
$$
\end{lemma}
\begin{proof}
By the $\chi^2$ tail bound in \cite{Laurent2000AdaptiveEO}, for $t \geq 0$
$$
\mu\left(\|X\|^2 \geq 2 d+3 t\right) \leq \mathbb{P}\left(\|X\|^2 \geq d+2 \sqrt{d t}+2 t\right) \leq e^{-t},
$$
so $\|X\|^2$ is stochastically dominated by a random variable with cdf $F(y)=1-e^{-\frac{y-2 d}{3}}$. Then letting $P_Y$ be the measure corresponding to $F$,
$$
\begin{aligned}
\sup _{\mu(A) \leq \ep} \int_A\|x\|^2 \mu(d x) & \leq \sup _{P_Y(A) \leq \ep} \int_A y P_Y(d y)=\int_{2 d+3 \ln \left(\frac{1}{\ep}\right)}^{\infty} y \,d F(y) \\
& =\ep\left(2 d+3 \ln \left(\frac{1}{\ep}\right)\right)+\int_{2 d+3 \ln \left(\frac{1}{\ep}\right)}^{\infty} e^{-\frac{y-2 d}{3}} d y=\ep\left(2 d+3 \ln \left(\frac{1}{\ep}\right)\right)+3 \ep
\end{aligned}
$$
\end{proof}
\paragraph*{Proof of Corollary \ref{cor:W2}.}
Let $p_{\delta}^{\mathrm{trunc}}$ be the law of $x_\delta^{\mathrm{trunc}}:=x_\delta 1_{\{x_\delta \in B_R(0)\}}$ and define $  \hat{q}_{T-\delta}^{\mathrm{trunc}}$ similarly.
Note that 
\begin{align}
\begin{aligned}
W_2(P, M_\sharp \hat{q}_{T-\delta}^{\mathrm{trunc}}) & \leq W_2(P, M_\sharp 
 p_\delta) + W_2(M_\sharp  p_\delta, M_\sharp 
 \hat{q}_{T-\delta}^{\mathrm{trunc}} ) \\
& \lesssim \sqrt{d \delta}  + W_2(p_\delta, \hat{q}_{T-\delta}^{\mathrm{trunc}} ). 
\end{aligned}
\label{secondterm}
\end{align}
To bound the second term in \eqref{secondterm}, we consider a coupling $x_\delta \sim p_\delta$ and $ \hat{y}_{T-\delta}^{\mathrm{trunc}} \sim \hat{q}_{T-\delta}^{\mathrm{trunc}}$ such that $x_\delta \ne \hat{y}_{T-\delta}^{\mathrm{trunc}}$ with probability $\epsilon_{\mathrm{TV}}$, where 
\begin{align}
\label{e:w2pf-1}
     \epsilon_{\mathrm{TV}} := \mathrm{TV}(p_\delta, \hat{q}_{T-\delta}^{\mathrm{trunc}}) & \leq  \mathrm{TV}(p_\delta^{\mathrm{trunc}}, \hat{q}_{T-\delta}^{\mathrm{trunc}}) +  \mathrm{TV}(p_{\delta}, p_{\delta}^{\mathrm{trunc}}) \\
     \label{e:w2pf-2}
     & \leq  \mathrm{TV}(p_\delta, \hat{q}_{T-\delta}) + 
     \mathrm{TV}(p_{\delta}, p_{\delta}^{\mathrm{trunc}}) 
      \\&
     \leq \sqrt{\KL(p_\delta \| \hat{q}_{T-\delta})} + 
     \PP\left( \|{x}_{\delta}\| \geq R\right) 
     \label{e:w2pf-3}
     \\
     &= \tilde O(\ep_0) + \PP\left( \|{x}_{\delta}\| \geq R\right) .\label{TV-bound}
\end{align}
We used the triangle inequality, data processing inequality, and Pinsker's inequality in~\eqref{e:w2pf-1},~\eqref{e:w2pf-2}, and~\eqref{e:w2pf-3}, respectively.  
Express ${x}_\delta=\alpha_\delta {x}_0+\sigma_\delta \xi$, where ${x}_0 \sim P,\, \xi \sim \cN(0,I_d)$. Now
\begin{align} \label{coupling}
\begin{aligned}
\EE\left\|{x}_\delta-\hat{y}_{T-\delta}^{\mathrm{trunc}}\right\|^2 & \leq \sup _{P(A) \leq \ep_{\mathrm{TV}}} 2\left(\mathbb{E}\left[\left\|\al_\delta {x}_0-\hat{y}_{T-\delta}^{\mathrm{trunc}}\right\|^2{1}_A\right]+\sigma_\delta^2 \mathbb{E}\left[\|\xi\|^2 {1}_A\right]\right) \\
& \leq 2\left((2M_2+2R^2)\ep_{\mathrm{TV}}+\sigma_\delta^2 \ep_{\mathrm{TV}} \cdot O\left(d+\log 
\left(\frac{1}{\ep_{\mathrm{TV}}}\right)\right)\right),
\end{aligned}
\end{align}
where the second inequality comes from Lemma \ref{gaussian-tail-calculation}. Combining \eqref{secondterm}, \eqref{TV-bound}, \eqref{coupling} and the choice of parameters in \eqref{parameter}, we complete the proof.
\qed

\subsection{Proof of Theorem \ref{truncating}}
\paragraph*{Proof of Theorem \ref{truncating}.}
 As shown in Section \ref{overview}, the extra terms arising in the discretization error of the Euler-Maruyama scheme can be bounded by Lemma \ref{linear-discretization}, so we only need to consider the exponential integrator scheme. 
 Using Proposition \ref{Girsanov}, we obtain
\begin{align}
\begin{aligned} \label{truncateGirsanov}
 \mathrm{KL}({p}_0 \| \hat{q}_T ) & \lesssim  \mathrm{KL}({p}_T \| \gamma_d )  + \sum_{k=1}^N \int_{t_{k-1}}^{t_k} \EE\|\nabla \log {p}_t(x_t) - \nabla \log {p}_{t_k}(x_{t_k}) \|^2 \d t  + T\epsilon_0^2.
 \end{aligned}
\end{align}
  In the right hand side of \eqref{truncateGirsanov}, the first term is directly bounded by Lemma \ref{converge-forward}.  Thus we only have to consider the second term, which is the discretization error. 
 Let $k_0$ be the largest index such that $t_{k_0}\le \rc L$. By Lemma~\ref{early-discrete} and Lemma~\ref{quant-exp}, 
 \begin{align*}
     \sum_{k = k_0 + 1}^N \int_{t_{k-1}}^{t_k} \EE \|\nabla \log p_t(x_t) - \nabla \log p_{t_k}(x_{t_k}) \|^2 \d t & \lesssim d^2\sum_{k=k_0+1}^{N}\frac{h_k^2}{\sigma_{t_{k-1}}^4} \lesssim d^2c(\log L + T).
 \end{align*}
 The number of steps for this part is $N-k_0\lesssim \rc c(\log L+T)$.
 Note $k_0\lesssim \rc c$ so by Lemma \ref{discrete-Lip} and Lemma \ref{lownoise},
 \begin{align*}
     \sum_{k=1}^{k_0} \int_{t_{k-1}}^{t_k}\EE \|\nabla \log {p}_t(x_t) - \nabla \log {p}_{t_k}(x_{t_k})  \|^2 & \lesssim dL^2 \sum_{k=1}^{k_0} h_k^2
     \lesssim dL^2 \cdot \rc c \pf{c}{L}^2 = cd.
 \end{align*}
 Thus the total discretization error is bounded by 
 \begin{align*}
     \sum_{k=1}^{N}  \int_{t_{k-1}}^{t_k}\EE \|\nabla \log {p}_t(x_t) - \nabla \log {p}_{t_k}(x_{t_k})  \|^2 \d t\lesssim 
     d^2c(\log L + T)
 \end{align*}
 and the total number of steps is $N\lesssim \rc c (\log L+T)$. Given the number of steps $N$, we can choose $c = \fc{\log L + T}{N}$; plugging this in gives the bound.
 We complete the proof.
\section{Lemmas for Computing Score Functions} \label{computation-of-score}
In this section, we provide some lemmas for the score function, which will be used in our analysis.\qed
\begin{lemma} \label{computing-score}
Let $P$ be a probability measure on $\RR^d$. Consider the Gaussian perturbation of $P$ that admits a density
           $p_{\mu,\sigma}(x) \propto \int_{\RR^d} \exp\left(-\frac{\|x-\mu y\|^2}{2\sigma^2} \right) \d P(y) $. Let $\tilde{P}_{\mu,\sigma}(y|x) $ be the conditional probability measure satisfying $\d \tilde{P}_{\mu,\sigma}(y|x) \propto \exp\left(-\frac{\|x-\mu y\|^2}{2\sigma^2} \right) \d P(y) $.
\begin{enumerate}
    \item If $P$ admits a density $p \in C^1(\RR^d)$, we have
     $$\nabla \log p_{\mu,\sigma}(x) = \frac{1}{\mu} \EE_{\tilde{P}_{\mu,\sigma}(y|x)} \nabla_y \log p(y).   $$
    \item We have
    $$       \nabla \log p_{\mu,\sigma}(x)     =  \EE_{\tilde{P}_{\mu,\sigma}(y|x)} \Bigl(\frac{\mu y-x}{\sigma^2}\Bigr).           $$
\end{enumerate}
\end{lemma}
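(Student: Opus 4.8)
The plan is to derive both identities by differentiating the integral representation of $p_{\mu,\sigma}$ directly. First I would record the normalized form
\begin{align*}
p_{\mu,\sigma}(x) = (2\pi\sigma^2)^{-d/2}\int_{\RR^d}\exp\left(-\frac{\|x-\mu y\|^2}{2\sigma^2}\right)\d P(y),
\end{align*}
which holds because $\int_{\RR^d}\exp(-\|x-\mu y\|^2/(2\sigma^2))\,\d x = (2\pi\sigma^2)^{d/2}$ for every $y$, so Fubini fixes the constant. Throughout I would abbreviate $h(x,y):=\exp(-\|x-\mu y\|^2/(2\sigma^2))$ for the Gaussian kernel, and note that $\tilde P_{\mu,\sigma}(\cdot\mid x)$ has $P$-density proportional to $h(x,\cdot)$, so any ratio of integrals against $h(x,\cdot)\,\d P$ is exactly an expectation under $\tilde P_{\mu,\sigma}(\cdot\mid x)$.

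For part 2, I would differentiate under the integral sign. Since $\nabla_x h(x,y) = -\frac{x-\mu y}{\sigma^2}h(x,y)$ and, for $x$ in a bounded neighborhood, $\|\nabla_x h(x,y)\|$ is bounded uniformly in $y$ (hence $P$-integrable, as $P$ is a probability measure), dominated convergence gives $\nabla_x p_{\mu,\sigma}(x) = (2\pi\sigma^2)^{-d/2}\int \frac{\mu y - x}{\sigma^2}h(x,y)\,\d P(y)$. Dividing by $p_{\mu,\sigma}(x)>0$ and recognizing the quotient as an expectation under $\tilde P_{\mu,\sigma}(\cdot\mid x)$ yields $\nabla\log p_{\mu,\sigma}(x) = \EE_{\tilde P_{\mu,\sigma}(y\mid x)}\bigl(\frac{\mu y-x}{\sigma^2}\bigr)$.

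For part 1, the key observation is the kernel identity $\nabla_x h(x,y) = -\mu^{-1}\nabla_y h(x,y)$, which follows from $\partial_{y_i}h = \frac{\mu(x_i-\mu y_i)}{\sigma^2}h = -\mu\,\partial_{x_i}h$. Using this together with $\d P(y)=p(y)\,\d y$ and then integrating by parts in $y$,
\begin{align*}
\nabla_x p_{\mu,\sigma}(x) = -\frac{(2\pi\sigma^2)^{-d/2}}{\mu}\int_{\RR^d}\nabla_y h(x,y)\,p(y)\,\d y = \frac{(2\pi\sigma^2)^{-d/2}}{\mu}\int_{\RR^d}h(x,y)\,\nabla_y p(y)\,\d y.
\end{align*}
Writing $\nabla_y p = p\,\nabla_y\log p$ and dividing by $p_{\mu,\sigma}(x)$ gives $\nabla\log p_{\mu,\sigma}(x) = \frac{1}{\mu}\EE_{\tilde P_{\mu,\sigma}(y\mid x)}\nabla_y\log p(y)$.

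I expect the only delicate point to be the integration by parts in part 1: the boundary terms vanish as long as $h(x,\cdot)p(\cdot)$ and its gradient decay at infinity, which is automatic once $p$ grows at most subexponentially, since the Gaussian factor $h$ dominates. In full generality I would justify it cleanly by the change of variables $u=\mu y$, which turns $p_{\mu,\sigma}$ into the Gaussian convolution of the density of $\mu X_0$ (reducing part 1 to the standard score identity for mollified densities), and then approximate $p$ in $C^1$ by compactly supported densities; alternatively one simply states the mild regularity hypothesis under which the identity is invoked. Part 2 requires no such care, as the Gaussian kernel is smooth with $x$-derivatives dominated locally uniformly in $y$.
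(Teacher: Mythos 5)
Your proposal is correct and follows essentially the same route as the paper: part 2 by differentiating the Gaussian kernel in $x$ and recognizing the ratio of integrals as an expectation under $\tilde{P}_{\mu,\sigma}(\cdot\mid x)$, and part 1 via the identity $\nabla_x \exp\bigl(-\tfrac{\|x-\mu y\|^2}{2\sigma^2}\bigr) = -\mu^{-1}\nabla_y \exp\bigl(-\tfrac{\|x-\mu y\|^2}{2\sigma^2}\bigr)$ followed by integration by parts in $y$ and writing $\nabla_y p = p\,\nabla_y\log p$. Your added justifications (dominated convergence for differentiating under the integral and the vanishing of boundary terms) simply make explicit regularity steps the paper leaves implicit.
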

\begin{proof}
The first expression is obtained by
\begin{align*}
   \nabla \log p_{\mu,\sigma}(x) &= \frac{\int_{\RR^d}  p(y) \nabla_x \left[ \exp\left(-\frac{\|x - \mu y\|^2}{2\sigma^2}  \right)\right] \d y}{\int_{\RR^d}  p(y)\exp\left(-\frac{\|x - \mu y\|^2}{2\sigma^2}  \right) \d y} \\
   & = -\frac{\int_{\RR^d}  p(y) \nabla_y\left[ \exp\left(-\frac{\|x - \mu y\|^2}{2\sigma^2}  \right)\right]\d y}{\mu \int_{\RR^d}  p(y)\exp\left(-\frac{\|x - \mu y\|^2}{2\sigma^2}  \right) \d y}\\ 
   & = \frac{\int_{\RR^d}  \nabla_y p(y)  \exp\left(-\frac{\|x - \mu y\|^2}{2\sigma^2}  \right) \d y}{\alpha_{t,s} \int_{\RR^d}  p(y)\exp\left(-\frac{\|x - \mu y\|^2}{2\sigma^2 }  \right) \d y} \\
   & = \frac{1}{\mu}\EE_{\tilde{P}_{\mu,\sigma}(y|x)} \nabla_y \log p(y).
\end{align*}
For the second expression, 
\begin{align*}
     \nabla \log p_{\mu,\sigma}(x) &= \frac{\int_{\RR^d}   \nabla_x \left[ \exp\left(-\frac{\|x - \mu y\|^2}{2\sigma^2}  \right)\right] \d P( y)}{\int_{\RR^d}  \exp\left(-\frac{\|x - \mu y\|^2}{2\sigma^2}  \right) \d P(y)} \\
     &  = \frac{\int_{\RR^d}     \frac{\mu y-x}{\sigma^2} \exp\left(-\frac{\|x - \mu y\|^2}{2\sigma^2}  \right) \d P( y)}{\int_{\RR^d}  \exp\left(-\frac{\|x - \mu y\|^2}{2\sigma^2}  \right) \d P(y)}       \\
     & = \EE_{\tilde{P}_{\mu,\sigma}(y|x)} \Bigl(\frac{\mu y-x}{\sigma^2}\Bigr). \qedhere
\end{align*}
\end{proof}
\begin{lemma} \label{score-bound}
 Let $p \in C^1(\RR^d)$ be a probability density.
 \begin{enumerate}
     \item \cite{pmlr-v178-chewi22a} If $\nabla \log p$ is $L$-Lipchitz, we have
     \begin{align*}
         \EE_p \|\nabla \log p(x) \|^2 \leq dL.
     \end{align*}
     \item If there exists a probability measure $Q$ and $\sigma >0$ such that $p(x) \propto \int_{\RR^d} \exp\left(-\frac{\|x -y \|^2 }{2\sigma^2}\right) \d Q(y)$.
     then $ \EE_p \|\nabla \log p(x) \|^2 \leq \frac{d}{\sigma^2}$.
 \end{enumerate}
\end{lemma}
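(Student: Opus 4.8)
The statement bundles two unrelated bounds, so the plan is to handle them separately.

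For the first bound (the $L$-Lipschitz case, quoted from \cite{pmlr-v178-chewi22a}), I would recall the short argument: integration by parts gives the Fisher-information identity
\begin{align*}
\EE_p\|\nabla\log p(x)\|^2 = \int \frac{\|\nabla p(x)\|^2}{p(x)}\,\d x = -\int p(x)\,\Delta\log p(x)\,\d x = -\EE_p\big[\mathrm{Tr}\,\nabla^2\log p(x)\big],
\end{align*}
and then use that $L$-Lipschitzness of $\nabla\log p$ forces $-L I_d \preceq \nabla^2\log p(x)\preceq L I_d$ pointwise, so $\mathrm{Tr}\,\nabla^2\log p(x)\ge -dL$ and hence $\EE_p\|\nabla\log p\|^2\le dL$. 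Alternatively one simply cites \cite{pmlr-v178-chewi22a} directly.

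For the second bound, I would avoid the Hessian entirely --- the Gaussian convolution $p$ need not admit any upper bound on $\nabla^2\log p$ --- and instead use the posterior-mean representation of the score from Lemma \ref{computing-score}. Taking $\mu=1$ there, with $\tilde P_\sigma(y\mid x)$ the posterior measure $\d\tilde P_\sigma(y\mid x)\propto \exp\big(-\|x-y\|^2/(2\sigma^2)\big)\,\d Q(y)$, one has $\nabla\log p(x)=\EE_{\tilde P_\sigma(y\mid x)}\big[(y-x)/\sigma^2\big]$. Applying Jensen's inequality to the convex map $v\mapsto\|v\|^2$ gives $\|\nabla\log p(x)\|^2\le \EE_{\tilde P_\sigma(y\mid x)}\|(y-x)/\sigma^2\|^2$. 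I would then take $\EE_{x\sim p}$ and disintegrate the joint law: drawing $x\sim p$ and then $y\sim\tilde P_\sigma(\cdot\mid x)$ yields the same joint distribution of $(x,y)$ as drawing $y\sim Q$ and then $x\sim\cN(y,\sigma^2 I_d)$, under which $x-y\sim\cN(0,\sigma^2 I_d)$. Hence
\begin{align*}
\EE_p\|\nabla\log p(x)\|^2 \le \frac{1}{\sigma^4}\,\EE\|x-y\|^2 = \frac{1}{\sigma^4}\cdot d\sigma^2 = \frac{d}{\sigma^2},
\end{align*}
which is the claimed bound.

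Neither step is a genuine obstacle. The only point requiring care is in the first bound --- justifying the integration by parts (integrability of $\|\nabla p\|^2/p$ and the vanishing of boundary terms at infinity) --- which is precisely why that part is attributed to \cite{pmlr-v178-chewi22a} and can be cited rather than reproved. For the second bound the sole hypothesis-level check is that $\tilde P_\sigma(\cdot\mid x)$ is a bona fide probability measure for $p$-almost every $x$, which holds wherever the defining convolution integral is positive, i.e.\ on $\mathrm{supp}(p)$; everything else is Jensen's inequality plus the Gaussian second-moment computation.
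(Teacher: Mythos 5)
Your proposal is correct and follows essentially the same route as the paper: part 1 via the integration-by-parts Fisher-information identity combined with the Lipschitz bound on the Hessian (as in the cited reference), and part 2 via the posterior-mean representation of the score from Lemma \ref{computing-score}, Jensen's inequality, and the Gaussian conditional second moment $\EE\|x-y\|^2 = d\sigma^2$. The only difference is cosmetic: you write out the disintegration of the joint law explicitly (and get the sign in the integration by parts right, which the paper's displayed computation glosses over), but the argument is the same.
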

\begin{proof}
\begin{enumerate}
    \item Using integration by parts, we have
\begin{align*}
    \EE_p \|\nabla \log p \|^2 & = \int_{\RR^d} p(x) \|\nabla \log p(x) \|^2 \d x   \\
    & = \int_{\RR^d} \langle \nabla  p(x),  \nabla \log p(x) \rangle \d x \\
    & = \int_{\RR^d} p(x) \Delta \log p(x) \d x \\
    & \leq dL.
\end{align*}
    \item Using Lemma \ref{computing-score}, we rewrite the score function as
    \begin{align*}
       \nabla \log p(x) = \EE_{\tilde{Q}_\sigma(y|x)} \Bigl(\frac{y-x}{\sigma^2}\Bigr),
    \end{align*}
    where $\tilde{Q}_\sigma$ is the conditional density $\d \tilde{Q}_\sigma(y|x) \propto \exp\left(-\frac{\|x-y \|^2}{2\sigma^2} \right) \d Q(y)$. Then the second moment of the score function is bounded by
    \begin{align*}
        \EE_p \|\nabla \log p(x) \|^2 = \EE_{p(x)} \left\| \EE_{\tilde{Q}_\sigma(y|x)}\Bigl(\frac{y-x}{\sigma^2}\Bigr) \right\|^2 \leq \EE_{p(x)}\EE_{\tilde{Q}_\sigma(y|x)} \left\|\frac{y-x}{\sigma^2} \right\|^2 \leq \frac{d}{\sigma^2}.
    \end{align*}
\end{enumerate}
\end{proof}
\begin{lemma} \label{computing-second-score}
Let $P$ be a probability measure on $\RR^d$. Consider the density of its Gaussian perturbation $p_{\sigma}(x) \propto \int_{\RR^d}\exp\left(-\frac{\|x-y\|^2}{2\sigma^2} \right) \d P(y)$. Define a conditional probability measure $\tilde{P}_\sigma(y|x)$ as $\d \tilde{P}_\sigma(y|x) \propto \exp\left(\frac{\|x-y\|^2}{2\sigma^2} \right)\d P(y)$.
\begin{enumerate}
    \item If $P$ admits a density $p \in C^2(\RR^d)$, we have
\begin{align*}
    \nabla^2 \log p_{\sigma}(x) = \EE_{\tilde{P}_{\sigma}(y|x)} \nabla^2 \log p(y) + \mathrm{Var}_{\tilde{P}_{\sigma}(y|x)}(\nabla \log p(y) ).
\end{align*}
\item We have
 $$  \nabla^2 \log p_{\sigma}(x) = \mathrm{Var}_{\tilde{P}_{\sigma}(y|x)}\left(\frac{y}{\sigma^2} \right) - \frac{I_d}{\sigma^2}.$$
 \end{enumerate}
\end{lemma}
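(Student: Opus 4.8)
The plan is to reduce both identities to differentiating the \emph{unnormalized} density and then reading off expectations against $\tilde P_\sigma(\cdot\mid x)$, which, consistently with the rest of the paper, I take to be the posterior $\d\tilde P_\sigma(y\mid x)\propto \exp(-\|x-y\|^2/(2\sigma^2))\,\d P(y)$. Write $h(x) := \int_{\RR^d}\phi(x,y)\,\d P(y)$ with $\phi(x,y)=\exp(-\|x-y\|^2/(2\sigma^2))$, so that $p_\sigma = h/\int h$ and hence $\nabla^2\log p_\sigma = \nabla^2\log h$, killing the normalizing constant. Using $\nabla^2\log h = \tfrac1h\nabla^2 h - \bigl(\tfrac1h\nabla h\bigr)\bigl(\tfrac1h\nabla h\bigr)^\top$ together with $\nabla_x\phi = \tfrac{y-x}{\sigma^2}\phi$ and $\nabla_x^2\phi = \bigl(\tfrac{(y-x)(y-x)^\top}{\sigma^4}-\tfrac{I_d}{\sigma^2}\bigr)\phi$, and dividing through by $h$ to turn $\tfrac1h\int(\cdot)\,\phi(x,y)\,\d P(y)$ into $\EE_{\tilde P_\sigma(y\mid x)}[\,\cdot\,]$, I get $\tfrac1h\nabla h(x)=\EE_{\tilde P_\sigma(y\mid x)}\bigl[\tfrac{y-x}{\sigma^2}\bigr]$ (this is Lemma~\ref{computing-score}(2)) and $\tfrac1h\nabla^2 h(x)=\EE_{\tilde P_\sigma(y\mid x)}\bigl[\tfrac{(y-x)(y-x)^\top}{\sigma^4}\bigr]-\tfrac{I_d}{\sigma^2}$. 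Subtracting the outer square of the former from the latter collapses the two second-moment terms into a covariance, so $\nabla^2\log p_\sigma(x)=\mathrm{Var}_{\tilde P_\sigma(y\mid x)}\bigl(\tfrac{y-x}{\sigma^2}\bigr)-\tfrac{I_d}{\sigma^2}$; since $x$ is deterministic under the conditional law this equals $\mathrm{Var}_{\tilde P_\sigma(y\mid x)}(y/\sigma^2)-I_d/\sigma^2$, which is part (2).

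For part (1), where $p\in C^2(\RR^d)$, I would start from Lemma~\ref{computing-score}(1), namely $\nabla\log p_\sigma(x)=\EE_{\tilde P_\sigma(y\mid x)}[\nabla_y\log p(y)]$, and differentiate once more in $x$. The only $x$-dependence is in the conditional density, and from $\log\tilde p_\sigma(y\mid x)=\log p(y)-\|x-y\|^2/(2\sigma^2)-\log h(x)$ one reads off $\nabla_x\log\tilde p_\sigma(y\mid x)=\tfrac{y-x}{\sigma^2}-\nabla\log p_\sigma(x)=\tfrac1{\sigma^2}\bigl(y-\EE_{\tilde P_\sigma(y'\mid x)}[y']\bigr)$, so $\nabla^2\log p_\sigma(x)=\EE_{\tilde P_\sigma(y\mid x)}\bigl[\nabla_y\log p(y)\,(\nabla_x\log\tilde p_\sigma(y\mid x))^\top\bigr]$. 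To bring this to the stated form I use the pointwise identity $\tfrac{y-x}{\sigma^2}=\nabla_y\log p(y)-\nabla_y\log\tilde p_\sigma(y\mid x)$ (again from the displayed formula for $\log\tilde p_\sigma$), substitute it in, and integrate by parts in $y$ in the term involving $\nabla_y\log\tilde p_\sigma$: the score of $\tilde P_\sigma(\cdot\mid x)$ has mean zero and $\EE_{\tilde P_\sigma(y\mid x)}[\nabla_y\log p(y)\,(\nabla_y\log\tilde p_\sigma(y\mid x))^\top]=-\EE_{\tilde P_\sigma(y\mid x)}[\nabla_y^2\log p(y)]$, so collecting the remaining terms and using $\nabla\log p_\sigma(x)=\EE_{\tilde P_\sigma}[\nabla\log p]$ once more gives exactly $\nabla^2\log p_\sigma(x)=\EE_{\tilde P_\sigma(y\mid x)}[\nabla_y^2\log p(y)]+\mathrm{Var}_{\tilde P_\sigma(y\mid x)}(\nabla_y\log p(y))$.

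Part (1) can also be phrased via the ``second Bartlett identity'': for a joint density $\rho(x,y)\propto p(y)\exp(-\|x-y\|^2/(2\sigma^2))$ with $X$-marginal $\rho_X$ and conditional $\rho_{Y\mid X}$ one has $\nabla_x^2\log\rho_X(x)=\EE_{\rho_{Y\mid X}}[\nabla_x^2\log\rho(x,y)]+\mathrm{Var}_{\rho_{Y\mid X}}(\nabla_x\log\rho(x,y))$, and the symmetry of the Gaussian kernel lets one trade the $x$-derivatives of $\log\rho$ for the wanted $y$-derivatives of $\log p$ — essentially the computation above, repackaged. In either route the algebra is routine; the one step that requires a small choice is collapsing the raw second moments into a covariance, and the one step that needs justification rather than calculation is differentiating under the integral sign and performing the boundary-term-free integration by parts. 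I expect the latter to be the main thing to be careful about, but under the standing $C^2$ assumption and the rapid decay of the Gaussian kernel $\phi$ it is not a serious obstacle.
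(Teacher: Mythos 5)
Your proof is correct, including your silent correction of the sign typo in the stated definition of $\tilde P_\sigma(y|x)$ (the exponent should be $-\|x-y\|^2/(2\sigma^2)$, as everywhere else in the paper). For part (2) you do essentially what the paper does: differentiate the unnormalized density $h(x)=\int \exp\left(-\frac{\|x-y\|^2}{2\sigma^2}\right)\d P(y)$ twice in $x$, divide by $h$ to produce expectations under $\tilde P_\sigma(\cdot\mid x)$, and subtract the outer product of the score so the second moments collapse into a variance. For part (1) your route is genuinely different from the paper's: the paper computes $\nabla^2 p_\sigma/p_\sigma$ directly, using the symmetry of the Gaussian kernel to trade $\nabla_x^2$ for $\nabla_y^2$ and moving both derivatives onto $p$ (two integrations by parts), which gives $\EE_{\tilde P_\sigma(y|x)}\left[\frac{\nabla_y^2 p(y)}{p(y)}\right]$, and then subtracts $\nabla\log p_\sigma(\nabla\log p_\sigma)^\top$ and uses Lemma \ref{computing-score}(1); you instead differentiate the first-order identity $\nabla\log p_\sigma(x)=\EE_{\tilde P_\sigma(y|x)}[\nabla_y\log p(y)]$ in $x$, which brings in the $x$-score of the conditional law, then convert the cross term via $\frac{y-x}{\sigma^2}=\nabla_y\log p(y)-\nabla_y\log\tilde p_\sigma(y|x)$ and a single integration by parts in $y$ (your Bartlett-identity phrasing is the same computation repackaged). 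The paper's route has the advantage that parts (1) and (2) drop out of one and the same decomposition of $\nabla^2\log p_\sigma$ with manifestly symmetric intermediate quantities; your route buys reuse of Lemma \ref{computing-score}(1) and only one integration by parts, at the mild cost of justifying differentiation under the integral of a conditional-expectation identity and of an intermediate matrix $\EE[\nabla_y\log p\,(\nabla_x\log\tilde p_\sigma)^\top]$ that is symmetric only after the fact. On the regularity and boundary-term issues your level of rigor matches the paper's own proof, which likewise differentiates under the integral and integrates by parts without further comment.
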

\begin{proof} 
We rewrite the second-order score function as
\begin{align*}
    \nabla^2 \log p_\sigma(x) = \frac{\nabla^2 p_\sigma(x)}{p_\sigma(x)} - \nabla \log p_\sigma(x) (\nabla \log p_\sigma(x)  )^\top.
\end{align*}
To prove the first expression, we write
\begin{align*}
    \frac{\nabla^2 p_\sigma(x)}{p_\sigma(x)} & = \frac{\int p(y) \nabla_x^2 \exp\left(\frac{-\|x-y\|^2}{2\sigma^2} \right)\d y}{\int p(y)\exp\left(\frac{-\|x-y\|^2}{2\sigma^2} \right) \d y } \\
    & = \frac{\int p(y) \nabla_y^2 \exp\left(\frac{-\|x-y\|^2}{2\sigma^2} \right) \d y}{\int  p(y)\exp\left(\frac{-\|x-y\|^2}{2\sigma^2} \right)\d y } \\
    & = \frac{\int \exp\left(\frac{-\|x-y\|^2}{2\sigma^2} \right) \nabla_y^2 p(y)  \d y}{\int  p(y)\exp\left(\frac{-\|x-y\|^2}{2\sigma^2} \right) \d y }  \\
    & = \EE_{\tilde{P}_\sigma(y|x)} \frac{\nabla_y^2 p(y)}{p(y)},
\end{align*}
It follows from Lemma \ref{computing-score} that
\begin{align*}
\nabla \log p_{\sigma}(x) =  \EE_{\tilde{P}_{\sigma}(y|x)} \nabla_y \log p(y).
\end{align*}
Combining the two terms, we arrive at
\begin{align*}
    \nabla^2 \log p_\sigma(x) &= \EE_{\tilde{P}_\sigma(y|x)} \frac{\nabla_y^2 p(y)}{p(y)} - \EE_{\tilde{P}_{\sigma}(y|x)} \nabla_y \log p(y)\left(\EE_{\tilde{P}_{\sigma}(y|x)} \nabla_y \log p(y) \right)^\top \\
     & = \EE_{\tilde{P}_\sigma(y|x)} \nabla_y^2 \log p(y) + \mathrm{Var}_{\tilde{P}_\sigma(y|x)}(\nabla \log p(y)).
\end{align*}

To prove the second expression, we note that
\begin{align*}
      \frac{\nabla^2 p_\sigma(x)}{p_\sigma(x)} & = \frac{\int  \nabla_x^2 \exp\left(-\frac{\|x-y\|^2}{2\sigma^2} \right)\d P(y)}{\int \exp\left(-\frac{\|x-y\|^2}{2\sigma^2} \right) \d P(y)} \\
      & =  \frac{\int \exp\left(-\frac{\|x-y\|^2}{2\sigma^2}\right) \left(\frac{(x-y)(x-y)^\top}{\sigma^4} - \frac{I_d}{\sigma^2} \right) \d P(y) }{\int \exp\left(-\frac{\|x-y\|^2}{2\sigma^2} \right) \d P(y) } \\
      & = \EE_{\tilde{P}_\sigma (y|x)} \left(\frac{(x-y)(x-y)^\top}{\sigma^4} - \frac{I_d}{\sigma^2} \right).
\end{align*}
It follows from Lemma \ref{computing-score} that
\begin{align*}
\nabla \log p_{\sigma}(x) &=  \EE_{\tilde{P}_{\sigma}(y|x)} \Bigl(\frac{ y-x}{\sigma^2}\Bigr).
\end{align*}
Combining the two terms, we have
\begin{align*}
      \nabla^2 \log p_\sigma(x) &= \EE_{\tilde{P}_\sigma (y|x)} \left(\frac{(x-y)(x-y)^\top}{\sigma^4} - \frac{I_d}{\sigma^2} \right) - \EE_{\tilde{P}_\sigma (y|x)} \frac{y-x}{\sigma^2}  \left(\EE_{\tilde{P}_\sigma (y|x)} \frac{y-x}{\sigma^2} \right)^\top  \\
      & = \mathrm{Var}_{\tilde{P}_\sigma (y|x)} \left(\frac{y}{\sigma^2} \right) - \frac{I_d}{\sigma^2}.  \qedhere
\end{align*}
\section{Technical details for Proposition \ref{Girsanov}} \label{C}
\paragraph*{Proof of Lemma \ref{diff}.}
By the Fokker-Plank equation, the evolution of $p_t$ and $q_t$ is given by
\begin{align}
    \pdt{p_t}(x) &= \nabla \cdot \left[ -F_1(x,t)p_t(x) +  \frac{g(t)^2}{2} \nabla p_t(x)\right]\\
    \pdt{q_t}(x)& = \nabla \cdot \left[-F_2(x,t)q_t(x) + \frac{g(t)^2}{2}\nabla q_t(x) \right]
\end{align}
Then we have
$$\dt \KL(p_t\| q_t)= \int \log \frac{p_t}{q_t}  \pdt{p_t} \d x - \int \frac{p_t}{q_t} \pdt{q_t} \d x.$$
For the first term,
\begin{align*}
    \int \log \frac{p_t}{q_t}  \pdt{p_t} \d x &= \int  \nabla \cdot \left[-p_t(x)F_1(x,t)+ \frac{g(t)^2}{2}\nabla p_t(x) \right] \log \frac{p_t(x)}{q_t(x)} \d x   \\
    &= \int \left\langle \nabla \log  \frac{p_t(x)}{q_t(x)}, p_t(x)F_1(x,t) - \frac{g(t)^2}{2}\nabla p_t(x)  \right\rangle \d x \\
    &= \int p_t(x)\left \langle F_1(x,t), \nabla \log \frac{p_t(x)}{q_t(x)} \right \rangle \d x - \int \frac{g(t)^2}{2} \left\langle \nabla \log \frac{p_t(x)}{q_t(x)}, \nabla p_t(x) \right\rangle \d x
\end{align*}
For the second term,
\begin{align*}
    \int \frac{p_t}{q_t} \pdt{q_t} \d x &= \int \frac{p_t}{q_t}  \nabla \cdot \left[ -F_2(x,t)q_t(x) +  \frac{g(t)^2}{2} \nabla q_t(x)\right] \d x \\
    &= \int \left \langle  \nabla \frac{p_t}{q_t}, F_2(x,t)q_t(x) - \frac{g(t)^2}{2}\nabla q_t(x) \right \rangle \d x \\
    & = \int q_t(x)\left \langle  \nabla \frac{p_t}{q_t}, F_2(x,t)  \right \rangle \d x  - \frac{g(t)^2}{2} \left  \langle  \nabla \frac{p_t}{q_t}, \nabla q_t(x) \right \rangle \d x.
\end{align*}
Notice that 
\begin{align*}
 &   \int  \left \langle  \nabla \frac{p_t}{q_t}, \nabla q_t(x) \right \rangle \d x - \int \left\langle \nabla \log \frac{p_t}{q_t}, \nabla p_t(x) \right\rangle \d x \\ &=\int \left \langle  \frac{q_t\nabla p_t  - p_t\nabla q_t}{q_t}, \nabla \log q_t \right \rangle \d x - \int p_t \left\langle \nabla \log \frac{p_t}{q_t}, \nabla \log p_t(x) \right\rangle \d x  \\
 &= \int p_t \left \langle   \nabla \log \frac{p_t}{q_t}, \nabla \log q_t \right \rangle \d x - \int p_t \left\langle \nabla \log \frac{p_t}{q_t}, \nabla \log p_t(x) \right\rangle \d x   \\
 & = -J(p_t\| q_t),
\end{align*}
and 
\begin{align*}
 &   \int p_t(x)\left \langle F_1(x,t), \nabla \log \frac{p_t}{q_t} \right \rangle \d x -  \int q_t(x)\left \langle  \nabla \frac{p_t}{q_t}, F_2(x,t)  \right \rangle \d x \\
 & = \int p_t(x)\left \langle F_1(x,t), \nabla \log \frac{p_t}{q_t} \right \rangle \d x -  \int p_t(x)\left \langle  \nabla \log \frac  {p_t}{p_t}, F(x,t)  \right \rangle \d x \\
 & = \int p_t(x) \left \langle \nabla 
 \log \frac{q_t}{p_t}, F_1(x,t) - F_2(x,t) \right \rangle  \\
 & = \EE \left[ \left \langle F_1(X_t,t) - F_2(X_t,t) , \nabla \log \frac{q_t(X_t)}{p_t(X_t)} \right \rangle \right].
\end{align*}
We complete the proof.
\qed
\paragraph*{Proof of Lemma \ref{tech}(\ref{1.}).}
 The uniqueness and regularity for the discrete interpolation \eqref{discrete-local} are obvious since the drift term is linear. Now we check the uniqueness and regularity for \eqref{backward-local}. In fact, the uniqueness of \eqref{backward-local} is guaranteed by the local Lipschitz property of $\nabla \log \tilde{p}_t$ (see, e.g., \cite[Chapter 5, Theorem 2.5]{Karatzas1987BrownianMA}) since $\tilde{p}_t \in C^2(\RR^d)$ is supported on $\RR^d$. For the regularity, we note that
 \begin{align*}
     \tilde{p}_{t|t_k'}(x|a) = p_{T-t|T-t_k'}(x|a) = \frac{p_{T-t}(x)p_{T-t_k'|T-t}(a|x)}{p_{T-t_k'}(a)},
 \end{align*}  
where $p_{t_1|t_2}$ is the conditional
density of $x_{t_1}$ given $x_{t_2} $. Since $p_{T-t_k'|T-t}(a|x)$ has distribution $\mathcal N(\alpha_{T-t,T-t_k'}x,(1-\alpha_{T-t,T-t_k'}^2)I_d)$, it is smooth for any $a\in \RR^d$, and we have $ \tilde{p}_{t|t_k'}(x|a) \in C^2(\RR^d)$.
\qed

\medskip 

In order to prove Lemma \ref{tech}(\ref{2.}), we need the following.
\begin{lemma} \label{score-sub}
Let $P$ be a probability measure on $\RR^d$. Consider the Gaussian perturbation of $P$ that admits a density
          $p_{\sigma}(x) \propto \int_{\RR^d} \exp\left(-\frac{\|x- y\|^2}{2\sigma^2} \right) \d P(y) $. Let $\tilde{P}_{\sigma}(y|x) $ be the conditional probability measure satisfying $\d \tilde{P}_{\mu,\sigma}(y|x) \propto \exp\left(-\frac{\|x-y\|^2}{2\sigma^2} \right) \d P(y) $. For $x \sim p_\sigma$ we have
    $$       \|\nabla \log p_{\sigma}(x)\|_{\psi_2} \lesssim \sqrt{\frac{d}{\sigma^2}}. $$
\end{lemma}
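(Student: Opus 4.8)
The plan is to exploit the interpretation of the score as a (rescaled) posterior mean of a Gaussian, and then reduce the desired sub-Gaussian bound to the elementary fact that conditional expectation contracts exponential moments.

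First I would use the second expression in Lemma~\ref{computing-score} (with $\mu=1$) to write $\nabla \log p_\sigma(x) = \EE_{\tilde{P}_\sigma(y|x)}\bigl(\tfrac{y-x}{\sigma^2}\bigr)$. Realizing the law of $x\sim p_\sigma$ as $x = y + \sigma z$ with $y\sim P$ and $z\sim\cN(0,I_d)$ independent, the conditional measure $\tilde{P}_\sigma(\cdot\,|\,x)$ is precisely the conditional law of $y$ given $x$; since $\sigma z = x-y$, this yields the identity
\begin{align*}
\nabla \log p_\sigma(x) \;=\; \frac{\EE[y\mid x] - x}{\sigma^2} \;=\; -\frac{1}{\sigma}\,\EE[z\mid x].
\end{align*}
Hence $\|\nabla \log p_\sigma(x)\| = \tfrac1\sigma\|\EE[z\mid x]\|$, and because the $\psi_2$-norm in this paper scales as $\|cX\|_{\psi_2}=c^2\|X\|_{\psi_2}$, it suffices to prove $\bigl\|\,\|\EE[z\mid x]\|\,\bigr\|_{\psi_2}\lesssim d$.

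To do this, I would note that $w\mapsto \exp(\|w\|^2/t)$ is convex for every $t>0$, so conditional Jensen gives $\exp\bigl(\|\EE[z\mid x]\|^2/t\bigr)\le \EE\bigl[\exp(\|z\|^2/t)\mid x\bigr]$; taking expectations over $x$ and using the tower property,
\begin{align*}
\EE_x \exp\!\left(\frac{\|\EE[z\mid x]\|^2}{t}\right) \;\le\; \EE_{z\sim\cN(0,I_d)}\exp\!\left(\frac{\|z\|^2}{t}\right) \;=\; \left(1-\frac{2}{t}\right)^{-d/2} \;\le\; 2
\end{align*}
once $t\ge Cd$ for a suitable absolute constant $C$ (one checks $C=4$ works for all $d\ge 1$, since the left side is decreasing in $d$ and equals $\sqrt2$ at $d=1$). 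By the definition of the $\psi_2$-norm this says $\bigl\|\,\|\EE[z\mid x]\|\,\bigr\|_{\psi_2}\le Cd$, and therefore $\|\nabla\log p_\sigma(x)\|_{\psi_2}\le Cd/\sigma^2$. So the scaling one obtains is $d/\sigma^2$ (consistent with Lemmas~\ref{hessian-moment} and~\ref{score-bound}); the exponent in the displayed statement should be read accordingly.

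There is essentially no serious obstacle here; the only points requiring a little care are the normalization convention for the $\psi_2$-norm used in the paper (it is the square of the classical one, which is what determines the final exponent) and verifying that the Gaussian exponential-moment computation is finite for the chosen $t$. An alternative route, avoiding the posterior-mean identity, is to argue direction-by-direction: for every unit vector $u$, conditional Jensen gives $\EE_x\exp(\lambda\langle u,\nabla\log p_\sigma(x)\rangle)\le \EE\exp(-\tfrac{\lambda}{\sigma}\langle u, z\rangle) = e^{\lambda^2/(2\sigma^2)}$, so each one-dimensional marginal has the right sub-Gaussian parameter, and a standard moment or $\varepsilon$-net argument upgrades this to the bound on $\|\nabla\log p_\sigma(x)\|$; but the direct argument above is cleaner and I would present that.
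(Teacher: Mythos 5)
Your proof is correct, and at its core it is the same argument as the paper's: both start from the posterior-mean representation of Lemma \ref{computing-score}, $\nabla \log p_\sigma(x) = \EE_{\tilde P_\sigma(y|x)}\bigl(\tfrac{y-x}{\sigma^2}\bigr)$, exploit that $(y-x)/\sigma$ is a standard Gaussian under the joint law, and reduce to it by (conditional) Jensen. The only real difference is the finishing step: the paper applies Jensen to $\|\cdot\|^p$ for every $p$ (the ``power-mean inequality'') and then invokes the moment-growth characterization of sub-Gaussian norms (``arbitrariness of $p$,'' as in Lemma \ref{hessian-moment}), whereas you apply Jensen directly to $w\mapsto\exp(\|w\|^2/t)$ and compute the chi-square moment generating function exactly, which verifies the Orlicz definition directly with an explicit constant ($t=4d$ works) and avoids the moment-to-Orlicz equivalence; both routes are fine, and yours is marginally more self-contained. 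Your remark about normalization is also apt and is not a gap: the bound you obtain, $\lesssim d/\sigma^2$ under the paper's literal definition $\|X\|_{\psi_2}=\inf\{t:\EE\exp(|X|^2/t)\le 2\}$, is the square of the classical sub-Gaussian norm, so it is equivalent to the displayed $\sqrt{d/\sigma^2}$ read in the classical convention, and it is exactly the quantity needed where the lemma is used (the bound $\|\,\|\nabla\log\tilde p_s\|^2\,\|_{\psi_1}\lesssim d/\sigma^2$ in the Novikov-condition check for Lemma \ref{tech}).
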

\begin{proof}
By Lemma \ref{computing-score}, we write the score function of $p_\sigma$ as
\begin{align*}
      \nabla \log p_\sigma(x) = \EE_{\tilde{P}_\sigma(y|x)}\left(\frac{y-x}{\sigma^2}\right),
\end{align*}
where $\tilde{P}_\sigma(y|x)$ be the conditional probability measure satisfying $\d \tilde{P}_{\mu,\sigma}(y|x) \propto \exp\left(-\frac{\|x-\mu y\|^2}{2\sigma^2} \right) \d P(y) $. For any positive integer $p$, using the fact that $\frac{y-x}{\sigma}$
is distributed as $\cN(0, I_d)$ and the power-mean inequality,
\begin{align*}
        \EE_{p_\sigma}\|\nabla \log p_\sigma(x) \|^p \leq \frac{1}{\sigma^p} \EE \left\|\frac{y-x}{\sigma} \right\|^p \lesssim \sqrt{\frac{pd}{\sigma^2}}.
\end{align*}
We complete the proof.
\end{proof}

\paragraph*{Proof of Lemma \ref{tech}(\ref{2.}).}
 Let $\PP_{[t_k',t]} $ and $\QQ_{[t_k',t]} $ denote be the path measure of $(\tilde{x}_s)_{t_k' \leq s \leq t}$ and $(\hat{y}_s)_{t_k' \leq s\leq t}$. For any $a \in \RR^d$ we have
 \begin{align*}
     \KL(\tilde{p}_{t|t_k'}(\cdot|a) \| \hat{q}_{t|t_k'}(\cdot|a) ) \leq \KL(\PP_{[t_k',t]}(\cdot|\tilde{x}_{t_k'} = a) \|\QQ_{[t_k',t]}(\cdot|\hat{y}_{t_k'} = a) ).
 \end{align*}
 Thus, it suffices to show
 \begin{align} \label{KL-limit}
 \lim_{t\to {t_k'}+} \KL(\PP_{[t_k',t]}(\cdot|\tilde{x}_{t_k'} = a) \|\QQ_{[t_k',t]}(\cdot|\hat{y}_{t_k'} = a) ) = 0
 \end{align}
 for a.e. $a\in \RR^d$. For this, we implement Girsanov change of measure on $\PP_{[t_k',t]}(\cdot|\tilde{x}_{t_k'} = a) $ and $\QQ_{[t_k',t]}(\cdot|\hat{y}_{t_k'} = a)$. If Novikov's condition holds for a.e. $a\in \RR^d$, Girsanov's theorem yields
 \begin{align*}
    \KL(\PP_{[t_k',t]}(\cdot|\tilde{x}_{t_k'} = a) \|\QQ_{[t_k',t]}(\cdot|\hat{y}_{t_k'} = a) ) = \EE\left[\int_{t_k'}^t  \|\nabla \log \tilde{p}(\tilde{x}_s) - s(x, t_{N-k}) \|^2   |\tilde{x}_{t_k} = a\right]
 \end{align*}
 for the exponential integrator scheme, or 
 \begin{align*}
   &   \KL(\PP_{[t_k',t]}(\cdot|\tilde{x}_{t_k'} = a) \|\QQ_{[t_k',t]}(\cdot|\hat{y}_{t_k'} = a) ) \\ &= \EE\left[\int_{t_k'}^t  \left\|\nabla \log \tilde{p}(\tilde{x}_s) - s(a, t_{N-k}) + \frac{1}{2}(\tilde{x}_s - a) \right\|^2   |\tilde{x}_{t_k'} = a\right]
 \end{align*}
 for the Euler-Maruyama scheme. Hence, \eqref{KL-limit} is obtained by the Monotone Convergence Theorem and we conclude the proof. Now we check the Novikov condition, which is given by
 \begin{align*}
     \EE\left[\exp\left(\frac{1}{2}\int_{t_k'}^t  \|\nabla \log \tilde{p}(\tilde{x}_s) - s(a, t_{N-k}) \|^2 \d s\right) \Big| \tilde{x}_{t_k'} = a\right] < 
     \infty\qquad \text{(exponential integrator)},
\end{align*}
 or 
  \begin{align*}
     \EE\left[\exp\left(\frac{1}{2}\int_{t_k'}^t  \left\|\nabla \log \tilde{p}(\tilde{x}_s) - s(a, t_{N-k}) + \frac{1}{2}(\tilde{x}_s - a) \right\|^2 \d s \right) \Big| \tilde{x}_{t_k'} = a\right]< \infty\, \text{(Euler-Maruyama)}.
\end{align*}
Hence, it suffices to prove that the following hold for a.e. $a\in \RR^d$ when $t-t_k'$ is sufficient small (recall that we only care about the limit $t \to t_k' +$):
\begin{align} \label{1}
    \EE\left[\exp\left(\int_{t_k'}^t \|\nabla \log \tilde{p}(\tilde{x}_s) \|^2 \d s  \right)  | \tilde{x}_{t_k'} = a  \right] &< \infty\\
 \label{2}
    \EE\left[\exp\left(\int_{t_k'}^t \|\tilde{x}_s -a \|^2 \d s \right) | \tilde{x}_{t_k'} = a  \right] &< \infty
\end{align}
In fact, by Lemma \ref{score-sub} we have $\|\nabla \log {p}_t({x}_t) \|_{\psi_2} \lesssim \sqrt{\frac{d}{\sigma_t^2}} $. Thus
 \begin{align*}
     \left\|\int_{t_k'}^t \|\nabla \log \tilde{p}_s(\tilde{x}_s) \|^2 \d s\right\|_{\psi_1} \leq  \int_{t_k'}^t \|\nabla \log \tilde{p}_s(\tilde{x}_s) \|_{\psi_2}^2 \d s \lesssim \frac{d}{\sigma_{T-t}^2}(t-t_k').
 \end{align*}
 When $t-t_k'$ is sufficient small, we have 
 \begin{align*}
       \left\|\int_{t_k'}^t \|\nabla \log \tilde{p}_s(\tilde{x}_s) \|^2 \right\|_{\psi_1} \leq \frac{1}{2},
 \end{align*}
 and thus
 \begin{align*}
  &   \EE_{\tilde{p}_{t_k'}(a)} \left[ \EE\left[\exp\left(\int_{t_k'}^t\|\nabla \log \tilde{p}(\tilde{x}_s) \|^2 \d s   \right) | \tilde{x}_{t_k'} = a  \right]\right]  \\
     & =\EE\left[\exp\left(\int_{t_k'}^t \|\nabla \log \tilde{p}(\tilde{x}_s) \|^2 \d s   \right)  \right] < \infty.
 \end{align*}
 Therefore, \eqref{1} holds for a.e. $a \in \RR^d$. To verify \eqref{2}, we split it as
 \begin{align}
     \int_{t_k'}^t \|\tilde{x}_s -a \|^2 \d s  \leq 2\int_{t_k'}^t\|\tilde{x}_s - \alpha_{T-s,T-t_k'}^{-1}a \|^2 +2(\alpha_{T-s,T-t_k'}^{-1} - 1)^2 (t-t_k')\| a \|^2. \label{3}
 \end{align}
The second term in the right hand side of \eqref{3} is a constant so we only need to consider the first term. Note that
\begin{align*}
    \left\|2\int_{t_k'}^t  \|\tilde{x}_s - \alpha_{T-s,T-t_k'}^{-1}\tilde{x}_{t_k'} \|^2 \d s\right\|_{\psi_1} & \leq 2\int_{t_k'}^t  \|\tilde{x}_s - \alpha_{T-s,T-t_k'}^{-1}\tilde{x}_{t_k'} \|_{\psi_2}^2 \d s\\
  & \lesssim 2(e^{-t+t_k'} - 1)(t-t_k').
\end{align*}
Thus when $t-t_k'$ is sufficient small we have
\begin{align*}
    \left\|2\int_{t_k'}^t \|\tilde{x}_s - \alpha_{T-s,T-t_k'}^{-1}\tilde{x}_{t_k'} \|^2\d s  \right\|_{\psi_1} \leq \frac{1}{2}
\end{align*}
and thus
\begin{align*}
& \EE_{\tilde{p}_{t_k'}(a)}\left[ \EE \left[ \exp\left(2\int_{t_k'}^t  \|\tilde{x}_s - \alpha_{T-s,T-t_k'}^{-1}a \|^2 \d s \right)| \tilde{x}_{t_k'} = a \right]  \right] \\ & =   \EE \exp\left(2\int_{t_k'}^t  \|\tilde{x}_s - \alpha_{T-s,T-t_k'}^{-1}a \|^2 \d s \right) <  \infty.
\end{align*}
We complete the proof of \eqref{2}.  \qed
\end{proof}


\end{document}